\newcommand{\mc}[1]{\mathcal{#1}}
\newcommand{\te}[1]{\mathrm{#1}}
\newcommand{\field}[1]{{\mathbb{#1}}}
\newcommand{\be}{\begin{equation}}
\newcommand{\ee}{\end{equation}}
\newcommand{\bea}{\begin{eqnarray}}
\newcommand{\eea}{\end{eqnarray}}
\newcommand{\ba}{\begin{array}}
\newcommand{\ea}{\end{array}}
\newcommand{\beas}{\begin{eqnarray*}}
\newcommand{\eeas}{\end{eqnarray*}}
\newcommand{\leftm}{\left[\begin{array}}
\newcommand{\rightm}{\end{array}\right]}
\newtheorem{thm}{\bf Theorem}[section]
\newtheorem{prop}{\it Proposition}[section]
\newtheorem{lem}[thm]{\bf Lemma}
\newtheorem{definition}{\it Definition}[section]
\DeclareMathOperator*{\argmin}{arg\,min}
\title{Path to Stochastic Stability: Comparative Analysis of Stochastic Learning Dynamics in Games}
\author{Hassan Jaleel and Jeff S. Shamma\thanks{H. Jaleel and J.S. Shamma are with the Robotics, Intelligent Systems \& Control (RISC) Lab, Computer, Electrical and Mathematical Sciences and Engineering Division (CEMSE) at King Abdullah University of Science and Technology (KAUST),  Thuwal 23955--6900, Saudi Arabia. Email: hassan.jaleel@kaust.edu.sa, jeff.shamma@kaust.edu.sa. Research supported by funding from KAUST.}}
\begin{document}
\maketitle

\begin{abstract}                % Abstract of not more than 250 words.
Stochastic stability is a popular solution concept for stochastic learning dynamics in games. However, a critical limitation of this solution concept is its inability to distinguish between different learning rules that lead to the same steady-state behavior. We address this limitation for the first time and develop a framework for the comparative analysis of stochastic learning dynamics with different update rules but same steady-state behavior. We present the framework in the context of two learning dynamics: Log-Linear Learning (LLL) and Metropolis Learning (ML). Although both of these dynamics have the same stochastically stable states, LLL and ML correspond to different behavioral models for decision making. Moreover, we demonstrate through an example setup of sensor coverage game that for each of these dynamics, the paths to stochastically stable states exhibit distinctive behaviors. Therefore, we propose multiple criteria to analyze and quantify the differences in the short and medium run behavior of stochastic learning dynamics. We derive and compare upper bounds on the expected hitting time to the set of Nash equilibria for both LLL and ML. For the medium to long-run behavior, we identify a set of tools from the theory of perturbed Markov chains that result in a hierarchical decomposition of the state space into collections of states called cycles. We compare LLL and ML based on the proposed criteria and develop invaluable insights into the comparative behavior of the two dynamics.

\end{abstract}

%===============================================================================
\section{Introduction}
Stochastic learning dynamics, like log-linear learning, address the issue of equilibrium selection for a class of games that includes potential games (see, e.g., \cite{Foster1990}, \cite{Young93}, \cite{Kandori1993} and \cite{Blume93}). Because of the equilibrium selection property, these learning dynamics have received significant attention, particularly in the context of opinion dynamics in coordination games (see, e.g., \cite{Young93} and \cite{ Montanari09}) and game theoretic approaches to the distributed control of multiagent systems  \cite{Marden12book}. 

A well-known problem of stochastic learning dynamics is the slow mixing of their induced Markov chain \cite{Young93}, \cite{Auletta12}, \cite{Auletta2016}. The mixing time of a Markov chain is the time required by the chain to converge to its stationary behavior.  This mixing time is crucial because the definition of a stochastically stable state depends on the stationary distribution of the Markov chain induced by a learning dynamics. The slow mixing time implies that the behavior of these dynamics in the short and medium run are equally important particularly for engineered systems with a limited lifetime. However, stochastic stability only deals with the steady-state behavior and provides no information about the transient behavior of these dynamics.

The speed of convergence of stochastic learning dynamics is an active area of research that is receiving significant research attention  \cite{Kreindler13}, \cite{Kreindler2014}, \cite{Ellison2016fast}, \cite{Shah10}, \cite{Arieli2011fast}, and \cite{Arieli16}. However, there is another aspect related to the slow convergence of these dynamics that has received relatively little research attention. Stochastic stability only explains the steady-state behavior of a system under a learning rule. We establish that there are learning dynamics with considerably different update rules that lead to the same steady-state behavior. Since these learning dynamics have the same stochastically stable states, stochastic stability cannot distinguish between these dynamics. The different update rules may result in significantly different behaviors over short and medium run that may be desirable or undesirable but remain entirely unnoticed. 

We first establish the implications of having different learning rules with the same steady state. Through an example setup, we demonstrate the differences in short and medium run behaviors for two particular learning dynamics with different update rules that lead to the same stochastically stable states. The example setup is that of sensor coverage game in which we formulate a sensor coverage problem in the framework of a potential game. An important conclusion that we draw from this comparison is that for stochastic learning dynamics, characterization of stochastically stable states is not sufficient. It is also essential to analyze the paths that lead to these stochastically stable states from any given initial condition. Analysis of these paths is critical because there are specific properties of these paths that play a crucial role not only in the short and medium run but also in the long run steady state behavior of the system.

The transient behavior of stochastic dynamics was studied in the context of learning in games in \cite{ellison2000basins} and \cite{levine2016dynamics}. However, the issues related to various learning dynamics leading to the same steady state behavior were not highlighted in these works. Therefore, after motivating the problem, we propose a novel framework for performing a comparative analysis of different stochastic learning dynamics with the same steady state. The proposed framework is based on the theory of Markov chains with rare transitions \cite{Freidlin84}, \cite{Olivieri95}, \cite{Olivieri1996}, \cite{catoni1996}, \cite{catoni1997exit}, \cite{Bovier16}, and \cite{freidlin2017metastable}. 

In the proposed framework, we present multiple criteria for comparing the short, medium, and long-run behaviors of a system under different learning dynamics. We refer to the analysis related to the short-run behavior as first order analysis. The first order analysis deals with the expected hitting time of the set of pure Nash equilibria, which is the expected time to reach a Nash equilibrium (NE) for the first time. Because of the known hardness results of computing a NE  \cite{Daskalakis2009}, \cite{Chen2009}, and the fact that all the Nash equilibria of a potential game are not necessarily potential maximizer, the first order analysis is typically not considered for stochastic learning dynamics. However, we are interested in the comparative analysis of learning dynamics for which we show that first-order analysis provides valuable insights into the behavior of a system.      

We refer to the analysis related to the medium and long-run behavior of stochastic learning dynamics as higher-order analysis. The higher-order analysis is based on the fact that the Markov chains induced by stochastic learning dynamics explore the space of joint action profiles hierarchically. This hierarchical exploration of the state space is well explained by an iterative decomposition of the state space into cycles of different orders as shown in \cite{Freidlin84}, \cite{Olivieri1996}, \cite{Olivieri95}, and \cite{Trouve96}. Thus, the evolution of Markov chains with rare transitions can be well approximated by transitions among cycles of proper order.  

Therefore, we develop our higher order analysis on the cycle decomposition of the state space as presented in \cite{jaleel2017}. We compare the behavior of different learning rules by comparing the exit height $H_e$, and the mixing height $H_m$ of the cycles generated by the cycle decomposition algorithm applied to these learning rule. The significance of these parameters is that once a Markov chain enters a cycle, the time to exit the cycle is of the order of $e^{\frac{H_e}{T}}$ and the time to visit each state within a cycle before exiting is of the order of $e^{\frac{H_m}{T}}$. Thus, we can efficiently characterize the behavior of each cycle from $H_e$ and $H_m$.  

Our comparative analysis framework applies to the class of learning dynamics in which the induced Markov chains satisfy certain regularity conditions. However, we present the details of the framework in the context of two particular learning dynamics, Log-Linear Learning ({LLL}) and Metropolis Learning ({ML}) over potential games. Log-Linear learning is a noisy best response dynamics in which the probability of a noisy action from a player is inversely related to the cost of deviating from the best response. This learning rule is well-known in game theory and the stationary distribution of the induced Markov chain is a Gibbs distribution, which depends on a potential function. The Gibbs distribution over the space of joint action profiles assigns the maximum probability to the action profiles that maximize the potential function. Moreover, it was shown in \cite{Mas2016behavioral} that LLL is a good behavioral model for decision making when the players have sufficient information to compute their utilities for all the actions in their action set given the actions of other players in the game.

On the other hand, Metropolis learning is a noisy better response dynamics for which the induced Markov chain is a Metropolis chain. It is well established in the statistical mechanics literature that the unique stationary distribution of Metropolis chain is the Gibbs distribution (see, e.g., \cite{Olivieri95} and \cite{catoni1996}). As a behavioral model for decision making, ML is related closely to the pairwise comparison dynamics presented in \cite{Sandholm2009pairwise}.  
%
%In ML, at each time decision time, a randomly selected player is allowed to update its action while all the other players repeat their previous actions. The randomly selected player selects an action randomly from its action set. The player then compares the utility of this randomly selected action with the utility of the action it was playing previously. However, instead of insisting on a strictly better response, the player switches to the randomly selected action with probability one as long as its utility is not less than the action it was playing previously. However, if the utility of the randomly selected action is less than the previous action, it plays a noisy action with a probability that is a function of the difference in cost between the two actions.  
%
Thus, ML is a behavioral model for decision making with low information demand. A player only needs to compare its current payoff with the payoff of a randomly selected action. It does not need to know the payoffs for all the actions as in LLL. The only assumption is that each player has the ability or the resources to compute the payoff for one randomly selected action. 

Hence, we have two learning dynamics, LLL and ML, which correspond to two behavioral models for decision making with very different information requirements. However, both of the learning rules lead to the same steady-state behavior. We compare these learning dynamics based on the proposed framework. The crux of our comparative analysis is that the availability of more information in the case of LLL as compared to ML does not guarantee better performance when the performance criterion is to reach the potential maximizer quickly.

%Our first order analysis reveals that under certain conditions LLL can lead to a NE faster as compared to ML from any initial condition. However, in the case of multiple Nash equilibria, this fast approach to a NE can cause problem, which is highlighted by our higher order analysis. The higher order analysis discloses that after entering a cycle, LLL can remain trapped in a cycle for a longer time as compared to ML. The intuition for this result is that ML experiments more as compared to LLL in decision making because of less available information. This experimentation pays off and the Markov chain induced by ML can reach a potential maximizer before LLL.  
A summary of our main contributions in this work is as follows\\
{\it Contributions}
\begin{itemize}
	\item For problem motivation, we present our setup of sensor coverage game in which we formulate the sensor coverage problem with random sensor deployment as a potential game.
	\item For the first order analysis, we derive and compare upper bounds on the expected hitting time to the set of NE for both LLL and ML. 
	\item We also obtain a sufficient condition to guarantee a smaller expected hitting time to the set of Nash equilibria under LLL than ML from any initial condition. 
	\item For higher order analysis, we identify cycle decomposition algorithm as a useful tool for the comparative analysis of stochastic learning dynamics. Moreover, we show through an example of
	a simple Markov chain that cycle decomposition algorithm is also suitable for describing system behavior at different levels of abstraction. 
	\item We compare the exit heights and mixing heights of cycles under LLL and ML. We show that if a subset of state space is a cycle under both LLL and ML, then the mixing and exit heights of that cycle will always be smaller for ML as compared to LLL.
\end{itemize}
\section{Background}\label{sec:Background}
%We present selected terminology and concepts from Markov chains and game theory that will be used in the following sections.
\subsection{Preliminaries}
We denote the cardinality of a set $S$ by $|S|$. For a vector $x \in \field{R}^n$, $x_i$ denotes its $i^{\te{th}}$ entry and $|x|$ is its Euclidean norm. The Hamming distance between any two vectors $x$ and $y$ in $\field{R}^n$ is 
\begin{align}
d_{H}(x,y) = |\{i ~|~ x_i \neq y_i \}|.
\end{align}
$\Delta(n)$ denotes the $n-$dimensional probability simplex, i.e., 
\[
\Delta(n) = \{\mu \in \field{R}^n ~|~ \textbf{1}^T \mu = 1, \mu_i \geq 0\}
\]
where $\textbf{1} = (1,1,\ldots,1)$ is a column vector in $\field{R}^n$ with all the entries equal to 1. 
\begin{comment}
The total variation distance $\Vert \mu - \nu \Vert_{TV}$ between two probability distributions $\mu$ and $\nu$ in $\Delta(n)$ on $S = \{1,2,\ldots,n\}$ is 
\begin{align}
\Vert \mu - \nu \Vert_{TV} &= \max_{A \subset S} | \mu(A) - \nu(A) |  \nonumber\\
&= \frac{1}{2}\sum_{i \in S}|\mu(i) - \nu(i)| 
\end{align} 
\end{comment}

\subsection{Markov Chains}\label{subsect:MarkovChains}
A discrete time Markov chain on a finite state space $S = \{1,2,\ldots,n\}$ is a random process that consists of a sequence of random variables $X = (X_0,X_1,\ldots)$ such that $X_t \in S$ for all $t\geq0$ and 
\begin{multline*}
P(X_{t+1} = x | X_0 = x_0,\ldots, X_t = x_t) = \\P(X_{t+1} =x |X_t = x_t)
\end{multline*}
where $x \in S$ and $x_k \in S$ for all $k \in \{0,1,\ldots,t\}$. Let $P$ be the transition matrix for Markov chain $X$ and $P(x,y)$ be the transition probability from state $x$ to $y$. 
%Let $\mu_{t-1} \in \field{R}^n$ be a distribution over the state space at time $t-1$. Then,
%\begin{equation*}
%\mu_{t}^T(x) = \sum_{y \in S} \mu_{t-1}^T(y)P(y,x) = (\mu_{t-1}^T P)(x).
%\end{equation*}
%In fact, starting from any initial condition $\mu_0$, 
%$$\mu_t = \mu_0 P^t,$$
A distribution $\pi \in \Delta(n)$ is a stationary distribution with respect to $P$ if 
\begin{equation*}
\pi^T = \pi^T P.
\end{equation*} 
If a Markov chain is ergodic and reversible, then it has a unique stationary distribution, i.e., 
\[
\lim_{t\rightarrow \infty} \mu_0^T P^t = \pi^T 
\]
for all $\mu_0 \in \Delta(n)$. 
 The following definitions are adapted from \cite{Young93}. 
\begin{definition}\label{def:RegularPertubation}
	\emph{	Let $P_0$ be a transition matrix for a Markov chain over state space $S$. Let $P_{\epsilon}$ be a family of perturbed Markov chains on $S$ for sufficiently small $\epsilon$ corresponding to $P_0$. We say that $P_{\epsilon}$ is a \emph{regular perturbation} of $P_0$ if }
	\it{	\begin{enumerate}
			\item $P_{\epsilon}$ is ergodic for sufficiently small $\epsilon$, 
			\item $\lim\limits_{\epsilon \rightarrow 0} P_{\epsilon}(x,y) = P_0(x,y)$, and
			\item $P_{\epsilon}(x,y) >0$ for some $\epsilon >0$ implies that there exists some function $R(x,y) \geq 0$ such that  
			\[
			0<\lim_{\epsilon \rightarrow 0}  \frac{P_{\epsilon}(x,y)}{\epsilon^{R(x,y)}} < \infty.
			\]
			
	\end{enumerate} }
	where $R(x,y)\geq 0$ is the cost of transition from $x$ to $y$ and is normally referred to as resistance.  
\end{definition}

The Markov process corresponding to $P_{\epsilon}$ is called a regularly perturbed Markov process. 
\begin{definition}
	\it{	Let $P_{\epsilon}$ be a regular perturbation of $P_0$ with stationary distribution $\pi_{\epsilon}$. A state $x \in S$ is a stochastically stable state if}
	\begin{equation*}
	\lim_{\epsilon \rightarrow 0} \pi_{\epsilon}(x) >0.
	\end{equation*}
\end{definition}
Thus, any state that is not stochastically stable will have a vanishingly small probability of occurrence in the steady state as $\epsilon \rightarrow 0$. 
 
Given any two states $x$ and $y$ in $S$, $y$ is \emph{reachable} from $x$ $(x \rightarrow y)$ if $P^t(x,y) >0$ for some $t\geq 0$. The neighborhood of $x$ is
\begin{equation*}
N_h(x) = \{y \in S ~|~ P(x,y)>0 \}
\end{equation*}
A path $\omega^S_{x,y}$ between any two states $x$ and $y$ in $S$ is a sequence of distinct states $(\omega_0,\omega_1,\ldots,\omega_k)$ such that $\omega_0 =x$, $\omega_k = y$, $\omega_i \in S$, $\omega_i \neq \omega_{i+1}$ and $P(\omega_i,\omega_{i+1}) > 0$ for all $i\in \{0,1,\ldots,k-1\}$. The length of the path is denoted as $|\omega^S_{x,y}|$. The superscript $S$ will be ignored in path notation when the state space is clear from the context. Given a set $A \subset S$ and a path $\omega^S_{x,y}$, we say that $\omega^S_{x,y} \in A$ if $z\in A$ for all $z\in \omega^S_{x,y}$. We define $\Omega^S(x,y)$ as the set of all paths between states $x$ and $y$ in state space $S$. States $x$ and $y$ communicate with each other ($ x \leftrightarrow y$) if the sets $\Omega^S(x,y)$ and $\Omega^S(y,x)$ are not empty.
\begin{definition}
\it{A set $A \subseteq S$ is connected if $ x \leftrightarrow y$ for every $x$, $y$ $\in$ $S$. }
\end{definition}

\begin{definition}
\it{The hitting time of $x \in S$ is the first time it is visited, i.e., }
\begin{equation*}
\tau_x = \min \{t \geq 0 : X_t = x\}
\end{equation*}
\end{definition}
The hitting time of a set $A \subseteq S$ is the first time one of the states of $A$ is visited, i.e., $$\tau_A = \min\limits_{x \in A} \tau_x.$$ 
%The boundary of a set $A \subset S$ is 
%\begin{equation*}
%\partial A = \{ y \in S-A :  P(x,y) >0 \te{ for some } x \in A \}
%\end{equation*}
%where $S-A$ are the elements of $S$ not in $A$. 
\begin{definition}
\it{The exit time of a Markov chain from $A \subset S$ is $\tau_{\partial A}$, where }
\[
\partial A = \{ y \in A^c:~P(x,y) > 0 \text{ for some } x \in A\}
\]
\end{definition}
where $A^c = S \backslash A$. We will refer to $\partial A$ as the boundary of $A$. In the above definition it is assumed that $X_0 \in A$. 
\begin{comment}
\begin{figure*}[t!]
\centering
\subfigure[Optimal Sensor Configuration ]
	{ \includegraphics[trim = 80mm 0mm  30mm  00mm, clip, scale=0.2]{SensorNw}
		\label{subfig:SensorNetwork}
	}	
	\subfigure[Sensor Network Performance under LLL]
	{\includegraphics[trim = 0mm 0mm  0mm  0mm, clip, scale=0.25]{PerfLLL}
\label{subfig:PerfLLL}}
\subfigure[Sensor Network Performance under ML]
{\includegraphics[trim = 0mm 0mm  0mm  0mm, clip, scale=0.25]{PerfML}
	\label{subfig:PerfML}}
\caption{Depiction of sensor network and the performance of Metropolis learning for sensor coverage game.}
\label{fig:Simulation}
\end{figure*}
\end{comment}
%
\subsection{Game Theory}\label{subsect:GameTheory}
Let $N_p = \{1,2,\ldots,n\}$ be a set of $n$ strategic players in which each player $i$ has a finite set of strategies ${A}_i = \{1,2, \ldots, m_i\}$. The utility of each player is represented by a utility function ${U}_i: \mc{A} \rightarrow \field{R}$ where $\mc{A} = {A}_1 \times {A}_2 \ldots \times {A}_n$ is the set of joint action profiles.  The combination of the action of the $i^{\te{th}}$ player and the actions of everyone else is represented by $(a_i,\te{a}_{-i})$. The joint action profiles of all the players except $i$ are represented by the set 
\[
\mc{A}_{-i} = A_1\times A_2\times\ldots\times A_{i-1}\times A_{i+1}\times \ldots A_n
\]
Player $i$ prefers action profile $\te{a} = (\alpha,\te{a}_{-i})$ over $\te{a}'=(\alpha',\te{a}_{-i})$, where $\alpha$ and $\alpha' \in A_i$, if and only if $U_i(\te{a}) > U_i(\te{a}')$. If $U_i(\te{a}) = U_i(\te{a}')$, then it is indifferent to both the actions. An action profile $\te{a}^* \in \mc{A}$ is a Nash Equilibrium (NE) if 
\[
 U_i(\alpha,\te{a}^*_{-i}) \leq U_i(\alpha^*,\te{a}^*_{-i})
\]
for all $i \in N_p$ and $\alpha \in A_i$. The best response set of player $i$ given an action profile $\te{a}_{-i} \in \mc{A}_{-i}$ is 
\[
B_i(\te{a}_{-i}) =\{ \alpha^* \in A_i : U_i(\alpha^*,\te{a}_{-i}) = \max_{\alpha \in A_i} U_i(\alpha,\te{a}_{-i}) \}
\]
The set of all possible best responses from an action profile $\te{a}$ is 
\[
B(\te{a}) = \bigcup\limits_{i=1}^n B_i(\te{a}_{-i})
\]
The neighborhood of an action profile $\te{a}$ is 
\[
N_h(\te{a})=\{\te{a}' \in \mc{A} ~|~ d_H(\te{a},\te{a}') = 1 \}.
\]
The agent specific neighborhood set of action profile $\te{a}$ is 
\[
N_h(\te{a},i) =\{\te{a}' \in \mc{A}~ |~ a'_i \in A_i,~ \te{a}'_{-i} =\te{a}_{-i} \}
\]
\emph{Potential Game:} A game is a potential game if there exists a real valued function $\phi: \mc{A} \rightarrow \field{R}$ such that 
\[
U_i (\alpha,a_{-i}) - U_i (\bar{\alpha},a_{-i}) = \phi (\alpha,a_{-i}) - \phi (\bar{\alpha},a_{-i})
\]
for all $i \in N_p$ and for all $\alpha$, $\bar{\alpha} \in A_i$. The function $\phi$ is called a potential function.
%
%\subsection{Stochastic Learning Dynamics}
%
\section{Stochastic Learning Dynamics}
Stochastic learning dynamics is a class of learning dynamics in games in which the players typically play best/better reply to the actions of other players. However, the players sporadically play noisy actions for exploration because of which these dynamics have equilibrium selection property for a class of games like potential games. %For potential games under LLL, only the potential maximizers are stochastically stable.
\subsection{Log-Linear Learning $(\te{LLL})$}
Let $\te{a}=(\alpha,\te{a}_{-i})$ be the joint action profile representing the current state of the game. Then, the steps involved in LLL are as follows. 
\begin{enumerate}
	\item Activate one of the $n$ players, say player $i$, uniformly at random.
	\item All other players repeat their previous actions.
	\item Player $i$ selects an action $\alpha' \in A_i$ with the following probability
	\begin{align}\label{eq:pLLL}
	p_{i}^{\te{LLL}}(\alpha',\te{a}_{-i}) &= \frac{e^{-\frac{1}{T} \left (U_i(\alpha^*,\te{a}_{-i}) - U_i(\alpha',\te{a}_{-i})\right)} }{Z_i(\te{a}_{-i})}\\
	Z_i(\te{a}_{-i}) &= \sum\limits_{\bar{\alpha} \in A_i}e^{-\frac{1}{T} (U_i(\alpha^*,\te{a}_{-i}) - U_i(\bar{\alpha},\te{a}_{-i}))}. \nonumber
	\end{align}
	Here $Z_i(\te{a}_{-i})$ is a normalizing constant, $\alpha^* \in B_i(\te{a}_{-i})$ is a best response of player $i$ to $\te{a}_{-i}$, and 
	\[
	\lim_{T\rightarrow 0} Z_i(\te{a}_{-i}) = |B_i(\te{a}_{-i})|.
	\]
\end{enumerate}

In (\ref{eq:pLLL}), $T$ is the noise parameter, normally referred to as temperature. For $T = \infty$, the players update their strategies uniformly at random. However, as $T \rightarrow 0$, the probability of the actions yielding higher utilities increases.  
%In fact, in LLL, the log of the probability of selecting $\alpha$ is inversely related through $1/T$ to the difference in utilities between the best response of player $i$ to $\te{a}_{-i}$ and $\alpha$.

Thus, LLL induces a Markov chain $X^{\te{LLL}}$ over the joint action profile $\mc{A}$ with transition matrix $P_{T}^{\te{LLL}}$. The transition probability between any two distinct action profiles $\te{a}$ and $\te{a}'$ is 
\[
P_{T}^{\te{LLL}}(\te{a},\te{a}') = \frac{1}{n}
\begin{cases}
0 \quad & d_H(\te{a},\te{a}') >1, \\
p_{i}^{\te{LLL}}(\alpha',\te{a}_{-i}) \quad & \te{a}' \in N_h(\te{a},i) %d_H(\te{a},\te{a}') =1, \\
%&\te{a}'_{-i} = \te{a}_{-i}\\
%\sum_{i = 1}^n p_{i,\te{LLL}}(\alpha,\te{a}_{-i}) & d_H(\te{a},\te{a}') = 0.
\end{cases}
\]

It was shown in \cite{Marden12} that $P_T^{\te{LLL}}$ for LLL is a regular perturbation of $P_0$ with $\epsilon = e^{-1/T}$. That is why we have used the notation $P_T$ instead of $P_{\epsilon}$. Here, $P_0$ is the transition matrix of the Markov chain induced by sequential best response dynamics. It was also proved in \cite{Marden12} that in an $n$-player potential game with a potential function $\phi$, if all the agents update their actions based on LLL, then the only stochastically stable states are the potential maximizers. The stationary distribution for $X^{\te{LLL}}$ is the Gibbs distribution
\begin{align}\label{eq:dist_LLL}
\pi^{\te{LLL}}(\te{a}) = \frac{1}{Z} e^{\frac{1}{T}\phi(\te{a})}
\end{align}
where $Z = \sum\limits_{\te{y} \in \mc{A}} e^{\frac{1}{T}\phi(\te{y})}$ is the normalizing constant. 
%For marginal contribution utility, if the global utility is selected as a potential function, then the stochastically stables states are also global utility maximizers.

\subsection{Metropolis Learning $(\te{ML})$}
We introduce another learning dynamics that has the same stationary distribution as in (\ref{eq:dist_LLL}). We refer to it as Metropolis Learning (ML) because the Markov chain induced by ML is a Metropolis chain, which is well studied in statistical mechanics and in simulated annealing \cite{catoni1996}. The steps involved in ML are as follows. 

\begin{enumerate}
	\item Activate one of the $n$ players, say player $i$, uniformly at random.
	\item All other players repeat their previous actions.
	\item Player $i$ selects an action $\alpha' \in A_i$ uniformly at random.
	\item Player $i$ switches its action form $\alpha$ to $\alpha'$ with probability $$\min\{1,e^{-\frac{1}{T}(U_i(\alpha,\te{a}_{-i})-U_i(\alpha',\te{a}_{-i}))}\}. $$
\end{enumerate} 
Thus, the probability of transition from $\te{a}=(\alpha,\te{a}_{-i})$ to $\te{a}' = (\alpha',\te{a}_{-i})$ is
\begin{equation}\label{eq:pi_metropolis}
p_{i}^{\te{ML}}(\te{a},\te{a}') =\frac{1}{|A_i|} e^{-\frac{1}{T}[U_i(\alpha,\te{a}_{-i})-U_i(\alpha',\te{a}_{-i})]^+}, 
\end{equation}
where $c^+ = c$ if $c>0$ and is equal to zero otherwise. 

In ML, player $i$ switches to a randomly selected action $\alpha' \in A_i$ with probability one as long as $U_i(\alpha',\te{a}_{-i}) \geq U_i(\alpha,\te{a}_{-i})$. Here, $\alpha$ is the action that player $i$ was playing in the previous time slot. Thus, unlike LLL in which a player needs to compute the utilities for all the actions in its action set given $\te{a}_{-i}$, the update in ML only requires a player to make a pairwise comparison between a randomly selected action and its previous action. Furthermore, the probability of a noisy action is a function of loss in payoff as compared to the previous action.   

Metropolis learning generates a Markov Chain $X^{\te{ML}}$ over joint action profile $\mc{A}$ with transition matrix $P_T^{\te{ML}}$. The transition probability between any two distinct action profiles $\te{a}$ and $\te{a}'$ is 
\[
P_T^{\te{ML}}(\te{a},\te{a}') = \frac{1}{n}
\begin{cases}
0 \quad & d_H(\te{a},\te{a}') >1 \\
p_{i,\te{ML}}(\te{a},\te{a}') \quad & \te{a}' \in N_h(\te{a},i)
%d_H(\te{a},\te{a}')  = 1\\
%&(\te{a}_{-i}' = \te{a}_{-i})
\end{cases}
\]

Next we show that $P_T^{\te{ML}}$ is a regularly perturbed Markov process. 
\begin{lem}
	\it{	Transition matrix $P_T^{\te{ML}}$ is a regular perturbation of $P_{br}$, where $P_{br}$ is the transition matrix for asynchronous better reply dynamics. Moreover, the resistance of any feasible transition from $\te{a} = (\alpha,\te{a}_{-i})$ to $\te{a}' = (\alpha',\te{a}_{-i})$ is }
	\begin{equation}
	R(\te{a},\te{a}') = [U_i(\alpha,\te{a}) - U_i(\alpha',\te{a}_{-i})]^+
	\end{equation}
\end{lem}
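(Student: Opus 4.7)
The plan is to verify the three conditions of Definition \ref{def:RegularPertubation} with the identification $\epsilon = e^{-1/T}$, reading off the resistance as a byproduct. The key algebraic rewriting is that, for any $\te{a}' \in N_h(\te{a},i)$,
\[
P_T^{\te{ML}}(\te{a},\te{a}') \;=\; \frac{1}{n|A_i|}\, \epsilon^{[U_i(\alpha,\te{a}_{-i}) - U_i(\alpha',\te{a}_{-i})]^+},
\]
while $P_T^{\te{ML}}(\te{a},\te{a}')=0$ whenever $d_H(\te{a},\te{a}')>1$. After this rewriting everything is bookkeeping.

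For condition~(1), I would argue ergodicity directly: for every $T>0$ the player-selection factor $1/n$, the action-sampling factor $1/|A_i|$, and the acceptance probability are all strictly positive, so any two action profiles communicate via a sequence of at most $n$ single-coordinate moves, giving irreducibility on the finite space $\mc{A}$. Aperiodicity follows because at any profile that is not a strict joint maximum, some candidate action yields a strictly positive holding probability (the $\min\{1,\cdot\}$ caps acceptance below $1$), so $P_T^{\te{ML}}(\te{a},\te{a})>0$; for profiles where every agent is already playing a best response, aperiodicity still holds via two-step returns through a deviating neighbor. Hence $P_T^{\te{ML}}$ is ergodic for every $T>0$, i.e., for all sufficiently small $\epsilon>0$.

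For condition~(2), I would pass to the limit $\epsilon\to0$ entry by entry. Off-diagonal: if $\alpha'$ is a weakly better reply, the exponent vanishes and $P_T^{\te{ML}}(\te{a},\te{a}')\to 1/(n|A_i|)$, which coincides with the probability under asynchronous better reply $P_{br}$ of selecting agent $i$, sampling $\alpha'$, and accepting; if $\alpha'$ is strictly worse, the exponent is strictly positive and $P_T^{\te{ML}}(\te{a},\te{a}')\to 0$, again matching $P_{br}$. On-diagonal, $P_T^{\te{ML}}(\te{a},\te{a}) = 1-\sum_{\te{a}'\neq \te{a}} P_T^{\te{ML}}(\te{a},\te{a}')$ then converges to $P_{br}(\te{a},\te{a})$ by continuity of finite sums. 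For condition~(3), I divide through by $\epsilon^{R(\te{a},\te{a}')}$ with $R(\te{a},\te{a}') := [U_i(\alpha,\te{a}_{-i})-U_i(\alpha',\te{a}_{-i})]^+\geq 0$, obtaining the finite positive limit $1/(n|A_i|)$, which simultaneously certifies condition~(3) and identifies the claimed resistance.

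The main obstacle is conceptual rather than technical: one must carefully state what $P_{br}$ is so that condition~(2) is meaningful. I would spell out $P_{br}$ as the asynchronous dynamics where an agent is activated uniformly, proposes an action $\alpha'\in A_i$ uniformly, and switches iff $U_i(\alpha',\te{a}_{-i})\geq U_i(\alpha,\te{a}_{-i})$; with this convention the matching of limits in condition~(2) is immediate and the rest of the argument is a one-line computation. No nontrivial estimates are required because the transition probabilities are already in closed exponential form.
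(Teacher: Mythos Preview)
Your argument is correct and follows the same route as the paper: identify $\epsilon=e^{-1/T}$, describe $P_{br}$ explicitly, and verify the three conditions of Definition~\ref{def:RegularPertubation}, with condition~(3) yielding the resistance $[U_i(\alpha,\te{a}_{-i})-U_i(\alpha',\te{a}_{-i})]^+$ via the limit $1/(n|A_i|)$. Your treatment is in fact more detailed than the paper's on ergodicity and on condition~(2); the only over-elaboration is the aperiodicity argument, since $P_T^{\te{ML}}(\te{a},\te{a})>0$ at \emph{every} profile simply because the active player can resample its current action $\alpha$ with probability $1/|A_i|$.
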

\begin{proof}
To prove that $P_T^{\te{ML}}$ is a regular perturbation, we first describe the unperturbed process which is asynchronous better reply dynamics. The unperturbed process has the following dynamics. 
	\begin{enumerate}
		\item A player, say $i$, is selected at random.
		\item All the other players repeat their previous actions. 
		\item Player $i$ selects an action $\alpha'$ uniformly at random. 
		\item Player $i$ switches its action from $\alpha$ to $\alpha'$ if 
		\[
		U_i(\alpha,\te{a}_{-i}) \leq U_i(\alpha',\te{a}_{-i}).
		\] 
		Otherwise, it repeats $\alpha$. Thus
		\[
		P_{br}(\te{a},\te{a}') =  \frac{1}{n|A_i|}
		\]
	\end{enumerate}
Similar to LLL, the noise parameter $\epsilon = e^{-\frac{1}{T}}$. The Metropolis chain $X^{\te{ML}}$ is ergodic for a given $\epsilon >0$ and it satisfies $\lim\limits_{\epsilon \rightarrow 0} P_T^{\te{ML}}(x,y) = P_{br}(x,y)$. For the final condition
	\begin{align*}
	\lim_{\epsilon \rightarrow 0} \frac{P_T^{\te{ML}}(\te{a},\te{a}')}{\epsilon^{[U_i(\alpha,\te{a}) - U_i(\alpha',\te{a}_{-i})]^+}} = \frac{1}{n|A_i|} \in (0,\infty),
	\end{align*}
	where $R(\te{a},\te{a}')\geq 0$ for any given pair $\te{a}$ and $\te{a}'$. Thus, $P_T^{\te{ML}}$ is a regular perturbation of $P_{br}$. 
\end{proof}

\begin{comment}
The unperturbed process corresponding to ML has inertia because at each decision time,
\[
P_{br}(\te{a},\te{a}) > 0,
\]
i.e., the probability that a player will repeat its previous action is positive. Moreover, this probability of repeating the previous action is inversely related to the cardinality of the action set. This inertia in switching an action will play a crucial role particularly in the first order analysis. 
\end{comment}
The important fact regarding ML in the context of this work is that the stationary distribution $\pi^{\te{ML}}$ is also the Gibbs distribution, i.e.,  
\begin{align}
\pi^{\te{ML}}(\te{a}) = \pi^{\te{LLL}}(\te{a}) = \frac{1}{Z} e^{\frac{1}{T}\phi(\te{a})}.
\end{align}
Thus, from the perspective of stochastic stability, both LLL and ML are precisely the same. To observe and understand the effects of different update rules on system behavior, we simulated a sensor coverage game with both LLL and ML. Next, we present the setup and the results of the simulation.

\section{Motivation: Sensor Coverage Problem}\label{sec:Motivation}
To study the difference in behaviors between LLL and ML, which is ignored under stochastic stability, we set up sensor coverage problem as a potential game. Through extensive simulations under various noise conditions, we exhibit the essential differences between the behavior of these learning dynamics in the short and medium runs. We want to mention here that this formulation of sensor coverage problem with random sensor deployment in a potential game theoretic framework is also a contribution and can be of independent interest in the context of local scheduling schemes for sensor coverage problem.  

%A sensor has two operating states, on and off, which are represented by 1 and 0 respectively. To conserve power, sensors are switched between on and off states such that only a subset of sensors is in the on state at each time. 
\begin{figure*}[t!]
	\centering
	\subfigure[Number of iterations to reach first NE]
	{ \includegraphics[trim = 0mm 0mm  0mm  00mm, clip, scale=0.1]{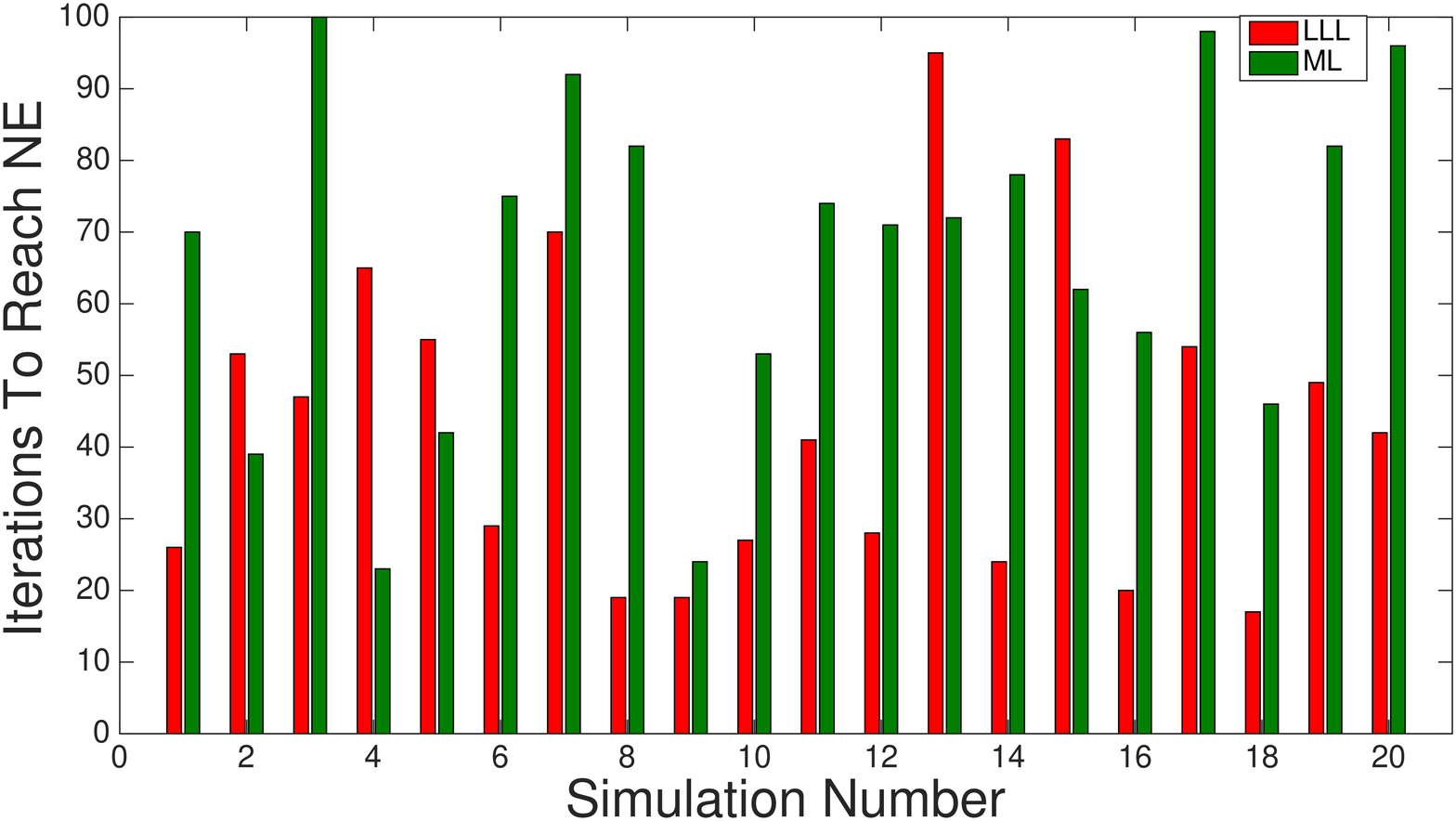}
		\label{subfig:comp_short_iter}
	}
	\subfigure[Global payoff at the first NE]
	{ \includegraphics[trim = 0mm 0mm  0mm  00mm, clip, scale=0.1]{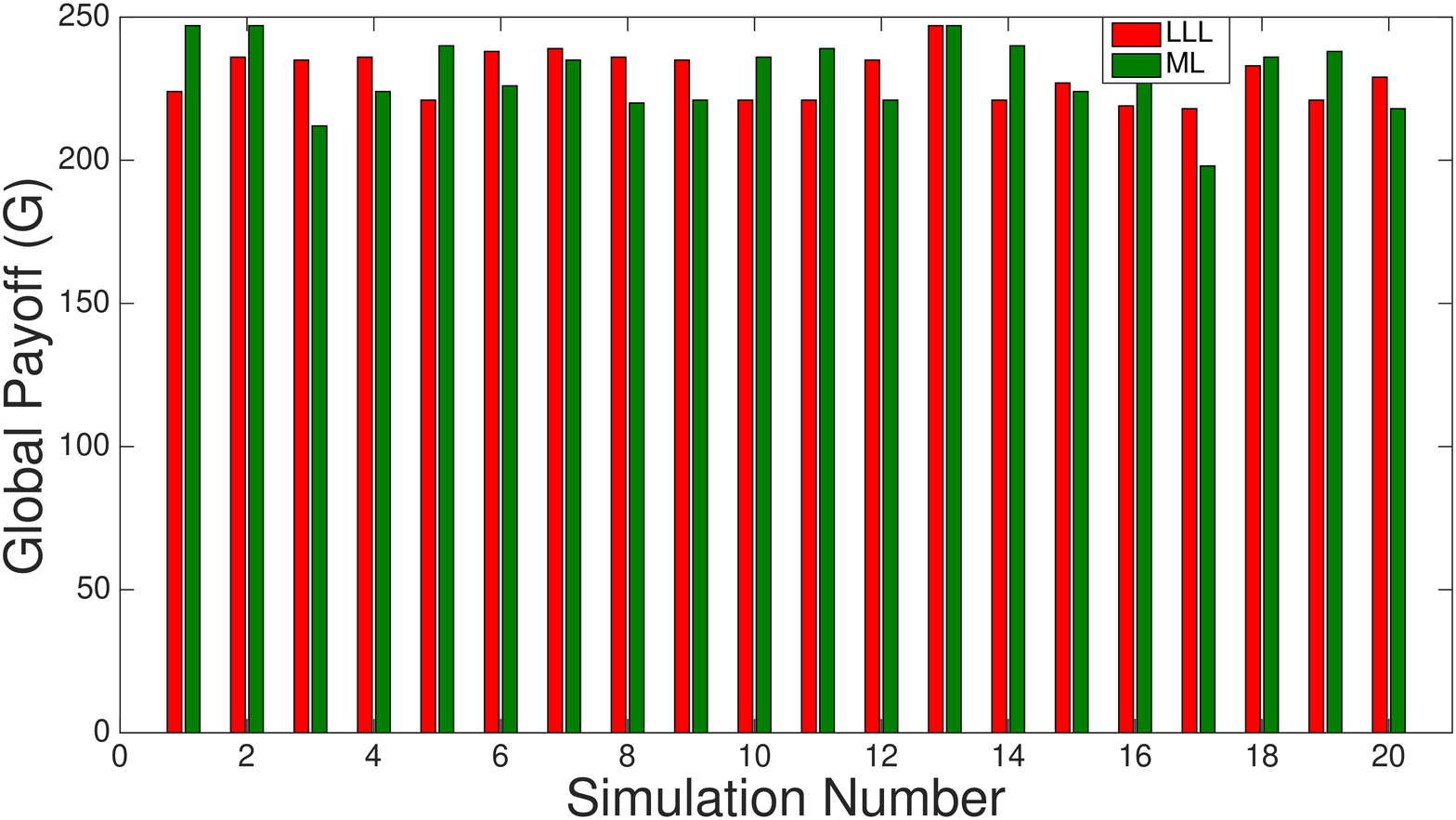}
		\label{subfig:comp_short_val}
	}	
%	\subfigure[A typical trajectory of global payoff function for $\te{iter} = 100$, $T = 0.001$]
%	{\includegraphics[trim = 0mm 0mm  0mm  0mm, clip, scale=0.16]{figures/comp_100_001.eps}
%		\label{subfig:comp1}
%	}
		\caption{System performance under LLL and ML for $T = 0.001$.}
	\label{fig:sim_shortrun}
\end{figure*}
\begin{figure*}[t!]
	\centering
	
	\subfigure[$\te{iter} = 10^6$, $T = 0.001$ ]
	{ \includegraphics[trim = 0mm 0mm  0mm  00mm, clip, scale=0.15]{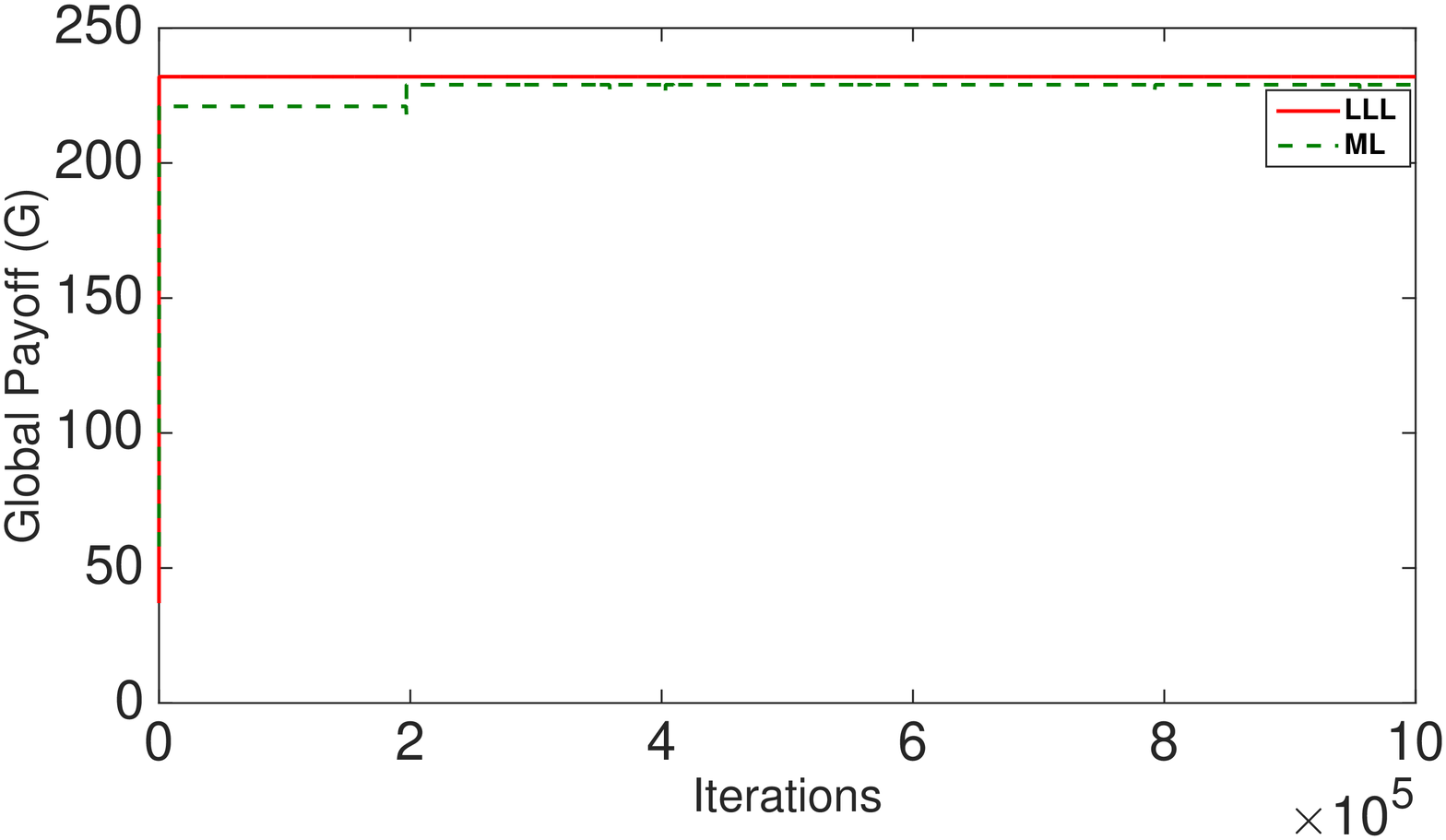}
		\label{subfig:comp2}
	}	
	\subfigure[$\te{iter} = 10^6$, $T = 0.004$]
	{\includegraphics[trim = 0mm 0mm  0mm  0mm, clip, scale=0.15]{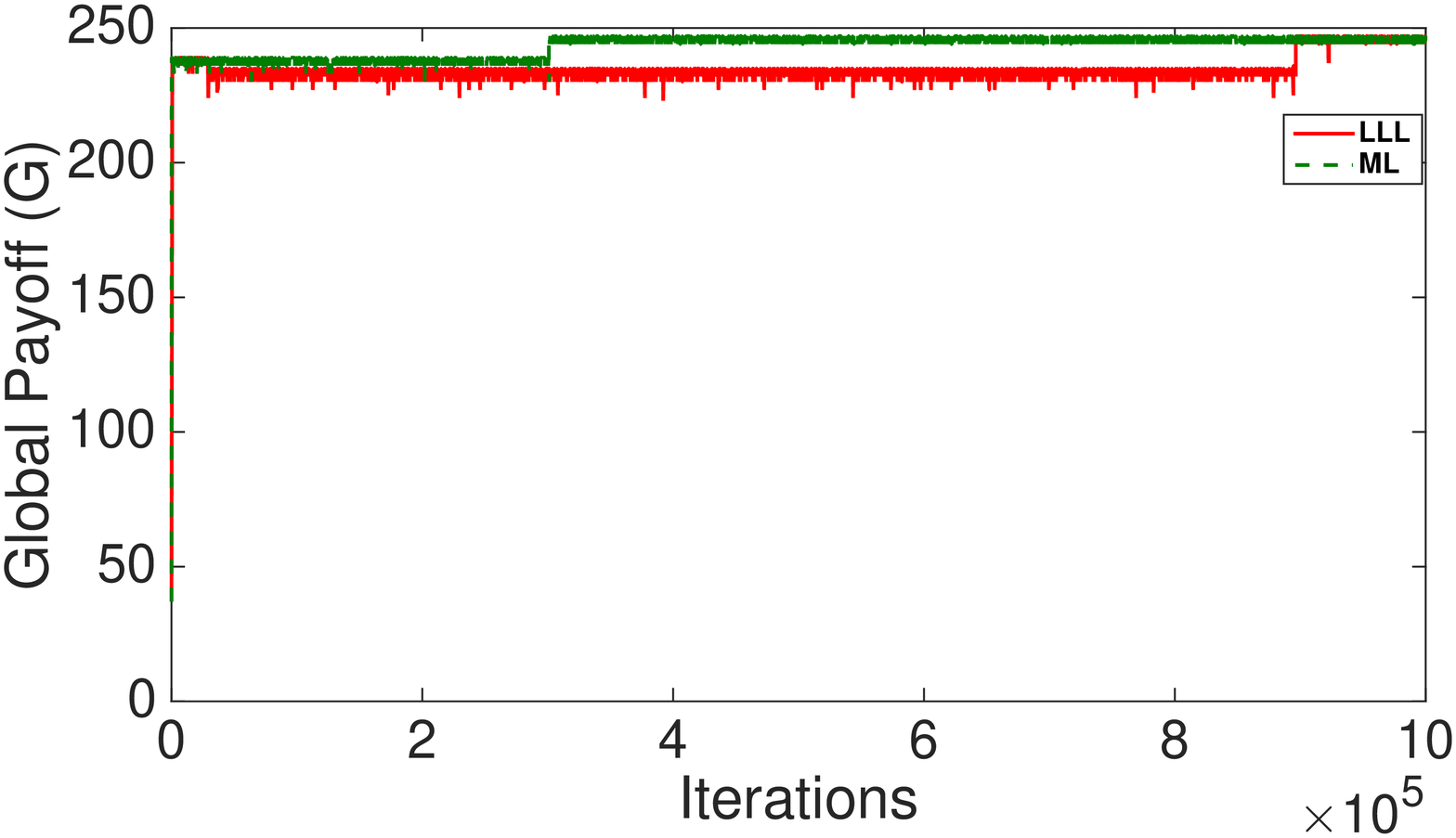}
		\label{subfig:comp3}
	}
	\subfigure[$\te{iter} = 50^3$, $T = 0.0096$]
	{\includegraphics[trim = 0mm 0mm  0mm  0mm, clip, scale=0.15]{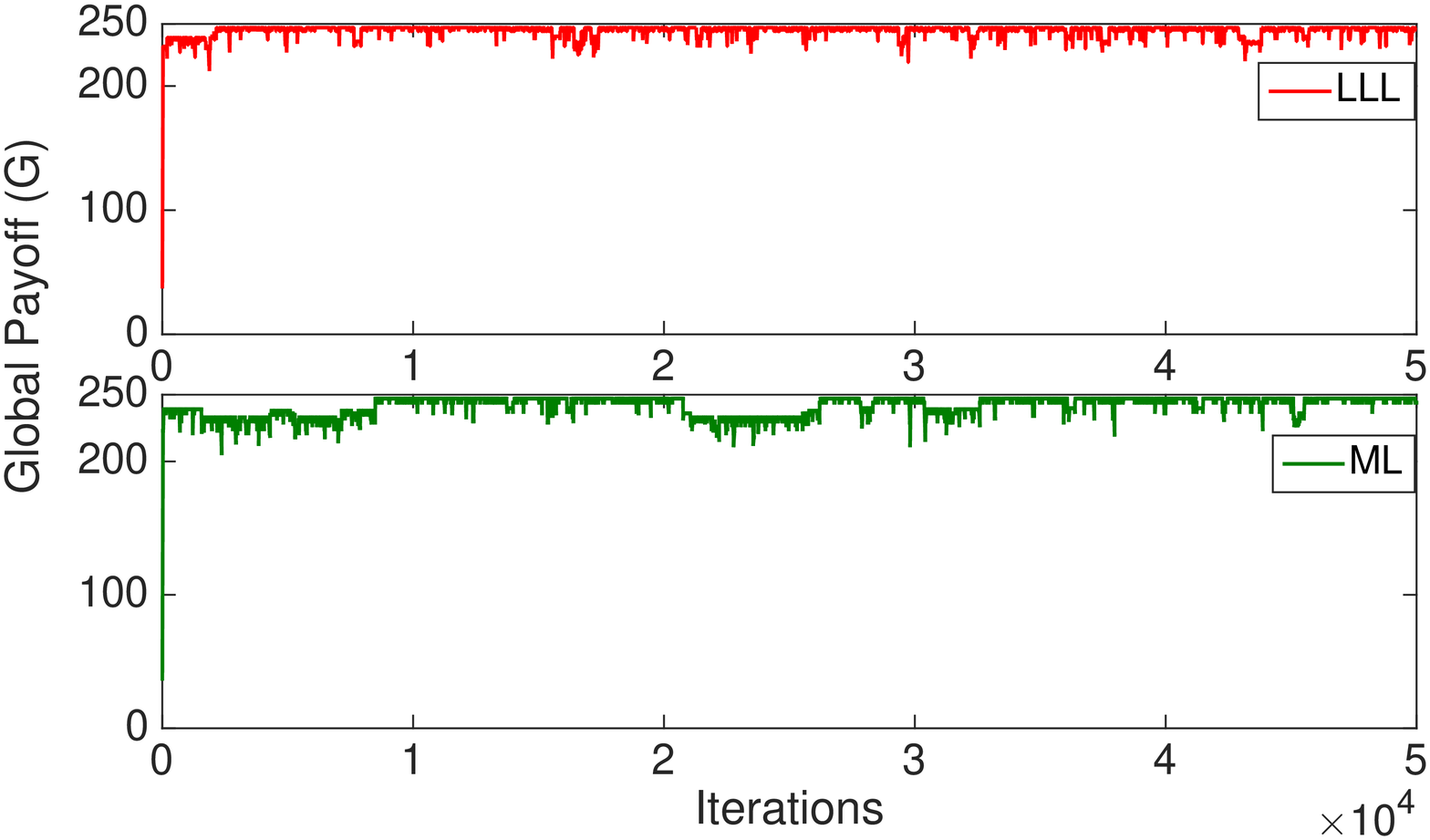}
		\label{subfig:comp4}}
	\caption{System performance under LLL and ML for different noise conditions.}
	\label{fig:simulation}
\end{figure*}
\subsection{Coverage Game Setup}
Consider a scenario in which $N$ sensors are deployed randomly to monitor an environment $\mc{D}\subset \field{R}^2$ for a long period of time. We approximate $\mc{D}$ with a square region defined over intervals $[0,d]\times [0,d]$. To simplify the problem, the area is discretized as a 2-dimensional grid represented by the Cartesian product $\{0,1,\ldots,d\}\times \{0,1,\ldots,d\}$. The location of each sensor $i\in \{1,2,\ldots,n\}$ is $x_i\in \field{R}^2$ where $x_i \sim \te{ unif }(\mc{D})$, i.e., $x_i$ is a random variable uniformly distributed over the region of interest $\mc{D}$. The footprint of sensor $i$ is a circular disk of radius $r$, i.e., 
\begin{equation*}
F(x_i,r) = \{z\in \field{R}^2 ~\te{ s.t. } ~\Vert z-x_i\Vert^2 \leq r\},
\end{equation*}
We assume that each sensor can choose the radius of its footprint from a finite set, which determines its energy consumption. Let $r_c$ be the communication range of each sensor. We assume that $r_c\geq 2r_{\max}$, where $ r_{\max}$ is the maximum sensing radius.

We propose a game-theoretic solution to the sensor coverage problem in which we formulate the problem as a strategic game and implement some local learning rule so that each sensor can learn its schedule based on local information only.
% Although sensor coverage is a well studied problem that is still receiving active research attention, we propose a simple setup for sensor coverage game to highlight the differences between the two learning dynamics. 
The players in this game are the sensors and each player has $m_i$ actions, i.e., $N_p = \{1,2,\ldots,N\}$ and ${A}_i =\{r_{0},r_1,\ldots,r_{m_i}\}$. Here, an action of a player is its sensing radius. For each sensor, $r_0 = 0$, which is the off state of a sensor. The joint action profile is the joint state of all the sensors. 

Let $p_{kl}=(x_k,y_l)$ be a point on the grid where $k,l\in\{0,1,\ldots,d\}$. The state of a grid point is whether it is covered or uncovered, i.e., 
\begin{equation*}
c(p_{kl})=
\begin{cases}
1	\quad& \exists~ i\in N_p~|~ x_i\in F(p_{kl},a_i) \text{ and } a_i \neq r_0 \\
0	\quad& \text{ Otherwise }
\end{cases}
\end{equation*}
Thus, the objective is to solve the following optimization problem. 
\begin{equation*}\label{eq:GlobalPayoff}
\max\limits_{a\in \mc{A}} G(a) = \max\limits_{a \in \mc{A}} (U(a) -C(a)),
\end{equation*}
where 
\begin{equation*}\label{eq:Coverage}
U(a) = \sum_{k} \sum_l c(p_{kl})
\end{equation*}
is the total coverage achieved by the sensor network and 
\begin{equation*}\label{eq:C(a)}
C(a) = \sum_{i=1}^N C_i(a_i)
\end{equation*}
is the total cost incurred by the sensors that are on. We assume that $C_i(a_0) = 0$, i.e., no cost is incurred by the sensors that are off. 

The local utility of each player is computed through marginal contribution utility as explained in \cite{Marden2013} with base action $a^b_i = r_0$, i.e., the base action of each sensor is to be in the off state in which there is no energy consumption. If $a_{-i}$ is the joint state of all the other sensors, then the utility of player $i$ for action $a_i$ is  
\begin{align*}
U_i(a_i,a_{-i}) &= G(a_i,a_{-i}) - G(a^b_i,a_{-i}).
\end{align*}
The above equation implies that $U_i(a^b_i,a_{-i}) = 0$. For any $k \in \{1,2,\ldots,m_i\}$
\begin{align*}
U_i(r_k,a_{-i})&= U(r_k,a_{-i}) - \sum\limits_{\substack {j=1 \\a_i = r_k}}^N C_j(a_j) 
\\ &-U(r_0,a_{-i}) - \sum\limits_{\substack {j=1 \\a_i = r_0}}^N C_j(a_j)\\
&=[U(r_k,a_{-i})-U(r_0,a_{-i}) ]- C_i(a_i)
\end{align*}
Thus, the marginal contribution utility of sensor $i$ with action $a_i$ is the number of grid points that are covered by the sensor exclusively with footprint of radius $a_i$ minus the cost $C_i(a_i)$.
% It was shown by \cite{Marden08} the for sensor coverage game, marginal contribution utility results in a potential game with PoS = 1. 

To make the payoff and the cost terms in $U_i$ compatible, we express the cost of turning a sensor on as a function of the minimum number of grid points that a sensor should cover exclusively. Let  $R_{\max}(r)$ be the maximum number of grid points that a sensor can cover if its footprint has radius $r$. We define the cost as 
\begin{equation*}
C_i(a_i) = \te{ceil}(\alpha R_{\max}(a_i))~~\text{ for } \alpha\in (0,1].
\end{equation*}
Thus, the net utility of a sensor is negative if the number of points it covers exclusively is less than $C_i(a_i)$ given $\te{a}_{-i}$.

%To enable sensors to locally figure out their actions that lead to a utility maximizing configuration, we need to implement some local learning rules. The two learning rules that we will implement are LLL and ML.  

\subsection{Simulation Results} We simulated the sensor coverage game with $d = 20$, $n = 15$, $\alpha = 0.2$, and $A_i = \{0,15\}$ for all $i \in \{1,2,\ldots,15\}$. For this setup, the maximum global utility was  
\[
\max_{a\in \mc{A}} G(a) = 247,
\]
which was computed numerically based on extensive simulations.
To achieve the maximum payoff, we implemented LLL and ML with different values for the noise parameter $T$ and the number of iterations $\te{iter}$. The results of the simulation are presented in Figs. \ref{fig:sim_shortrun} and \ref{fig:simulation}. 

Initially, all the sensors were in the off state. To compare the short-term behavior of the network with small noise, we set $T = 0.001$ and ran the simulation for twenty times  for both LLL and ML with $\te{iter} = 100$. Since players were randomly selected to update their actions at each decision time, each simulation led to a different system configuration in one hundred iterations even with the same initial condition. The results of twenty simulations are presented in Fig. \ref{fig:sim_shortrun}. In Fig. \ref{subfig:comp_short_iter}, we show the number of iterations to reach a NE for the first time under LLL and ML. Based on the results in Fig. \ref{subfig:comp_short_iter}, the average number of iterations to reach a NE for the first time under LLL and ML were 43.15 and 63.75 respectively. Thus, on average, the system reached a NE faster under LLL than ML. 

%From the figure, we can observe that in fifteen out of twenty simulations, the system reached a NE faster under LLL then ML. 
For a system with multiple Nash equilibria, reaching a NE faster is not the only objective. The quality of the NE is also a significant factor. In Fig. \ref{subfig:comp_short_val}, we present the global payoff at the Nash equilibria reached under LLL and ML in our twenty simulations. The global payoffs       at the Nash equilibria under LLL and ML had a mean value of 229.6 and 230.1, and a standard deviation of 8.39 and 12.49 respectively. Although the average global payoffs were almost equal, the higher standard deviation under ML implies that ML explored the state space more as compared to LLL. As of result of this higher exploration tendency, the system achieved the global maximum of 247 three times under ML and only one time under LLL. 

Thus, based on the comparisons from Fig. \ref{fig:sim_shortrun}, LLL seems to be better than ML because it can lead to a NE faster on average. However, ML seems to have a slight edge over LLL if we consider the quality of the Nash equilibria. This observation provides a strong rationale for comprehensive comparative analysis because we cannot simply declare one learning rule better than the other. 
% shows that in ten simulations, the global payoffs at the Nash equilibria reached by ML were strictly higher than at the Nash equilibria reached under LLL. In Fig.\ref{subfig:comp1}, we present typical trajectories for global payoff under LLL and ML. 

% We first compared the short run behavior of the network with small noise by setting $\te{iter} = 100$ and $T = 0.001$. The results are shown in Fig. \ref{subfig:comp1}, in which the solid and the dotted lines represent the global payoffs for LLL and ML respectively. In both the cases, the payoffs increased rapidly till reaching constant values, which indicate that the network configurations reached Nash equilibria. 

%A couple of interesting observations can be made from Fig. \ref{subfig:comp1}. One observation is that the network reached a NE under LLL faster as compared to ML. Moreover, LLL and ML led the system to different Nash equilibria although the system was initialized in the same configuration. However, the most important observation is that even though LLL led the system to a NE quickly, the payoff at the NE reached under ML was higher than at the NE reached under LLL. This observation provides a strong rationale for a comprehensive comparative analysis because we cannot simply declare one learning rule better than the other. 

For higher order analysis, the objective was to observe and compare system behavior over an extended period. For comparison, we were interested in the following crucial aspects. 
\begin{itemize}
	\item Time to reach a payoff maximizing NE under each learning dynamics. 
	\item The paths adopted to reach the payoff-maximizing NE and their characteristics.
	\item System behavior after reaching a payoff maximizing NE.
\end{itemize}
Therefore, we simulated the system for $10^6$ iterations with $T = 0.001$ and $T = 0.004$, and for $50\times 10^3$ iterations with $T = 0.0096$. The results are presented in Figs \ref{subfig:comp2}-\ref{subfig:comp4} respectively. 

For $T = 0.001$, optimal configuration could not be achieved under both LLL and ML even in $10^6$ iterations. For LLL, the network remained stuck at some NE for $10^6$ iterations. Under ML, there was a single switch in network configuration after approximately $20\times10^3$ from one NE to another. As we increased the noise to $T = 0.004$, payoff maximizing configurations were reached under both LLL and ML. However, the number of iterations to reach these optimal configurations were huge, particularly in LLL. Finally, for $T = 0.0096$, the optimal configurations were reached rapidly.

The ability of ML to stay at an optimal configuration after reaching it is affected more by noise as compared to LLL. In Fig. \ref{subfig:comp2} with $T = 0.001$, the network configuration switched from one NE to another under ML, but there was no switch under LLL. In Fig. \ref{subfig:comp3} with $T = 0.004$, the network configuration switched to an optimal NE quickly under ML then under LLL. Finally, the increase of noise $T = 0.0096$ led to an interesting behavior that can be observed in Fig. \ref{subfig:comp4}. Under ML, the network configuration kept on leaving the payoff-maximizing configurations periodically for a significant duration of times. However, under LLL, after reaching an optimal configuration, the network never left the configuration for long durations of time. Every time it left the optimal configuration because of noise, it immediately switched back. 
We can summarize the observations from the simulation setup as follows
\begin{enumerate}
	\item In short run, LLL can drive network configuration to a NE quickly as compared to ML. 
	\item In short, medium, and long run, starting from the same initial condition, LLL and ML can drive network configurations along entirely different paths that lead to the payoff-maximizing configurations in the long run. 
	\item The effect of noise on LLL and ML is significantly different. 
\end{enumerate}

From the above observations, we can conclude that the concept of stochastic stability alone is not sufficient to describe the behavior of stochastic learning dynamics. However, these observations are based on the simulation of a particular system under certain conditions, which prohibits us from drawing any general conclusions regarding the behavior of these learning rules. Therefore, we present a general framework to analyze and compare the behavior of different learning rules that have the same stochastically stable states. We establish that the setup of Cycle Decomposition is useful for the comparative analysis of learning dynamics in games. In particular, we identify and compare the parameters that enable us to explain the system behavior that we observed in the motivating setup of sensor coverage games.

\section{Cycle Decomposition}\label{sec:CDA}

Consider a Markov chain $X$ on a finite state space $S = \{1,2,\ldots,N\}$ with transition matrix $P_T$. We assume that the transition matrix satisfies the following property. 
\begin{align}\label{eq:FW property}
{\Gamma_T}e^{-\frac{1}{T}V(x,y)} \leq P_T(x,y) \leq  \frac{1}{\Gamma_T}e^{-\frac{1}{T}V(x,y)} 
\end{align}
where $\Gamma_T >0 $ for $T > 0 $ and
\begin{equation}\label{eq:gamma(T)}
\lim_{T \rightarrow 0} T \ln \Gamma_T = 0.
\end{equation} 
Here $V:S\times S\rightarrow \field{R}_+ \cup \infty$ is defined as follows
\[
\begin{cases}
V(x,y)\geq 0 \quad & P_T(x,y)>0 \\
V(x,y)= \infty\quad & P_T(x,y) = 0.
\end{cases}
\]
For any $(x,y)$ pair, $V(x,y)$ can be considered as a transition cost from $x$ to $y$. It is assumed that the function $V$ is irreducible, which implies that for any state pair $(x,y)$, there exists a path $\omega_{x,y}^S$ of length $k$ such that 
\[
V(\omega_{x,y}^S) = \sum_{i = 0}^{k-1}V(\omega_i,\omega_{i+1})<\infty
\]
%Moreover, it was shown in \cite{Trouve96} and \cite{Catoni1999}, that the function $V$ should be induced by a potential. 
\begin{definition}\label{def:inducedPot}
{\it	A function $V:S\times S \rightarrow \field{R}_+ \cup \infty$ is induced by a potential function $\phi:S \rightarrow \field{R}$ if, for all $x$ and $y$ in $S$, the following weak reversibility condition is satisfied.}
	\begin{equation}\label{eq:weak reversibility}
	 \phi(x) - V(x,y) = \phi(y) - V(y,x)
	 \end{equation}		
\end{definition}
The following result is from $\cite{Catoni1999}$ (Prop. 4.1).
\begin{prop}\label{prop:weakReversibility}
\it{	Let $(X,P_T)$ be a family of Markov chains over state space $S$ such that the transition matrices $P_T$ satisfy (\ref{eq:FW property}) and (\ref{eq:gamma(T)}). If the function $V$ is induced by a potential $\phi$ as defined in Def. \ref{def:inducedPot}, then the stationary distribution $\pi_T$ is such that }
	\begin{align*}
	\lim_{T \rightarrow 0} -T \ln \pi_T(x) = \max_{y\in S} (\phi(y) - \phi(x) )
	\end{align*}
\end{prop}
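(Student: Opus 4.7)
The plan is to combine the Freidlin--Wentzell (Markov chain tree) formula for the stationary distribution with a short tree-surgery argument that exploits the weak reversibility identity (\ref{eq:weak reversibility}). Writing
\[
\pi_T(x) = \frac{Q_T(x)}{\sum_{z \in S} Q_T(z)}, \qquad Q_T(x) := \sum_{g \in G(x)} \prod_{(u,v) \in g} P_T(u,v),
\]
where $G(x)$ is the finite collection of spanning trees on $S$ with all edges directed toward root $x$, I would first use the two-sided bound (\ref{eq:FW property}) on each of the $|S|-1$ factors in every tree-product to show
\[
\lim_{T \to 0} -T \ln Q_T(x) = W(x), \qquad W(x) := \min_{g \in G(x)} \sum_{(u,v) \in g} V(u,v).
\]
The multiplicative slack $\Gamma_T^{\pm(|S|-1)}$ coming from those bounds, together with the polynomial tree count $|G(x)|\leq |S|^{|S|-2}$, contributes only terms of the form $c\, T\ln\Gamma_T$ and $T\ln|S|$ to $-T\ln Q_T(x)$, both of which vanish in the limit by (\ref{eq:gamma(T)}). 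A standard Laplace-type argument applied to the normalizing sum then yields
\[
\lim_{T \to 0} -T \ln \pi_T(x) = W(x) - \min_{z \in S} W(z).
\]

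Next I would establish that weak reversibility forces $W+\phi$ to be constant on $S$. Fix any $x, y \in S$ and let $g^{*}_x$ be a minimum-cost $x$-tree. Inside $g^{*}_x$ there is a unique directed path $y = z_0 \to z_1 \to \cdots \to z_k = x$; reversing exactly the edges of this path yields a spanning $y$-tree $g_y$. Applying (\ref{eq:weak reversibility}) edge-by-edge and telescoping,
\[
\sum_{(u,v) \in g_y} V(u,v) - \sum_{(u,v) \in g^{*}_x} V(u,v) = \sum_{i=0}^{k-1} \big( V(z_{i+1},z_i) - V(z_i,z_{i+1}) \big) = \phi(x) - \phi(y),
\]
so $W(y) + \phi(y) \leq W(x) + \phi(x)$. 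Swapping the roles of $x$ and $y$ gives the reverse inequality; hence $K := W(x) + \phi(x)$ is independent of $x$.

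Combining,
\[
\lim_{T \to 0} -T \ln \pi_T(x) = W(x) - \min_{z \in S} W(z) = \big(K - \phi(x)\big) - \big(K - \max_{z \in S} \phi(z)\big) = \max_{z \in S} \phi(z) - \phi(x),
\]
which is the claim. The main obstacle I expect is the first step: one has to verify carefully that the Freidlin--Wentzell estimate $-T\ln Q_T(x) \to W(x)$ really does go through when (\ref{eq:FW property}) only bounds $P_T(x,y)$ two-sidedly rather than prescribing its exact exponential form. All the real work is the accounting that the slack factors, raised to the $(|S|-1)$-st power inside each tree product and summed across polynomially many trees, still produce an additive $o(1)$ error after multiplication by $-T$. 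Once that is in place, the tree-surgery identity and the final algebra are routine.
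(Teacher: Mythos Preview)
Your argument is correct and complete in outline. The Markov chain tree representation, the Laplace-type estimate $-T\ln Q_T(x)\to W(x)$ (with the slack $\Gamma_T^{\pm(|S|-1)}$ and the polynomial tree count handled exactly as you say), and the path-reversal identity showing $W+\phi$ is constant all go through; the only small point worth making explicit is that weak reversibility forces $V(u,v)<\infty\Rightarrow V(v,u)<\infty$, so the reversed edges you create really are admissible, and irreducibility of $V$ guarantees each $G(x)$ contains a finite-cost tree.

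As for comparison: the paper does not prove this proposition at all. It simply cites it as Prop.~4.1 of Catoni (1999). Your Freidlin--Wentzell / tree-surgery approach is precisely the classical route to such statements (it is essentially the argument behind Young's and Catoni's results), so while there is no ``paper's own proof'' to contrast with, what you have written is the standard proof one would expect and is fully adequate here.
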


Thus, in the limit as $T \rightarrow 0$, only the states maximizing the potential will have a non-zero probability. Based on Prop. \ref{prop:weakReversibility}, there is an entire class of Markov chains that lead to potential maximizers. We want to mention here that the results in \cite{Catoni1999} were for minimizing a potential function. Since we are dealing with maximizing a payoff, all the definitions and results are adapted accordingly.  

\subsection{Cycle Decomposition Algorithm}
Cycle Decomposition Algorithm (CDA) was presented in \cite{Trouve96}, based on the ideas originally presented in \cite{Freidlin84}. It was presented to study the transient behavior of Markov chains that satisfy (\ref{eq:FW property}), (\ref{eq:gamma(T)}), and (\ref{eq:weak reversibility}) and lead to the stationary distribution defined in Prop. \ref{prop:weakReversibility}. In this algorithm, the state space is decomposed into unique cycles in an iterative procedure. The formal definition of cycle as presented in \cite{Catoni1999} and \cite{catoni1997exit} is as follows

\begin{definition}\label{cycle}
{\it	A set $\Pi \subset S$ is a cycle if it is a singleton or it satisfies either of the two conditions.} 
	\begin{enumerate}
		\item {\it For any $x$, $y$ in $\Pi$, $x \neq y$}
		\[
		\lim_{T \rightarrow 0} - T \ln P_T(X_{\tau (\Pi^c \cup \{y\})} \neq y ~|~ X_0 = x) >0
		\]
		\item { \it For any $x$, $y$ in $\Pi$, $x \neq y$
		\[
			\lim_{T \rightarrow 0} T \ln E_T(N_{\Pi}(x,y) ~|~ X_0 = x) >0
		\]
		where $N_{\Pi}(x,y)$ is the number of round trips including $x$ and $y$ performed by the chain $X$ before leaving $\Pi$. }
	\end{enumerate}
The first condition simply means that a subset $\Pi$ is a cycle if starting from some $x \in \Pi$, the probability of leaving $\Pi$ before visiting every state $y \in \Pi$ is exponentially small. Thus, 
\[
\lim_{T \rightarrow 0}  P_T(X_{\tau (\Pi^c \cup \{y\})}= y ~|~ X_0 = x) =1.
\]
 The second statement states that the expected number of times each $y \in \Pi$ is visited by $X$ starting from any $x \in \Pi$ is exponentially large. 	
\end{definition}

For higher order comparative analysis, we first decompose the state space into cycles via CDA. Then, we compare the properties of the cycles under each learning dynamics. For the completeness of presentation, we reproduce CDA in Alg. \ref{alg:Cycle}. The outcome of CDA as presented in Alg. \ref{alg:Cycle} is the set $C$ defined in (\ref{eq:C(E)}). To explain system behavior using CDA, we need the following definitions and results, which are mostly adopted from \cite{Trouve96}.

%If the weak reversibility condition in (\ref{eq:weak reversibility}) is not satisfied, the stationary distribution of Markov chain cannot be guaranteed to be concentrated on potential maximizers. 

The minimum cost of leaving a state $x$ is
\vspace{0.02in}
\[
H_{e}(x) = \min\limits_{\substack{y\in S \\ y\neq x}} V(x,y).
\]
We will refer to $H_e(x)$ as the exit height of state $x$. For any set of states $x$ and $y$ such that $P_T(x,y)>0$, we define
\vspace{0.02in}
\begin{equation*}\label{eq:R_states}
V_*(x,y) = 
\begin{cases}
V(x,y) - H_e(x) \quad & x\neq y\\
0 \quad & x=y
\end{cases}
\end{equation*}
i.e., $V_*(x,y)$ is the excess cost above the minimum transition cost form $x$. For a path $\omega:=(\omega_0,\omega_1,\ldots,\omega_k)$  
%\vspace{0.02in}
\begin{equation*}
V_*(w) = \sum_{i = 0}^{k-1}V_*(\omega_i,\omega_{i+1}). 
\end{equation*}
The exterior boundary of set $A$ is 
\vspace{0.02in}
\begin{equation*}\label{eq:boundary}
\partial^{\te{ext}} A = \{y \in S \backslash A : \exists x \in A, P_T(x,y)>0\}.
\end{equation*}
The interior boundary of set $A$ is 
\vspace{0.02in}
\begin{equation*}\label{eq:boundary}
\partial^{\te{int}} A = \{x \in  A ~|~ \exists y \in S\backslash A, P_T(x,y)>0\}.
\end{equation*}

%CDA is a constructive algorithm to decompose the state space into non-trivial cycles and singletons. 
 We say that a cycle is non-trivial if it has a non-zero exit height. Thus, a singleton is non-trivial cycle if it is a local maxima. The order of the decomposition of the state space $S$ is 
\[
n_S = \min\{k \in \field{N} ~|~ E^{k+1} = S\}
\] 
An increasing family of cycles is defined for each $x \in S$ as follows. Define $x^0 = x$. For each $1\leq k \leq n_S$
\begin{align}\label{eq:increasingCycles}
 x^{k+1} \in E^{k+1} \text{ such that } x^k \subset x^{k+1}
\end{align}
Given a set $A \subset S$ such that $|A|>1$, the maximal proper partition $\mc{M}(A)$ is
	\[
	\mc{M}(A) = \{ \Pi \in C(S) ~|~ \Pi \text{ is maximal in } C_A(S) \},
	\]
	where $C_A(S) = \{ \Pi \in C(S) ~ |~ \Pi \subset A, \Pi \neq A \}$.
	\begin{comment}
	\item The order of $x$ in $A$ is $n_{A,x}$, which is defined as 
	\[
	%n_{A,i} = \max_{k \in \{0,1,\ldots,n_S \} }\{k \te{ such that } \exists~ \Pi \in E^k ~|~ i \in E^k\}
	n_{A,x} =\max \{k~:~ 0\leq k\leq n_S,~ x^k \subset A \}
	\]
	\end{comment}
%\end{itemize}

For a cycle $\Pi \in C(S)$,
\begin{itemize}
	\item order $n_{\Pi}$ is 
	\[
	  n_{\Pi} = \min\{0< k< n_S ~|~\Pi \in E^k\}.
	\]
	\item exit height $H_e(\Pi)$ is 
	\[
		H_e(\Pi) = \begin{cases}
		\max\{H_e^k(\Pi)~|~ k\leq n_S, \Pi \in E^k\}  &\Pi \neq S \\
		\infty	\quad & \Pi = S
		\end{cases}	
	\]
	\item mixing height $H_m(\Pi)$ is 
	\[
	H_m(\Pi) = \begin{cases}
	\max\{H_e(\Pi ')~|~ \Pi ' \in \mc{M}(\Pi)\}  &|\Pi| > 1 \\
	0 & |\Pi| =  1  
	\end{cases}
	\]
	\item potential $\phi(\Pi)$ is 
	\[
		\phi(\Pi) = \max\{\phi(x) ~|~ x \in \Pi \}
	\]
	\begin{comment}
	\item peak $P(\Pi)$ is 
	\[
		P(\Pi) = \{ x \in \Pi ~|~ \phi(x) = \phi(\Pi)\}
	\]
	\end{comment}
	\item communication altitude between any two states $x$ and $y$ is 
	\begin{align*}
		%A_c (\Pi) &= \min_{x,y \in \Pi} A_c(x,y) \\
		A_c(x,y) &= \max_{\omega \in \Omega^S(x,y)} \min_{0\leq k \leq |\omega|} \phi(\omega_k) - V(\omega_k, \omega_{k+1}) \nonumber
	\end{align*}
	where $\omega_k$ is the $k^{\te{th}}$ element in the path $\omega$.
	\item the communication altitude of a cycle $A_c(\Pi)$ is 
	\begin{align*}
	A_c (\Pi) &= \min_{x,y \in \Pi} A_c(x,y)
	\end{align*}
\end{itemize}

\begin{algorithm} [h!]
	\caption{\bf Cycle Decomposition}\label{alg:Cycle}
	\begin{algorithmic}[1]
		\Require Define level zero as $$E^0:=\{\{x\} ~:~ x\in S\}$$ with communication costs
		\[
		V^0(x,y) = V(x,y)~~~ H_e^0(x) = H_e(x).
		\]
		\Ensure The $k^{\te{th}}$ level $E^k$ has been constructed.% such that $|E^k|$ is the number of sets in $E^k$ and $E^k(i)$ is the $i^{\te{th}}$ set in $E^k$. 
		\While {$E^k \neq S$}
		\State Form a graph $G(E^k,\mc{E}^k)$ such that each cycle $S_i^k \in E^k$ is a vertex in $G$ and 
		\[
		(S_i^k,S_j^k) \in \mc{E}^k \text{ iff } V^k(S_i^k,S_j^k) < \infty.
		\]
		%where $V^k(S_i^k,S_j^k)$ is defined in (\ref{eq:V_update}).
		\State Compute the minimum exit cost $H_e^k$ for every $S_i^k \in E^k$. 
		\begin{align*}
		H_e^k(S_i^k) = \min\{V^k(S_i^k,S_j^k), \forall S_j^k \in E^k,~S^k_j \neq S^k_i\}
		\end{align*}
		\State For every $S_i^k$ and $S^k_j$ $\in E^k$, compute $V^k_*(S_i^k,S_j^k)$.
		\begin{equation*}
		V_*^k(S_i^k,S_j^k) = V^k(S_i^k,S_j^k) - H_e^k(S_i^k)
		\end{equation*}
		\State Form a graph $G(E^k,\mc{E}^k_*)$ such that for each vertex $ S_i^k \in E^k$, $(S_i^k,S_j^k) \in \mc{E}^k_* \text{ iff } V_*^k(S_i^k,S_j^k) = 0$.  The graph $G(E^k,\mc{E}^k_*)$ is a subgraph of $G(E^k,\mc{E}^k)$.	
		
		\State Compute the strongly connected components in $G(E^k,\mc{E}^k_*)$. $G_s^{k+1}$ is a strongly connected component of $G(E^k,\mc{E}^k_*)$ if for every $S_i^k$ and $S_j^k$ in $G_s^{k+1}$, there exists a path $\omega^{E^k}_{S_i^k,S_j^k} \in G_s^{k+1}$ such that $V^k_*(\omega^{E^k}_{S_i^k,S_j^k}) = 0$. 
		\State Let $D^{k+1}$ be the set of strongly connected components in $G(E^k,\mc{E}^k_*)$. Define a minimum set $D_m^{k+1}$ as follows
		\begin{multline*}
		D_m^{k+1} =\{G_s^{k+1} \in D^{k+1} |~ V_*(S_i^k,S_j^k) > 0~\forall ~S_i^k \in G_s^{k+1},\\S_j^k \in E^k\backslash G_s^{k+1}  \}
		\end{multline*}
		\State Construct the set $E^{k+1}$
		\[
		E^{k+1}= D_m^{k+1} \cup \{ S_i^k \in E^k : S_i^k \notin G_s^{k+1} ~\forall ~G_s^{k+1} \in D_m^{k+1}\}
		\]
		\State For each $S^{k+1}_i \in E^{k+1}$, define
		\begin{equation*}
		H_m^{k+1}(S^{k+1}_i) = \max \{H_e^k(S_j^k) ~\forall ~ S_j^k \in E^k,~ S_j^k \subset S^{k+1}_i\}
		\end{equation*}
		\State Compute the cost between the sets in $E^{k+1}$ as 
		\begin{multline}\label{eq:V_update}
		V^{k+1}(S_i^{k+1},S_j^{k+1}) = H_m^{k+1}(S_i^{k+1}) +\\
		\min\limits_{\substack{S_m^k \subset S_i^{k+1}  \\ S_n^k \subset S^{k+1}_j  }}V_*^k(S_m^k,S^{k}_n) 
		\end{multline}
		\State $k = k+1$.
		\EndWhile
		\begin{equation}\label{eq:C(E)}
		C(S) = \bigcup\limits_{l = 0}^{k} E^l.
		\end{equation}
	\end{algorithmic}
\end{algorithm}
The exit and the mixing heights of a cycle provides an estimate of how long the Markov chain will remain in the cycle. The potential of a cycle is the maximum potential of a state within the cycle. 
The communication altitude was introduced in \cite{Trouve96}, and it was shown that $A_c(\Pi)$ relates $H_e(\Pi)$, $H_m(\pi)$ and $\phi(\Pi)$ as follows. 
\begin{align}\label{eq:A_cVsHeVsHm}
A_c(\Pi) &= \phi(\Pi) - H_m(\Pi)    \nonumber \\
A_c(\Pi) &= \phi(\Pi') - H_e(\Pi'),
\end{align}
for any $\Pi' \in \mc{M}(\Pi)$. Another important result form Prop. 2.16 in \cite{Trouve96} is
\begin{equation}\label{eq:Ac_(a_f,a hat)}
A_c(x,y) = A_c(y,x) = A_c(\Pi_{xy})
\end{equation}
where $\Pi_{xy}$ is the smallest cycles containing both $x$ and $y$. 
 The definition of $A_c$ is adjusted because we are maximizing a utility instead of minimizing a cost. However, all the results from \cite{Trouve96} remain valid, and play an important role in the comparative analysis of LLL and ML. 

The main result related to cycles that we will use is from \cite{Olivieri95} and \cite{Catoni1999}, and is as follows:
\begin{thm}\label{thm:CDA results}
\it{	Let $\Pi \in C(S)$. For any $\epsilon >0$ and for any $x$ and $y$ in $\Pi$
	\begin{align}
	P_T\left(e^{\frac{1}{T}(H_e(\Pi) - \epsilon)} < \tau_{\partial \Pi} < e^{\frac{1}{T}(H_e(\Pi) + \epsilon)} |X_0 = x\right) &= 1 - o(1) \label{eq:exitTimeResult}  \\
	P_T\left(\tau_y < \tau_{\partial \Pi}, \tau_y<e^{\frac{1}{T}(H_m(\Pi) + \epsilon)}|X_0 = x \right) &= 1 - o(1) 
	\label{eq:MixingTimeResult}
	\end{align}
	as the noise parameter $T \rightarrow 0$, where $ \tau_{\partial \Pi}$ is the exit time of $X$ from $\Pi$ and $\tau_y$ is the hitting time for $y$. }
\end{thm}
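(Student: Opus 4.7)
The plan is to exploit the Freidlin--Wentzell estimates (\ref{eq:FW property})--(\ref{eq:gamma(T)}) together with the hierarchical structure generated by Algorithm \ref{alg:Cycle}, proving the two assertions in order: first the exit time concentration (\ref{eq:exitTimeResult}), then the mixing statement (\ref{eq:MixingTimeResult}) by induction on the order $n_\Pi$ of the cycle. For (\ref{eq:exitTimeResult}), the key observation is that by construction of the sets $E^k$ in Algorithm \ref{alg:Cycle}, every path $\omega$ from $x \in \Pi$ to any $z \in \partial^{\te{ext}} \Pi$ satisfies $V_*(\omega) \geq H_e(\Pi)$, and some such path attains equality. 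Combined with (\ref{eq:FW property}) and (\ref{eq:gamma(T)}), this yields that from any $x \in \Pi$, the probability of exiting within a block of $\lceil e^{H_e(\Pi)/T} \rceil$ steps is bounded below by a positive constant $c$ independent of $T$. Concatenating $\lceil e^{\epsilon/T} \rceil$ such blocks, covering total time $e^{(H_e(\Pi)+\epsilon)/T}$, the probability of never exiting is at most $(1-c)^{\lceil e^{\epsilon/T} \rceil} = o(1)$, giving the upper bound. For the matching lower bound, in any window of length $e^{(H_e(\Pi)-\epsilon)/T}$ the per-step exit probability is $O(e^{-H_e(\Pi)/T})$, so a crude union bound shows the probability of premature exit is $O(e^{-\epsilon/T}) = o(1)$.

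For (\ref{eq:MixingTimeResult}) I would induct on $n_\Pi$. A singleton cycle has $H_m(\Pi)=0$ and the claim is trivial. In the inductive step, $\mc{M}(\Pi)$ partitions $\Pi$ into maximal proper subcycles $\Pi'$, each with exit height $H_e(\Pi') \leq H_m(\Pi)$ by definition of $H_m$. Applying the inductive hypothesis together with (\ref{eq:exitTimeResult}) at the subcycle level, starting anywhere in $\Pi'$ the chain visits every state of $\Pi'$ and then exits $\Pi'$ within roughly $e^{(H_m(\Pi)+\epsilon/3)/T}$ steps. The successive subcycle exits define a coarse-grained Markov chain on $\mc{M}(\Pi)$ whose inter-subcycle transition costs (\ref{eq:V_update}) are all exactly $H_m(\Pi)$, since $\mc{M}(\Pi)$ forms a strongly connected component of the zero-resistance graph at level $n_\Pi - 1$ in the construction of Algorithm \ref{alg:Cycle}. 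This aggregated chain therefore hits every element of $\mc{M}(\Pi)$ in a number of subcycle transitions polynomial in $|\mc{M}(\Pi)|$ and independent of $T$. Multiplying yields $\tau_y < e^{(H_m(\Pi)+\epsilon)/T}$ with probability $1-o(1)$, and since $H_m(\Pi) < H_e(\Pi)$, comparison with (\ref{eq:exitTimeResult}) applied to $\Pi$ itself delivers $\tau_y < \tau_{\partial \Pi}$.

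The main obstacle will be the separation-of-scales argument that sustains the induction: one needs to certify that the coarse-grained chain on $\mc{M}(\Pi)$ mixes on a time scale strictly shorter than $e^{H_e(\Pi)/T}$. The strict inequality $H_m(\Pi) < H_e(\Pi)$ is implicit in the selection of $D_m^{k+1}$ in Algorithm \ref{alg:Cycle} and in condition 1 of Def. \ref{cycle}, but translating it into a uniform (in $T$) polynomial bound on the number of inter-subcycle transitions requires careful tracking of the multiplicative constants entering through $\Gamma_T$ at every recursion level. Assembling the per-subcycle mixing estimates into a bound for the full cycle $\Pi$ also demands repeated application of the strong Markov property at each subcycle entrance, with the accumulated error shown to remain $o(1)$ after distributing the $\epsilon$-slack across the at most $n_\Pi$ levels. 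This bookkeeping, rather than any single sharp estimate, is the crux of the proof.
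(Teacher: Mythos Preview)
The paper does not prove this theorem at all: it is quoted from the literature, with the sentence immediately preceding the statement reading ``The main result related to cycles that we will use is from \cite{Olivieri95} and \cite{Catoni1999}, and is as follows.'' After stating the theorem the paper only offers two sentences of interpretation and then moves on. So there is no ``paper's own proof'' to compare against; your proposal is attempting something the authors deliberately outsourced.

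As a sketch of how the cited references argue, your outline is broadly in the right spirit---the inductive use of the maximal partition $\mc{M}(\Pi)$ for the mixing statement and the block/renewal argument for the upper bound on $\tau_{\partial\Pi}$ are indeed the standard route. However, two points deserve caution. First, your lower-bound argument for (\ref{eq:exitTimeResult}) via ``per-step exit probability is $O(e^{-H_e(\Pi)/T})$'' and a union bound is not correct as stated: a cycle can contain interior-boundary states from which a single step leaves $\Pi$ with probability bounded away from zero (whenever the minimum in $H_e(x)$ is attained by a transition out of $\Pi$). The lower bound instead comes from showing that, starting anywhere in $\Pi$, the chain typically returns to the bottom of $\Pi$ before exiting, and that each such excursion exits only with probability $O(e^{-H_e(\Pi)/T})$; this is a renewal argument, not a union bound on single steps. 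Second, the claim that every escape path satisfies $V_*(\omega)\ge H_e(\Pi)$ conflates $V$ and $V_*$; the correct statement involves the total cost $V(\omega)$ (or, equivalently, the communication altitude), not the excess cost $V_*$. These are exactly the ``bookkeeping'' issues you flag at the end, but they are more than bookkeeping: the first one is a genuine gap in the lower-bound step.
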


Eq. (\ref{eq:exitTimeResult}) implies that the exit time of a Markov chain $X$ from a cycle $\Pi$ starting from any $x \in \Pi$ is proportional to the exit height of $\Pi$. Moreover, (\ref{eq:MixingTimeResult}) suggests that before leaving the cycle $\Pi$, $X$ will visit all the states within $\Pi$ exponentially large number of times. Eqs. (\ref{eq:exitTimeResult}) and (\ref{eq:MixingTimeResult}) enable us to explain the behavior of a Markov chain from its cycles. In the next section, we apply Cycle decomposition to ML and LLL and demonstrate the effectiveness of our proposed approach in explaining the behavior of the corresponding evolutionary process. 

\section{Cycle Decomposition For ML And LLL}
Before we can carry out comparative analysis using CDA, we need to establish that both ML and LLL satisfy the criteria for CDA. The state space for these learning rules is the set of joint action profiles $ \mc{A} = A_1 \times A_2\times\cdots\times A_n$, where $A_i = \{1,2,\ldots, m_i\} $. We define 
\begin{align*}
|A|_{\max} &= \max \{|A_i| ~|~ i \in \{1,2,\ldots,n\}\} \\
|A|_{\min} &= \min \{|A_i| ~|~ i \in \{1,2,\ldots,n\}\}
\end{align*}

\begin{prop}\label{prop:FW Markov Chain}
\emph{	$P_{T}^{\te{ML}}$ and $P_{T}^{\te{LLL}}$ satisfy the conditions in (\ref{eq:FW property}), (\ref{eq:gamma(T)}), and (\ref{eq:weak reversibility}).}
\end{prop}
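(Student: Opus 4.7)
The plan is to read off the cost function $V(\mathbf{a},\mathbf{a}')$ from each transition matrix, show that the remaining prefactors are essentially constants independent of $T$ (so they can be absorbed into a $T$-independent $\Gamma_T$), and finally check weak reversibility by invoking the potential-game identity $U_i(\alpha,\mathbf{a}_{-i}) - U_i(\alpha',\mathbf{a}_{-i}) = \phi(\mathbf{a}) - \phi(\mathbf{a}')$. For both dynamics, transitions with $d_H(\mathbf{a},\mathbf{a}') > 1$ have $P_T = 0$, so $V(\mathbf{a},\mathbf{a}') = \infty$; since this happens symmetrically in $\mathbf{a}$ and $\mathbf{a}'$, the weak reversibility relation holds trivially on these pairs, and the work reduces to feasible one-coordinate moves.

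For $P_T^{\text{ML}}$ the read-off is immediate: $V^{\text{ML}}(\mathbf{a},\mathbf{a}') = [U_i(\alpha,\mathbf{a}_{-i}) - U_i(\alpha',\mathbf{a}_{-i})]^+$, and the prefactor is exactly $\frac{1}{n|A_i|}$, which lies in the interval $[\frac{1}{n|A|_{\max}},\frac{1}{n|A|_{\min}}]$. Setting $\Gamma_T := \frac{1}{n|A|_{\max}}$ gives condition (\ref{eq:FW property}) with equality (so both bounds are tight), and since $\Gamma_T$ is a positive constant, $T\ln\Gamma_T \to 0$ as $T\to 0$, verifying (\ref{eq:gamma(T)}). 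For $P_T^{\text{LLL}}$ the natural identification is $V^{\text{LLL}}(\mathbf{a},\mathbf{a}') = U_i(\alpha^*,\mathbf{a}_{-i}) - U_i(\alpha',\mathbf{a}_{-i})$; the prefactor $\frac{1}{nZ_i(\mathbf{a}_{-i})}$ must still be controlled. The key observation is that because $\alpha^*$ is a best response, every summand in $Z_i$ lies in $(0,1]$, so $1 \leq Z_i(\mathbf{a}_{-i}) \leq |A_i| \leq |A|_{\max}$ uniformly in $T$. Choosing $\Gamma_T := \frac{1}{n|A|_{\max}}$ then sandwiches the prefactor in $[\Gamma_T,1/\Gamma_T]$, giving (\ref{eq:FW property}), and (\ref{eq:gamma(T)}) is again trivial since $\Gamma_T$ is constant.

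For weak reversibility (\ref{eq:weak reversibility}) with $\phi$ the potential function, the computation is mechanical in both cases. For LLL, $V^{\text{LLL}}(\mathbf{a},\mathbf{a}') - V^{\text{LLL}}(\mathbf{a}',\mathbf{a}) = U_i(\alpha,\mathbf{a}_{-i}) - U_i(\alpha',\mathbf{a}_{-i})$ because the $U_i(\alpha^*,\mathbf{a}_{-i})$ terms cancel; the potential-game identity then equates this to $\phi(\mathbf{a}) - \phi(\mathbf{a}')$, giving the required equality. For ML, the algebraic identity $[x]^+ - [-x]^+ = x$ applied with $x = U_i(\alpha,\mathbf{a}_{-i}) - U_i(\alpha',\mathbf{a}_{-i}) = \phi(\mathbf{a}) - \phi(\mathbf{a}')$ produces the same conclusion.

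The proof is essentially a bookkeeping exercise; the only place where one has to pause is bounding $Z_i(\mathbf{a}_{-i})$ for LLL uniformly in $T$, since naively one might fear the normalizing constant could scale with $T$. The resolution, namely that every exponent is non-positive because $\alpha^*$ is a best response, is the one non-trivial observation and the main (mild) obstacle. Everything else follows directly from unpacking the definitions and applying the potential-game property.
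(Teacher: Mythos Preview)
Your proof is correct and follows essentially the same route as the paper: identify $V^{\te{ML}}(\te{a},\te{a}')=[U_i(\te{a})-U_i(\te{a}')]^+$ and $V^{\te{LLL}}(\te{a},\te{a}')=U_i(\alpha^*,\te{a}_{-i})-U_i(\te{a}')$, bound the prefactors, and verify weak reversibility via the identity $[x]^+-[-x]^+=x$ (for ML) and the cancellation of the best-response terms (for LLL), finishing with the potential-game equality. The only cosmetic difference is that the paper takes $\Gamma_T^{\te{LLL}}=\frac{1}{nZ_{\max}}$ (a $T$-dependent but sub-exponential quantity, reused later in Prop.~\ref{prop:GammaComparison}), whereas you use the $T$-independent constant $\frac{1}{n|A|_{\max}}$ via the bound $1\le Z_i(\te{a}_{-i})\le |A|_{\max}$; both choices satisfy (\ref{eq:FW property}) and (\ref{eq:gamma(T)}).
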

\begin{proof}
To prove this result, we need to show that there exist cost and $\Gamma$ functions for both ML and LLL that satisfy the three equations in the above statement. We begin with Metropolis learning. Given any pair of distinct action profiles $\te{a}$ and $\te{a}'$ in $\mc{A}$, we define 
\begin{comment}
such that $d_h(\te{a},\te{a}') = 1$ and $\te{a}_{-i} = \te{a}_{-i}'$ for some $i \in \{1,2,\ldots,n\}$, 
\begin{equation}
V^{\te{ML}}(\te{a},\te{a}') = [U_i(\te{a}) - U_i(\te{a}')]^+
\end{equation}\label{eq:Def_V_ML}
Otherwise, for $d_h(\te{a},\te{a}') > 1$
\[
V^{\te{ML}}(\te{a},\te{a}') =\infty
\]
\end{comment}
	\begin{align}\label{eq:Def_V_ML}
	V^{\te{ML}}(\te{a},\te{a}') &= \begin{cases}
	[U_i(\te{a}) - U_i(\te{a}')]^+ \quad & a_i \neq a_i', \te{a}_{-i} = \te{a}_{-i}' \\
	\infty	& \text{Otherwise}
	\end{cases} \nonumber\\ 
	\Gamma^{\te{ML}}_T &= \frac{1}{n|A|_{\max} }
	\end{align}
where $|A|_{\max} $ is the maximum number of actions of any player in the game. It is straightforward to verify that 
\[
\lim_{T \rightarrow 0} T \ln(\Gamma^{\te{ML}}_T) = 0,
\]
and $V^{\te{ML}}$ and $\Gamma^{\te{ML}}_T$ satisfy (\ref{eq:FW property}). 
\begin{comment}
\[
{\Gamma^{\te{ML}}(T)}e^{-\frac{1}{T}V^{\te{ML}}(\te{a},\te{a}')} \leq P_T^{\te{ML}}(\te{a},\te{a}') \leq \frac{1}{\Gamma^{\te{ML}}(T)} e^{-\frac{1}{T}V^{\te{ML}}(\te{a},\te{a}')} 
\]
\end{comment}

Next, we show that $V^{\te{ML}}$ is induced by a potential function $\phi$. Given any two action profile $\te{a}$ and $\te{a}'$, two cases need to be considered. The first case is when $P_T^{\te{ML}}(\te{a},\te{a}') = 0$, which implies $V^{\te{ML}}(\te{a},\te{a}') = V^{\te{ML}}(\te{a}',\te{a}) = \infty$. Thus, both the left and right sides of (\ref{eq:weak reversibility}) are equal to $\infty$. The second case is when  $P_T^{\te{ML}}(\te{a},\te{a}') > 0$. In this case, the action profiles can be written as $\te{a} =(\alpha,\te{a}_{-i})$, $\te{a}' = (\alpha',\te{a}_{-i})$.  Rearranging (\ref{eq:weak reversibility})
\[
\phi(\te{a}) - \phi(\te{a}') = V^{\te{ML}}(\te{a},\te{a}') - V^{\te{ML}}(\te{a}',\te{a}), 
\]
where 
\begin{align*}
V^{\te{ML}}(\te{a},\te{a}') - V^{\te{ML}}(\te{a}',\te{a}) =
&[U_i(\alpha,\te{a}_{-i}) - U_i(\alpha'_i,\te{a}_{-i})]^+ - \\ 
&[U_i(\alpha'_i,\te{a}_{-i}) - U_i(\alpha,\te{a}_{-i})]^+  \\
=    &~U_i(\alpha,\te{a}_{-i}) - U_i(\alpha'_i,\te{a}_{-i}).
\end{align*}
For potential games
\[
U_i(\alpha,\te{a}_{-i}) - U_i(\alpha'_i,\te{a}_{-i}) = \phi(\alpha,\te{a}_{-i}) - \phi(\alpha'_i,\te{a}_{-i}).
\]
Thus, $V^{\te{ML}}$ is induced by a potential function, which concludes the proof for ML.

For Log-Linear Learning, we define
\begin{align}\label{eq:Def_V_LLL}
V^{\te{LLL}}(\te{a},\te{a}') &= \begin{cases}
U_i(\alpha^*,\te{a}_{-i}) - U_i(\te{a}') \quad & a_i \neq a_i', \te{a}_{-i} = \te{a}_{-i}' \\
\infty	& \text{Otherwise}
\end{cases} \nonumber\\ 
\Gamma^{\te{LLL}}_T &= \frac{1}{nZ_{\max}},
\end{align}
where $\alpha^* \in B_i(\te{a}_{-i})$, and
\begin{align*}
Z_{\max} &= \max_{i \in \{1,2,\ldots,n\}} Z_{i,\max} \nonumber \\
Z_{i,\max} &= \max_{\te{a}_{-i}\in \mc{A}_{-i}}  Z_i(\te{a}_{-i}).
\end{align*}
and $Z_i(\te{a}_{-i})$ is defined in (\ref{eq:pLLL}).

For any given action profile pair $\te{a}$ and $\te{a}'$ such that $\te{a} \neq \te{a}'$,  
\[
\lim_{T \rightarrow 0} T \ln(\Gamma^{\te{LLL}}_T) = 0.
\]
Moreover, $V^{\te{LLL}}$ and $\Gamma^{\te{LLL}}_T$ as defined above, satisfy (\ref{eq:FW property}). For the weak reversibility condition, 
\begin{align*}
V^{\te{LLL}}(\te{a},\te{a}') - V^{\te{LLL}}(\te{a}',\te{a}) =
&U_i(\alpha^*,\te{a}_{-i}) - U_i(\alpha',\te{a}_{-i}) - \\ 
&U_i(\alpha^*,\te{a}_{-i}) + U_i(\alpha,\te{a})  \\
=    &~U_i(\alpha,\te{a}) - U_i(\alpha',\te{a}_{-i}).
\end{align*}
By following the same series of arguments as for ML, we conclude that $V^{\te{LLL}}$ is induced by a potential function $\phi$, which concludes the proof. 
\end{proof}

Proposition \ref{prop:FW Markov Chain} is restricted to LLL and ML. We rewrite (\ref{eq:FW property}) as
\[
\Gamma_T \leq \frac{P_T(x,y)}{e^{-\frac{1}{T}V(x,y)}}\leq \frac{1}{\Gamma_T}.
\]
By comparing the above inequalities with Def. \ref{def:RegularPertubation}, we can easily verify that for $\epsilon = e^{-1/T}$ and $V(x,y) = R(x,y)$ for every $(x,y)$ pair, any regularly perturbed process $P^{\epsilon}$ satisfies (\ref{eq:FW property}) and (\ref{eq:gamma(T)}). Moreover, if the game is a potential game, then it satisfies (\ref{eq:weak reversibility}). Thus, the framework of cycle decomposition applies to any stochastic learning dynamics on potential games that generate a regularly perturbed Markov process. The condition of $\epsilon = e^{-1/T}$ is not strict and (\ref{eq:FW property}) can easily be expressed with a general noise parameter $\epsilon$. Since both LLL and ML have  $\epsilon = e^{-1/T}$, we will not go into the details. Thus, from this point onward, we will use the terms cost and resistance for $V(x,y)$ interchangeably. 
 
\begin{figure*}[t!]
	\centering
	\subfigure[Energy landscape and transition map of state space]
	{
		\includegraphics[trim = 0mm 0mm  0mm  10mm, clip, scale=0.24]{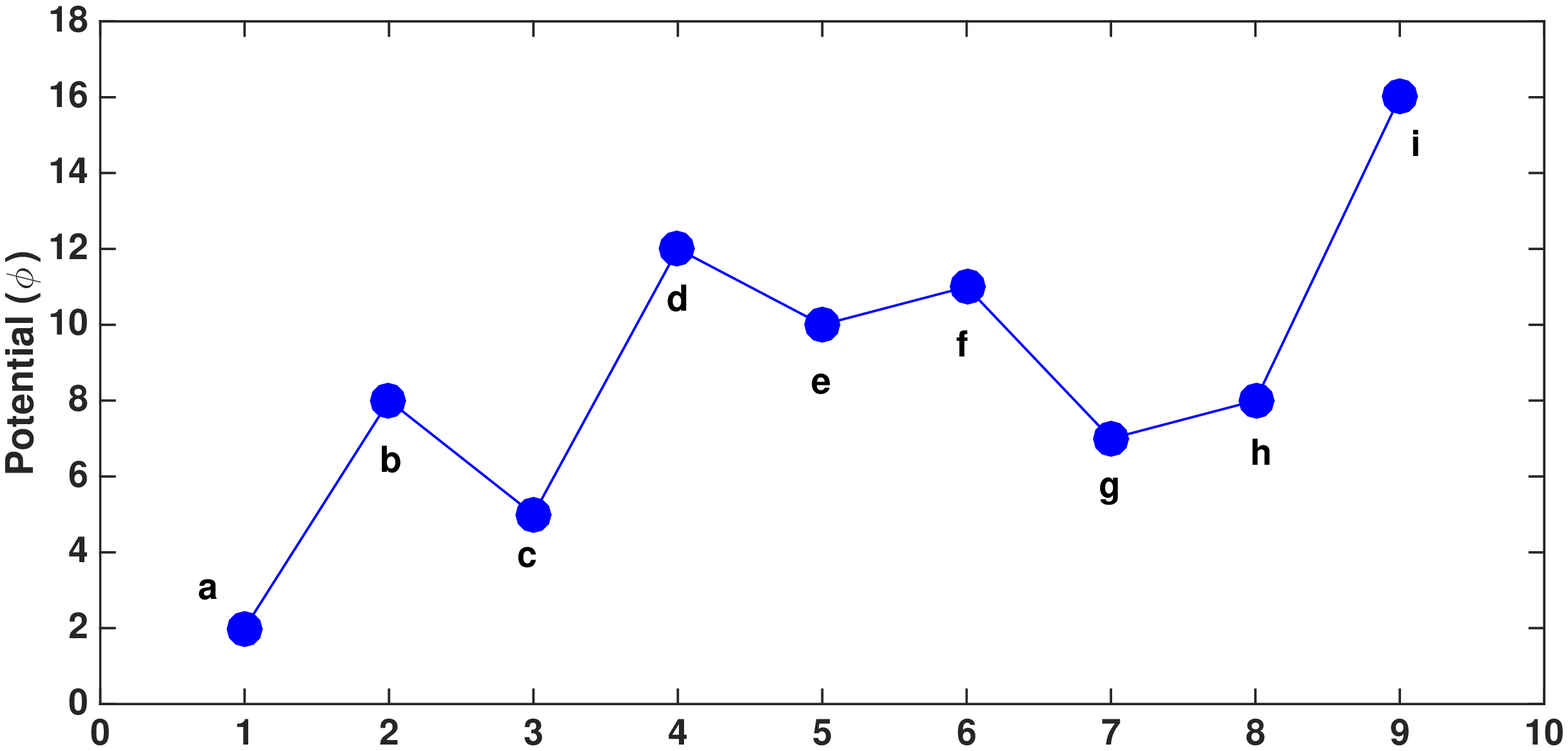}
		\label{fig:Chain}
	}
	\subfigure[$G(E^0,\mc{E}^0)$]
	{
		\includegraphics[trim = 0mm 0mm  0mm  0mm, clip, scale=0.45]{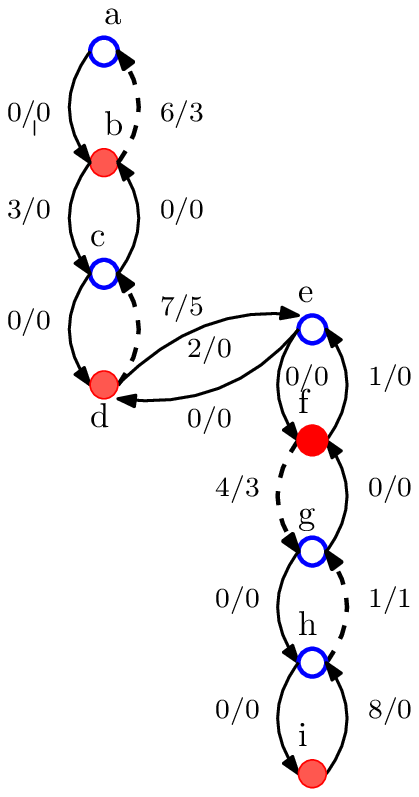}
		\label{subfig:level00}
	}
	\subfigure[$G(E^1,\mc{E}^1)$]
	{
		\includegraphics[trim = 0mm 0mm  0mm  0mm, clip, scale=0.45]{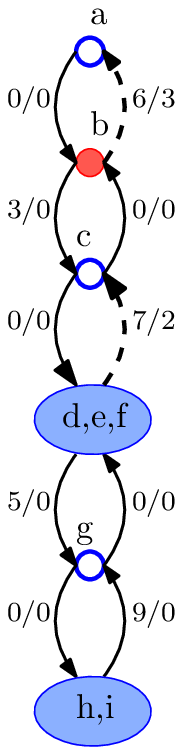}
		\label{subfig:level1}
	}
%	\hspace{0.6in}
	\subfigure[$G(E^2,\mc{E}^2)$]
	{
		\includegraphics[trim = 0mm 0mm  0mm  0mm, clip, scale=0.45]{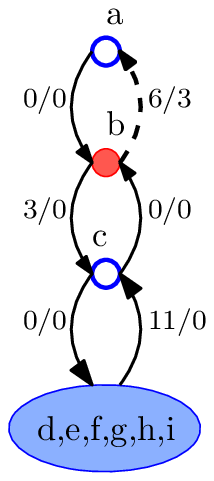}
		\label{subfig:level2}
	}
	%	\hspace{0.5in}
	\subfigure[$G(E^3,\mc{E}^3)$]
	{
		\includegraphics[trim = 0mm 0mm  0mm  0mm, clip, scale=0.45]{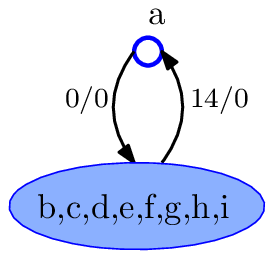}
		\label{subfig:level3}
	}

	%	\hspace{0.5in}
		
	\caption{Output of CDA for the Markov chain shown in Fig. \ref{fig:Chain} under ML are presented in Figs. \ref{subfig:level00}-\ref{subfig:level3}. Solid circles correspond to non-trivial cycles and solid edges represent transitions with minimum cost. Moreover, dotted edges represent transitions of higher cost from each state. The edges are labeled with $V^k(S^k_i,S^k_j)/V_*^k(S^k_i,S^k_j)$. }
	\label{fig:CycleDecomposition}	
\end{figure*}

\subsection{Stochastic Learning Dynamics Explained by CDA}
%Given a Markov chain that satisfies certain regularity conditions, CDA is an iterative process that identifies sets of states called cycles. To consider each cycle as a single consolidated state, CDA computes a communication cost function with respect to these aggregated states.
 %In the previous section, we presented the working of CDA with the help of a simple example. However, the important question is what is the advantage of this algorithm in the context of the problem we are interested in, which is the short run, medium run and long run behavior of stochastic learning dynamics in games. In this section, we provide these insights in the context of the same Markov chain presented in Fig. \ref{fig:Chain}
In the previous section, we proved that CDA applies to both ML and LLL. Next, we establish that CDA can effectively explain the medium and long run behaviors of stochastic learning dynamics through a simple example. We consider a Markov chain over state space $S=\{\te{a},\te{b},\te{c},\te{d},\te{e},\te{f},\te{g},\te{h},\te{i}\}$. The possible transitions between the states and the energy landscape over the entire state space are presented in Fig. \ref{fig:Chain}. This Markov chain is selected to explain the working and effectiveness of CDA and is not assumed to be associated with any particular game. 

The outcomes of first $(n_S -1)$ iterations of CDA for ML are depicted in Figs. \ref{subfig:level00}-\ref{subfig:level3}. The final iteration results in a single cycle containing the entire state space. The only information conveyed by the last level is that the chain is irreducible, which is already known. Therefore, we start from $G(E^3,\mc{E}^3)$ as presented in Fig. \ref{subfig:level3}. The set $E^3$ comprises a singleton $\{\te{a}\} $ with $H^3_e(\{\te{a}\}) =0$, and a non-trivial cycle of order three, containing all the remaining states with exit height of 14. Based on this level and Thm. \ref{thm:CDA results}, we can deduce that if the initial condition is $\{\te{a}\}$, the Markov chain will leave this state quickly as the noise parameter $T\rightarrow 0$, and will hit a large cycle containing all the other states. This cycle has an exit height of 14, which implies that the time to exit from this cycle will be proportional to $e^{14/T}$. Moreover, the chain will visit all the states within this big cycle an exponential number of times before exiting. Therefore, for a system with a finite lifetime and initial condition $\{\te{a}\}$, we can safely conclude that the system will leave $\{\te{a}\}$ quickly and will never revisit it for all practical purposes.

However, what happens when the chain leaves $\{\te{a}\}$ or if the initial condition is not in $\{\te{a}\}$? These questions cannot be answered adequately from $G(E^3, \mc{E}^3)$. To answer these questions, we need to go one level lower to the output of the second iteration $G(E^2,\mc{E}^2)$. Graph $G(E^2,\mc{E}^2)$ demonstrates that the non-trivial cycle in $E^3$ is composed of a singleton $\{\te{c}\}$, one non-trivial cycle $\{\te{b}\}$ of order zero and exit height three, and another non-trivial cycle $\{\te{d},\te{e},\te{f},\te{g},\te{h},\te{i} \}$ of order two and exit height 11. 

Graph $G(E^2,\mc{E}^2)$ offers more details about the behavior of the Markov chain as compared to $G(E^3, \mc{E}^3)$. It reveals that after exiting from $\{\te{a}\}$, the Markov chain will hit $\{\te{b}\}$ where it will get stuck for a time proportional to $e^{3/T}$. On exiting $\{\te{b}\}$, the chain will hit $\{\te{c}\}$ with high probability as $T \rightarrow 0$ because the transition from $\{\te{b}\}$ to $\{\te{c}\}$ is the transition of minimum cost from $\{\te{b}\}$. From $\{\te{c}\}$, the chain can either return to $\{\te{b}\}$, or move on to $\{\te{d},\te{e},\te{f},\te{g},\te{h},\te{i} \}$ with equal probabilities. However, once it hits $\{\te{d},\te{e},\te{f},\te{g},\te{h},\te{i} \}$, it will stay within this cycle for most of the time because the exit height of this cycle is 11 which is more than three times the exit height of $\{\te{b}\}$. Thus, we can conclude from $G(E^2,\mc{E}^2)$ that in the long run, the Markov chain will spend most of its time within the cycle $\{\te{d},\te{e},\te{f},\te{g},\te{h},\te{i} \}$ after the short and medium run behavior described above. 

Similarly, we can explain the behavior  within $\{\te{d},\te{e},\te{f},\te{g},\te{h},\te{i} \}$ by examining $G(E^1,\mc{E}^1)$ presented in Fig. \ref{subfig:level1}. Although switching form $G(E^3, \mc{E}^3)$ to $G(E^2, \mc{E}^2)$ furnished more information, there is one drawback. Given that the Markov chain is at a state other than $\{\te{a} \}$, we cannot easily approximate the hitting time of $\{\te{a} \}$ from $G(E^2, \mc{E}^2)$. Similarly, switching from $G(E^2, \mc{E}^2)$ to $G(E^1, \mc{E}^1)$ can provide more information about the behavior of the chain restricted to the set of states $\{\te{d},\te{e},\te{f},\te{g},\te{h},\te{i} \}$. However, we can lose high-level information about the transition behavior between $\{\te{d},\te{e},\te{f},\te{g},\te{h},\te{i} \}$ and the other states in the state space. Thus, switching from the output of a high-level iteration to a low-level iteration of CDA delivers information of higher resolution. However, this high-resolution information is restricted to small subsets of state space. On the other hand, moving from a lower level to a higher level of CDA yields high-level details on transitions from one set of states to another set of states but abstracts away low-level information.

Regardless of which level of CDA we are analyzing, the key parameter that enables us to describe system behavior through CDA is the exit height of a cycle, which depends on the mixing height of that cycles according to (\ref{eq:V_update}). To verify this claim, we apply CDA under LLL on the chain in Fig.\ref{fig:Chain}, and compare the output with output under ML discussed before. The outputs of the first iteration under ML and the second iteration under LLL are presented in Fig. \ref{fig:comparison}. The first observation is that both the graphs have the same cycles but different transition costs. The difference in transition costs is highlighted in the figure with a different color. 

Assuming that $\te{a}$ is the initial condition, both the dynamics have the same behavior till the chain reaches the state $\te{c}$. At $\te{c}$, ML can transition to $\te{b}$ or $\{\te{d},\te{e},\te{f} \}$ with equal probability. However, LLL will transition to $\{\te{d},\te{e},\te{f} \}$ with high probability since the transition to $\te{b}$ has a high cost. Thus, LLL will hit the cycle $\{\te{d},\te{e},\te{f} \}$ faster as compared to ML. However, the exit height of this cycle under LLL is six which is one unit higher than the exit height under ML. The difference in exit heights implies that although ML will reach $\{\te{d},\te{e},\te{f} \}$ late, it has the ability to leave this cycle quickly as compared to LLL. After exiting from $\{\te{d},\te{e},\te{f} \}$, both the chains will hit the state $\te{g}$. From the state $\te{g}$, ML can transition back to $\{\te{d},\te{e},\te{f} \}$ or to $\{\te{h},\te{i} \}$ with equal probability, where state $\te{i}$ is the potential maximizer. However, LLL will return to $\{\te{d},\te{e},\te{f} \}$ with high probability because the transition cost to $\{\te{h},\te{i} \}$ is high. This comparative analysis of the chain in Fig. \ref{fig:Chain} revealed that if $\te{a}$ is the initial condition LLL will initially proceed towards the potential maximizer faster as compared to ML. However, it can get stuck in a cycle longer than ML. Moreover, ML has a zero cost path to potential maximizer from $\te{g}$ whereas no such path exists for LLL. 

%$S^k_i$ is the minimum transition cost from $S^k_i$ to any other cycle in $E^k$. The transition cost between any two cycles of order $k$ is given in (\ref{eq:V_update}), which comprises two terms. The first term is the mixing height, which is a local property of the given cycle. It explains that the time the chain will remain stuck in $S^k_i$ is dominated by the cycle of maximum exit height contained in $\mc{M}(S^k_i)$. The second term is based on the relative cost between $S^k_i$ and $S^k_j$.
The comparison of LLL and ML based on Fig. \ref{fig:comparison} signifies the importance of the qualities of the paths that lead to a stochastically stable state from any given initial condition. Furthermore, it highlights the importance of exit height in explaining system behavior. Thus, for comparative analysis, we will compare the mixing and exit heights of cycles under ML and LLL. In this work, the comparison is restricted to subsets of state space that are cycles under both LLL and ML.

\section{Comparative Analysis of ML and LLL}

In this section, we present the main results of this work related to the comparative analysis of stochastic learning dynamics in games. Although the results are presented in the context of LLL and ML, the techniques can be extended to any learning dynamics that satisfy the criteria in (\ref{eq:FW property}), (\ref{eq:gamma(T)}), and (\ref{eq:weak reversibility}). We divide the comparative analysis into first order and higher order analysis. 

\subsection{First Order Analysis}
 In the first order analysis, we compare the expected hitting times to the set of Nash equilibria for both the learning rules. We first derive upper bounds on the expected hitting times to the set of Nash equilibria and present a comparative analysis of these bounds. Then, we determine a sufficient condition to guarantee that the expected hitting time to the set of Nash equilibria will be smaller for LLL then ML from any given initial action profile.
For analysis purposes, we assume that for any action profile pair $\te{a}$ and $\te{a}'$ such that $P_T(\te{a},\te{a}') >0$ 
\begin{equation}\label{eq:assumption}
\phi (\te{a}) \neq \phi(\te{a}') .
\end{equation}
This is not a restrictive assumption and is placed for simplifying the analysis. Otherwise, if $\phi(\te{a}) = \phi(\te{a}')$ and $P_T(\te{a},\te{a}') > 0$, we can simple merge $\te{a}$ and $\te{a}'$ into a single state.

Let $M$ be the set of all the Nash equilibria in $S$, and let $\tau^{l}_M$ be the hitting time to $M$ under a learning rule $l$ where $l \in \{\te{LLL},\te{ML}\}$. Given any path $\omega = (\te{a}_0,\te{a}_1,\ldots,\te{a}_{p-1})$, let
\[
P_{\omega}^{l} = P(X_s = \te{a}_s \text{ for all } s \in \{0,1,\ldots,p-1\}).
\]
Here $P_{\omega}^{l}$ is the probability that the Markov chain moves along the path $\omega$ under learning dynamics $l$. From (\ref{eq:FW property}),  
\[
(\Gamma_T^{l})^p e^{-\frac{1}{T}V^{l} (\omega)} \leq P^{l}_{\omega} \leq \frac{1}{(\Gamma_T^{l})^p} e^{-\frac{1}{T}V^{l} (\omega)},
\]
 We define 
\begin{figure}[t!]\label{fig:comparison}
	\centering
	\includegraphics[trim = 0mm 0mm  0mm  0mm, clip, scale=0.45]{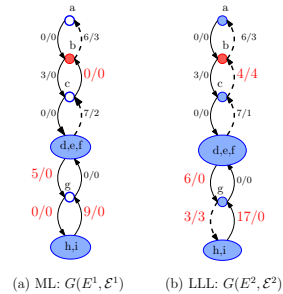}
	\caption{Comparison of ML and LLL based on CDA}
	\end{figure}
\begin{align*}
\Omega^{l}(\te{a},M) &= \{w^{\mc{A}}_{\te{a},\te{a}^*}~|~ \te{a}^* \in M \text{ and } V^{l}(w^{\mc{A}}_{\te{a},\te{a}^*}) = 0\},
%\Omega^{\te{ML}}(\te{a},M) &= \{w^{\mc{A}}_{\te{a},\te{a}^*}~|~ \te{a}^* \in M \text{ and } V^{\te{ML}}(w^{\mc{A}}_{\te{a},\te{a}^*}) = 0\}	
\end{align*}
i.e., $\Omega^{l}(\te{a},M)$ is the set of zero cost paths from action profile $\te{a}$ to $M$ under $l \in \{\te{LLL},\te{ML}\}$.
The set of zero cost paths from any initial condition is
\[
\Omega^{l} = \bigcup\limits_{\te{a}\in \mc{A}} \Omega^{l}(\te{a},M).
\]
Let 
\begin{align*}
\xi^{l}(\te{a}) &= \max\{|\omega|~:~ \omega \in \Omega^{l}(\te{a},M)\},\text{ and }\\
\sigma^{l}(\te{a}) &= \min\{|\omega|~:~ \omega \in \Omega^{l}(\te{a},M)\}, 
%\xi^{\te{ML}}(\te{a}) &= \max\{|\omega|~:~ \omega \in \Omega^{\te{ML}}(\te{a},M)\}, \\
%\sigma^{\te{ML}}(\te{a}) &= \min\{|\omega|~:~ \omega \in \Omega^{\te{ML}}(\te{a},M)\}. 
\end{align*}
where $\xi^{l}(\te{a})$ is the length of the longest paths from $\te{a}$ to $M$ under $l$ and $\sigma^{l}(\te{a})$ is the length of the shortest paths from $\te{a}$ to $M$ under $l$. 
\begin{comment}
Define
%
\begin{align*}
\xi^{\te{LLL}}_{\max} &= \max_{\te{a} \in \mc{A}}~ \xi^{\te{LLL}}(\te{a}),~\sigma^{\te{LLL}}_{\max} = \max_{\te{a} \in \mc{A}}~ \sigma^{\te{LLL}}(\te{a}) \\
\sigma^{\te{ML}}_{\max} &= \max_{\te{a} \in \mc{A}}~ \sigma^{\te{ML}}(\te{a}),~~ \xi^{\te{ML}}_{\min} = \min_{\te{a} \in \mc{A}}~ \sigma^{\te{ML}}(\te{a})
\end{align*}
\end{comment}
Before presenting the main results, we need to prove the following propositions. 
\begin{prop}\label{prop:NonemptyPathSet}
	\emph{ If an action profile $\te{a} \notin M$, then $\Omega^{\te{LLL}}(\te{a},M)$ and $\Omega^{\te{ML}}(\te{a},M)$ are non-empty for a finite state space $\mc{A}$.}
\end{prop}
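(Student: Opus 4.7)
The plan is to explicitly construct a zero-cost path from any non-Nash action profile $\te{a}$ to the set $M$ under each dynamics, exploiting the potential function to guarantee termination.

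First I would unpack what ``zero cost'' means for each dynamics. Using the resistances derived in the paper, $V^{\te{LLL}}(\te{a},\te{a}') = 0$ on a feasible single-coordinate transition iff the deviating player $i$ moves to a best response $\alpha' \in B_i(\te{a}_{-i})$, and $V^{\te{ML}}(\te{a},\te{a}') = 0$ iff player $i$ moves to an action with $U_i(\te{a}') \geq U_i(\te{a})$. In particular, any strict best-response move (from an action that is not currently a best response) is simultaneously a zero-cost transition under both learning rules.

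Next I would iterate. Suppose $\te{a}^{(0)} = \te{a} \notin M$. Then there exists a player $i$ with $a^{(0)}_i \notin B_i(\te{a}^{(0)}_{-i})$. Pick any $\alpha' \in B_i(\te{a}^{(0)}_{-i})$ and set $\te{a}^{(1)} = (\alpha',\te{a}^{(0)}_{-i})$. By construction $U_i(\te{a}^{(1)}) > U_i(\te{a}^{(0)})$, so $V^{\te{ML}}(\te{a}^{(0)},\te{a}^{(1)}) = 0$ and $V^{\te{LLL}}(\te{a}^{(0)},\te{a}^{(1)}) = 0$ as well. By the potential game property combined with the tie-breaking assumption in (\ref{eq:assumption}), $\phi(\te{a}^{(1)}) > \phi(\te{a}^{(0)})$. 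If $\te{a}^{(1)} \in M$, stop; otherwise repeat the construction from $\te{a}^{(1)}$.

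Finally I would argue termination and that the output is a legitimate path. Since $\phi$ is strictly increasing along the constructed sequence, no state can repeat, and because $\mc{A}$ is finite the procedure must halt after finitely many steps. It can only halt at a state in $M$, since at any non-Nash state the construction above produces a successor. The resulting sequence $(\te{a}^{(0)},\te{a}^{(1)},\ldots,\te{a}^{(k)})$ consists of distinct action profiles, each consecutive pair differs in exactly one coordinate, and every transition has zero cost under both $V^{\te{LLL}}$ and $V^{\te{ML}}$. Hence it lies in $\Omega^{\te{LLL}}(\te{a},M)$ and in $\Omega^{\te{ML}}(\te{a},M)$, which are therefore non-empty.

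There is no real obstacle here; this is essentially the classical better/best-reply convergence argument in potential games, adapted so that the two zero-cost transition structures are treated uniformly via a strict best-reply move. The only mild care needed is to note that a strict best reply is zero-cost for both rules simultaneously, so a single constructed sequence works for both claims.
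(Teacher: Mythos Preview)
Your proposal is correct and follows essentially the same argument as the paper: iterate a best-response move, use the potential increase (together with assumption~(\ref{eq:assumption})) to guarantee distinct states, and invoke finiteness of $\mc{A}$ for termination at a Nash equilibrium. The only cosmetic difference is that the paper proves the LLL case and then remarks that the ML case is identical, whereas you treat both dynamics at once by observing that a strict best-reply move has zero cost under both $V^{\te{LLL}}$ and $V^{\te{ML}}$; this is a slightly cleaner packaging of the same idea.
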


\begin{proof} The result is proved for LLL because the proof for ML is exactly the same. We establish that $\Omega^{\te{LLL}}(\te{a},M)$ is non-empty by showing that we can always construct a path that belongs to this set. 
	
	Let $\omega_0 = \{\te{a}\}$. Assume that there does not exist any $\te{a}' \in \mc{A}$ such that $V^{\te{LLL}}(\te{a},\te{a}') = 0$. This will imply that $\te{a} \in M$, which is a contradiction. Thus, there exists an action profile $\te{a}_1\in \mc{A}$ such that  $V^{\te{LLL}}(\te{a},\te{a}_1)  = 0$. Define $\omega_1 = \omega_0 \cup \{\te{a}_1 \}$. If $\te{a}_1 \in M$, we are done and $\omega_1 \in \Omega^{\te{LLL}}(\te{a},M)$. If $\te{a}_1 \notin M$, we can argue in the same manner as before that there exists an action profile $\te{a}^2 \in \mc{A}$ such that $V^{\te{LLL}}(\te{a}_1,\te{a}_2)  = 0$. Define $\omega_2 = \omega_1 \cup \te{a}_2$. By repeating this argument $k$ times, we obtain 
	\[
	\omega_k=\{\te{a}_0,\te{a}_1,\ldots,\te{a}_{k-1},\te{a}_k\}, \text{ where } \te{a}_0 = \te{a}.
	\]
	The condition in (\ref{eq:assumption}) implies that $\phi(\te{a}_l) > \phi(\te{a}_{l+1})$ for all $l \in \{0,1,\ldots,k-1\}$. Thus, all the action profiles in the path $\omega$ are unique. The uniqueness of the elements of $\omega$ ensures that this process terminates in finite number of steps, say $p$, since $\mc{A}$ has finite number of elements. Then $\omega_p \in \Omega^{\te{LLL}}(\te{a},M)$
\end{proof}
The argument of ML is exactly the same. In fact, Prop. \ref{prop:NonemptyPathSet} is valid for any Markov chain that satisfies Eqs. (\ref{eq:FW property}). 

\begin{prop}\label{prop:Omega_Size_Comparison}
	\emph{	For LLL and ML, $\Omega^{\te{LLL}}(\te{a},M) \subseteq \Omega^{\te{ML}}(\te{a},M)$.}
\end{prop}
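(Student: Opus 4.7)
The plan is to show that every edge of zero LLL-cost is also an edge of zero ML-cost; the inclusion of path sets then follows immediately by summing along the path.

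First I would unpack what a zero-cost transition means in each dynamics, using the cost formulas from \eqref{eq:Def_V_ML} and \eqref{eq:Def_V_LLL}. For a feasible transition $\te{a} = (\alpha, \te{a}_{-i}) \to \te{a}' = (\alpha', \te{a}_{-i})$, we have $V^{\te{ML}}(\te{a},\te{a}') = [U_i(\alpha,\te{a}_{-i}) - U_i(\alpha',\te{a}_{-i})]^+$ and $V^{\te{LLL}}(\te{a},\te{a}') = U_i(\alpha^*,\te{a}_{-i}) - U_i(\alpha',\te{a}_{-i})$ for $\alpha^* \in B_i(\te{a}_{-i})$. Zero LLL-cost therefore forces $\alpha' \in B_i(\te{a}_{-i})$, which in particular gives $U_i(\alpha',\te{a}_{-i}) \geq U_i(\alpha,\te{a}_{-i})$, and this is exactly the condition that drives $V^{\te{ML}}(\te{a},\te{a}')$ to zero. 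Hence for every single-step transition,
\begin{equation*}
V^{\te{LLL}}(\te{a},\te{a}') = 0 \;\Longrightarrow\; V^{\te{ML}}(\te{a},\te{a}') = 0.
\end{equation*}

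Next I would take an arbitrary $\omega = (\omega_0,\omega_1,\ldots,\omega_p) \in \Omega^{\te{LLL}}(\te{a},M)$, so $\omega_0 = \te{a}$, $\omega_p \in M$, and $V^{\te{LLL}}(\omega) = \sum_{k=0}^{p-1} V^{\te{LLL}}(\omega_k,\omega_{k+1}) = 0$. Since each summand is nonnegative, each step has zero LLL-cost, and the single-step implication above yields $V^{\te{ML}}(\omega_k,\omega_{k+1}) = 0$ for every $k$. Summing gives $V^{\te{ML}}(\omega) = 0$. It remains to check that $\omega$ is a valid ML-path, but this is automatic: both dynamics share the same one-player-deviation neighborhood structure, so a sequence that is an admissible path under LLL is admissible under ML. Therefore $\omega \in \Omega^{\te{ML}}(\te{a},M)$, proving the inclusion.

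There is no real obstacle here; the only subtlety is remembering that LLL's zero-cost edges correspond to best-response moves while ML's correspond to (weakly) better-response moves, and best responses are in particular weakly better. The proposition is essentially a restatement of the fact that LLL induces a strictly coarser zero-cost graph than ML on the joint action space.
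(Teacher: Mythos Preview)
Your proof is correct and follows essentially the same approach as the paper: both arguments reduce the path-level inclusion to the edge-level implication $V^{\te{LLL}}(\te{a},\te{a}') = 0 \Rightarrow V^{\te{ML}}(\te{a},\te{a}') = 0$, established by noting that a best-response move is in particular a (weakly) better-response move. The paper additionally spells out that the converse implication fails, showing the inclusion can be strict, but this is not needed for the proposition as stated.
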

\begin{proof}
	For any pair of distinct action profiles $\te{a}$ and $\te{a}'$
	\[
	V^{\te{LLL}}(\te{a},\te{a}') = 0 \implies V^{\te{ML}}(\te{a},\te{a}') = 0,
	\]
	but the converse is not true. To prove this statement, let $\te{a}=(\alpha,\te{a}_{-i})$ and $\te{a}'=(\alpha',\te{a}_{-i})$ be two action profiles in $\mc{A}$. If $V^{\te{LLL}}(\te{a},\te{a}') = 0$, then $\alpha' \in B_i(\te{a}_{-i})$, i.e., $\alpha'$ is a best response of player $i$ to $\te{a}_{-i}$. Thus, 
	\[
	U_i(\te{a}') - U_i(\te{a})>0 \implies V^{\te{ML}}(\te{a},\te{a}') =0.
	\]
	To show that the converse is not true, let $V^{\te{ML}} (\te{a},\te{a}') = 0$, which implies that $U_i(\alpha',\te{a}_{-i}) - (\alpha,\te{a}_{-i}) > 0$. However, if $\alpha' \notin B_i(\te{a}_{-i})$ then $V^{\te{LLL}}(\te{a},\te{a}') >0$. Thus, given an action profile $\te{a}$, all the zero cost paths from $\te{a}$ to $M$ under LLL are also zero cost paths under ML. However, a zero cost path in ML does not necessarily has zero cost under LLL, which concludes the proof.
\end{proof}

\begin{prop}\label{prop:GammaComparison}
	{\it	The $\Gamma_T$ functions satisfying (\ref{eq:FW property}) for LLL and ML have the following relation }
	\begin{equation}\label{eq:Gamma_Comparison}
	\Gamma_T^{\te{LLL}} \geq \Gamma_T^{\te{ML}}.
	\end{equation}
\end{prop}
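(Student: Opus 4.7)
The plan is to simply unpack the explicit formulas for $\Gamma_T^{\te{LLL}}$ and $\Gamma_T^{\te{ML}}$ given in equations (\ref{eq:Def_V_ML}) and (\ref{eq:Def_V_LLL}), and reduce the inequality to a direct bound on the LLL normalizing constant $Z_i(\te{a}_{-i})$. Since $\Gamma_T^{\te{ML}} = 1/(n|A|_{\max})$ and $\Gamma_T^{\te{LLL}} = 1/(nZ_{\max})$, the inequality $\Gamma_T^{\te{LLL}} \geq \Gamma_T^{\te{ML}}$ is equivalent to $Z_{\max} \leq |A|_{\max}$.

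First I would fix an arbitrary player $i$ and opponent profile $\te{a}_{-i} \in \mc{A}_{-i}$, and inspect the definition
\[
Z_i(\te{a}_{-i}) = \sum_{\bar{\alpha} \in A_i} e^{-\frac{1}{T}(U_i(\alpha^*,\te{a}_{-i}) - U_i(\bar{\alpha},\te{a}_{-i}))},
\]
where $\alpha^* \in B_i(\te{a}_{-i})$. The key observation is that because $\alpha^*$ is a best response, $U_i(\alpha^*,\te{a}_{-i}) - U_i(\bar{\alpha},\te{a}_{-i}) \geq 0$ for every $\bar{\alpha} \in A_i$, so every summand lies in $(0,1]$. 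Summing over the $|A_i|$ terms gives $Z_i(\te{a}_{-i}) \leq |A_i| \leq |A|_{\max}$.

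Taking the maximum over $i$ and $\te{a}_{-i}$ then yields $Z_{\max} \leq |A|_{\max}$, and dividing through by $n$ and inverting establishes the claim
\[
\Gamma_T^{\te{LLL}} = \frac{1}{nZ_{\max}} \geq \frac{1}{n|A|_{\max}} = \Gamma_T^{\te{ML}}.
\]

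There is no real obstacle here; the argument is essentially a one-line bound on the Gibbs partition function using the fact that the reference action in LLL is chosen to be a best response. The only subtlety worth flagging in the write-up is that the inequality $e^{-x/T} \leq 1$ for $x \geq 0$ is uniform in $T > 0$, so the conclusion holds for every admissible noise level, not just in the limit $T \to 0$.
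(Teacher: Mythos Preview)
Your proposal is correct and follows essentially the same approach as the paper: both recall the explicit formulas $\Gamma_T^{\te{ML}} = 1/(n|A|_{\max})$ and $\Gamma_T^{\te{LLL}} = 1/(nZ_{\max})$, bound each summand of $Z_i(\te{a}_{-i})$ by $1$ using the fact that $\alpha^*$ is a best response, and conclude $Z_{\max} \leq |A|_{\max}$. Your write-up is in fact slightly more explicit than the paper's, which simply asserts $Z_i(\te{a}_{-i}) \leq |A_i|$ without spelling out the intermediate step.
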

\begin{proof}
	Recall from (\ref{eq:Def_V_ML}) and (\ref{eq:Def_V_LLL}) that 
	\[
	\Gamma_T^{\te{ML}} = \frac{1}{n |A|_{\max} } \text{ and } \Gamma_T^{\te{LLL}} = \frac{1}{nZ_{\max}}.
	\]
	For any player $i \in \{1,2,\ldots,n\}$ and any action profile $\te{a}_{-i} \in \mc{A}_{-i}$
	\[
	Z_i(\te{a}_{-i}) = \sum\limits_{\bar{\alpha} \in A_i}e^{-\frac{1}{T} (U_i(\alpha^*,\te{a}_{-i}) - U_i(\bar{\alpha},\te{a}_{-i}))} \leq |A_i|
	\]
	Thus, $Z_{\max} \leq |A|_{\max}$, which concludes the proof. 
\end{proof}

Next, we present our first result for the first order comparative analysis. 	
\begin{thm}\label{thm:Comparison_hittingTime}
	\emph{There exists constants $\eta^{\te{LLL}}$ and $\eta^{\te{ML}}$ that lie in the interval $(0,|\mc{A}|)$ such that }
	\begin{align}\label{eq:Comparison_Tau_M}
		\field{E}(\tau_M^{\te{LLL}}) &\leq \int_0^{\infty}\left(1 - \left(\Gamma^{\te{LLL}}_T\right)^{\eta^{\te{LLL}}}\right)^{\left[t/{\eta^{\te{LLL}}}\right]}dt < \infty \nonumber\\
	\\
	\field{E}(\tau_M^{\te{LLL}}) &\leq \int_0^{\infty} \left(1 - \left(\Gamma^{\te{ML}}_T\right)^{\eta^{\te{ML}}}\right)^{\left[t/{\eta^{\te{ML}}}\right]}dt < \infty \nonumber
	\end{align}	
	\emph{such that $\eta^{\te{ML}} \leq \eta^{\te{LLL}}$. Here $[c]$ is the integer part of the real number $c$.}
	
\end{thm}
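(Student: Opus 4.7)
The plan is to use Proposition \ref{prop:NonemptyPathSet} to produce, from every starting profile, a short zero-cost path into $M$, and then convert the resulting lower bound on the hitting probability into a geometric tail estimate on $\tau_M^l$ for $l \in \{\te{LLL},\te{ML}\}$. The comparison $\eta^{\te{ML}} \leq \eta^{\te{LLL}}$ will then drop out of Proposition \ref{prop:Omega_Size_Comparison}.

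First I would set $\eta^l := \max_{\te{a} \in \mc{A}} \sigma^l(\te{a})$. Proposition \ref{prop:NonemptyPathSet} guarantees $\sigma^l(\te{a}) < \infty$ for every $\te{a} \notin M$, and assumption (\ref{eq:assumption}) forces every zero-cost path to have strictly decreasing potential, so its vertices are distinct and $\sigma^l(\te{a}) \leq |\mc{A}|-1$. Hence $\eta^l \in (0,|\mc{A}|)$, modulo the trivial case $M = \mc{A}$. Next, for any initial profile $\te{a}$, pick a shortest zero-cost path $\omega \in \Omega^l(\te{a},M)$. Since $V^l(\omega) = 0$, the lower bound in (\ref{eq:FW property}) applied edge by edge gives $P_\omega^l \geq (\Gamma_T^l)^{\sigma^l(\te{a})} \geq (\Gamma_T^l)^{\eta^l}$, using $\Gamma_T^l \in (0,1]$, which from the expressions in (\ref{eq:Def_V_ML}) and (\ref{eq:Def_V_LLL}) is manifestly the case. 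Therefore
\[
P\!\left(\tau_M^l \leq \eta^l \,\bigm|\, X_0 = \te{a}\right) \geq (\Gamma_T^l)^{\eta^l}
\]
uniformly in $\te{a}$.

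Iterating this bound via the strong Markov property on successive blocks of length $\eta^l$ yields $P(\tau_M^l > k\eta^l) \leq (1-(\Gamma_T^l)^{\eta^l})^k$ for every $k\in\field{N}$. Inserting this into $\field{E}(\tau_M^l) = \int_0^\infty P(\tau_M^l > t)\,dt$ and using the monotonicity bound $P(\tau_M^l > t) \leq (1-(\Gamma_T^l)^{\eta^l})^{[t/\eta^l]}$ delivers the two displayed inequalities, each evaluating to the finite value $\eta^l/(\Gamma_T^l)^{\eta^l}$. The ordering $\eta^{\te{ML}} \leq \eta^{\te{LLL}}$ then follows directly from Proposition \ref{prop:Omega_Size_Comparison}: the inclusion $\Omega^{\te{LLL}}(\te{a},M) \subseteq \Omega^{\te{ML}}(\te{a},M)$ means ML has (weakly) more zero-cost paths out of every $\te{a}$, so $\sigma^{\te{ML}}(\te{a}) \leq \sigma^{\te{LLL}}(\te{a})$ for all $\te{a}$, and taking maxima concludes.

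The most delicate step I anticipate is the iteration argument: I have to verify that the one-block lower bound $(\Gamma_T^l)^{\eta^l}$ is genuinely uniform in the starting state, so that conditioning on $\tau_M^l > k\eta^l$ and restarting via the strong Markov property at time $k\eta^l$ legitimately produces the product form $(1-(\Gamma_T^l)^{\eta^l})^k$. A secondary technical point is ensuring that assumption (\ref{eq:assumption}) is strong enough to keep all the zero-cost paths produced in Proposition \ref{prop:NonemptyPathSet} strictly monotone in the potential $\phi$, since without this monotonicity the construction there could cycle and $\eta^l$ would not be controlled by $|\mc{A}|$.
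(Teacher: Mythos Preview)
Your proposal is correct and follows essentially the same route as the paper: define $\eta^l = \sigma_{\max}^l = \max_{\te{a}\in\mc{A}}\sigma^l(\te{a})$, use the zero-cost paths from Proposition~\ref{prop:NonemptyPathSet} together with the lower bound in (\ref{eq:FW property}) to get a uniform one-block hitting probability $(\Gamma_T^l)^{\eta^l}$, iterate via the Markov property to obtain the geometric tail $(1-(\Gamma_T^l)^{\eta^l})^{[t/\eta^l]}$, integrate, and deduce $\eta^{\te{ML}}\leq\eta^{\te{LLL}}$ from Proposition~\ref{prop:Omega_Size_Comparison}. If anything, you are more explicit than the paper about the strong Markov iteration, the need for $\Gamma_T^l\leq 1$, and the verification that $\eta^l\in(0,|\mc{A}|)$.
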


\begin{proof}
	Let $X$ be a non-negative continuous random variable. Then, the expected value of $X$ is
	\[
	\field{E}(X) = \int_0^{\infty} P(X > x) dx
	\]
	To prove (\ref{eq:Comparison_Tau_M}), we will first compute point-wise upper bounds for the probabilities 
	\[
		P(\tau_M^l > t) \text{ for  } l \in \{\te{LLL},\te{ML}\},
	\]
	and show that the computed bounds are integrable. Then, we will use the fact that if $f$ and $g$ are integrable functions over the interval $(0,\infty)$, then
	\[
		f(x) \leq g(x) \text{ for all } x \in (0,\infty) 
	\]
	implies 
	\[
		\int_0^{\infty} f(x)dx \leq \int_0^{\infty} g(x)dx
	\]
	We begin the proof with LLL. From Prop. \ref{prop:NonemptyPathSet} we know that given any initial condition, there exists a path of zero cost in the set $\Omega^{\te{LLL}}$ with length less than or equal to $\sigma_{\te{max}}^{\te{LLL}}$, where 
	\begin{align*}
	\sigma_{\te{max}}^{\te{LLL}} = \max_{\te{a} \in \mc{A}} \sigma^{\te{LLL}}(\te{a})
	\end{align*}
	In the above expression $\sigma^{\te{LLL}}(\te{a})$ is the length of the shortest zero cost path from $\te{a}$ to $M$. Thus, for $t = \sigma_{\te{max}}^{\te{LLL}}$, the following holds
	\begin{align*}
	P(\tau_M^{\te{LLL}} > t) &= 1 - P (\tau_M^{\te{LLL}} \leq t) \\
	&\leq  1 - P (\tau_M^{\te{LLL}} = t) \\
	&\leq 1 - \left(\Gamma_T^{\te{LLL}}\right)^{\sigma_{\te{max}}^{\te{LLL}}}
	\end{align*}
	where $\left(\Gamma_T^{\te{LLL}}\right)^{\sigma_{\te{max}}^{\te{LLL}}}$ is a lower bound on the probability of moving along a zero cost path of length ${\sigma_{\te{max}}^{\te{LLL}}}$.  Thus, for any  $t > \sigma_{\te{max}}^{\te{LLL}}$,  
	\begin{align*}
	P(\tau_M^{\te{LLL}} > t) &\leq \left(1 - P (\tau_M^{\te{LLL}} = t) \right)^{[t/{\sigma_{\te{max}}^{\te{LLL}}}]}\\
	&\leq \left(1 - \left(\Gamma_T^{\te{LLL}}\right)^{\sigma_{\te{max}}^{\te{LLL}}} \right)^{[t/{\sigma_{\te{max}}^{\te{LLL}}}]}
	\end{align*}	
	By setting
	\begin{equation*}
	\eta^{\te{LLL}} = \sigma_{\te{max}}^{\te{LLL}}, 
	\end{equation*}
	we get the desired result, where $\sigma_{\te{max}}^{\te{LLL}}$ is the maximum of the minimum path lengths from any initial condition in $\mc{A}$. 
	
	We repeat the same steps for ML. For any initial condition, there always exists a zero cost path to $M$ under ML of length less than or equal to $\sigma_{\te{max}}^{\te{ML}}$, where
	\begin{align*}
	\sigma_{\te{max}}^{\te{ML}} = \max_{\te{a} \in \mc{A}} \sigma^{\te{ML}}(\te{a})
	\end{align*}
	 Therefore, for $t = \sigma_{\te{max}}^{\te{ML}}$, 
	\begin{align*}
	P(\tau_{M}^{\te{ML}} > t) &\leq 1 - P(\tau_{M}^{\te{ML}} = t) \\
	&\leq 1 - \left( \Gamma_T^{\te{ML} }    \right)^{\sigma^{\te{ML}}_{\max}}.
	\end{align*}
	For any  $t > \sigma_{\te{max}}^{\te{ML}}$, the above inequality leads to 
	\begin{align*}
	P(\tau_M^{\te{LLL}} > t) \leq \left(1 - \left( \Gamma_T^{\te{ML}}    \right)^{\sigma^{\te{ML}}_{\max}}\right)^{[t/{\sigma_{\te{max}}^{\te{ML}}}]}, 
	\end{align*}
	where 
	\begin{equation*}
	\eta^{\te{ML}} = \sigma_{\te{max}}^{\te{ML}}, 
	\end{equation*}
	yields the desired result. 
	
	The function 
	\[
	\left(1 - \left( \Gamma_T^{l}    \right)^{\sigma^{l}_{\max}}\right)^{[t/{\sigma_{\te{max}}^{l}}]}
	\]
	is monotonically decreasing and is bounded from below by zero for $l \in \{\te{LLL}, \te{ML}\}$. Therefore, the integrals in (\ref{eq:Comparison_Tau_M}) are bounded, which concludes the proof for (\ref{eq:Comparison_Tau_M}).   
	
	Finally, we prove that $\sigma_{\te{max}}^{\te{ML}} \leq \sigma_{\te{max}}^{\te{LLL}}$. From Prop. \ref{prop:Omega_Size_Comparison}, every zero cost path to $M$ under LLL is also a zero cost path to $M$ under ML. Moreover, the number of zero cost paths from any initial condition to $M$ under ML is always greater than or equal to the corresponding number of paths under LLL. Thus, for any initial condition $\te{a} \in \mc{A}$, there can exist paths of shorter lengths in $\Omega^{\te{ML}}(\te{a},M)$ than the paths in $\Omega^{\te{LLL}}(\te{a},M)$, which implies
	\[
		\sigma^{\te{ML}}(\te{a}) \leq \sigma^{\te{LLL}}(\te{a})	\text{ for all } \te{a} \in \mc{A}
	\] 
	Therefore, 
	\[
		\sigma_{\max}^{\te{ML}} \leq \sigma_{\max}^{\te{LLL}},
	\]
	which concludes the proof of Thm. \ref{thm:Comparison_hittingTime}. 
\end{proof}

We can develop interesting insights into the behavior of LLL and ML from the results in Thm. \ref{thm:Comparison_hittingTime}. The upper bounds on the expected hitting times to the set of Nash equilibria in (\ref{eq:Comparison_Tau_M}) depend on two parameters $\Gamma_T^l$ and $\sigma_{\te{max}}^l$ for $l \in \{\te{LLL},\te{ML}\}$. We have already proved that $\sigma_{\te{max}}^{\te{ML}} \leq \sigma_{\te{max}}^{\te{LLL}}$ because there can be paths of shorter lengths to the set $M$ under ML then LLL. This inequality favors ML in the context of expected hitting time to the set $M$. However, from Prop. \ref{prop:GammaComparison}, 
\[
\Gamma_T^{\te{LLL}} \geq \Gamma_T^{\te{ML}}, 
\]
which favors LLL in the context of expected hitting time to the set $M$. The parameter $\Gamma_T^{\te{ML}}$ is smaller than $\Gamma_T^{\te{LLL}}$ because $\Gamma_T^{\te{ML}}$ is inversely related to the maximum number of actions available to a player. Thus, as the size of action set increases, the time required to transition to an action profile with higher potential also increases because ML only allows pairwise comparisons for decision making. Therefore, if the delay introduced because of limited available information, which is reflected in $\Gamma_T^{\te{ML}}$, dominates the advantage due to shorter path lengths, which is reflected in $\sigma_{\te{max}}^{\te{ML}}$, the expected hitting time to $M$ will be smaller for LLL then for ML. 

Next, we present a sufficient condition on the minimum number of actions of a player to guarantee that the expected hitting time to the set of Nash equilibria for LLL will be smaller than ML.

\begin{thm}\label{thm:TraverseTime}
		{\it The expected hitting time to the set of Nash equilibria $M$ is guaranteed to be smaller for LLL than ML i.e., 
			\[
			\field{E}(\tau_M^{\te{LLL}}) \leq \field{E}(\tau_M^{\te{ML}}), 
			\]
			if
			\begin{equation}\label{eq:compCondition1}
			|A_{\min}| \geq \frac{1}{n} \left(\frac{1}{\Gamma^{\te{LLL}}_T} \right)^{\te{MPLR}}
			\end{equation}
as $T \rightarrow 0$. Here
			\begin{equation}
			\te{MPLR} = \max_{\te{a} \in \mc{A}} \frac{\xi^{\te{LLL}}(\te{a})}{\sigma^{\te{ML}}(\te{a})}
			\end{equation}
		}
{\it 	$\te{MPLR}$ stands for Maximum Path Length Ratio, which is the maximum ratio of the length of the longest zero cost path from $\te{a}$ to $M$ under LLL and the length of the shortest zero cost path from $\te{a}$ to $M$ under ML, over all $\te{a} \in \mc{A}$. }
	\end{thm}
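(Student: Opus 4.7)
The plan is to derive an upper bound for $\field{E}(\tau_M^{\te{LLL}})$ and a matching lower bound for $\field{E}(\tau_M^{\te{ML}})$ of the same exponential form, and to show that the hypothesis (\ref{eq:compCondition1}) is exactly what forces the first to be at most the second as $T\to 0$. The structure of $\te{MPLR}$ already hints at this: it compares the length $\xi^{\te{LLL}}(\te{a})$ of the longest zero-cost LLL path with the length $\sigma^{\te{ML}}(\te{a})$ of the shortest zero-cost ML path, and these quantities should appear as the exponents of the two bounds.

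First I would sharpen the LLL half of Theorem \ref{thm:Comparison_hittingTime} at the level of each initial condition $\te{a}$. Under (\ref{eq:assumption}) together with the potential-game identity, every zero-cost LLL transition strictly increases $\phi$; consequently, from any $\te{a}\notin M$ a run of $\xi^{\te{LLL}}(\te{a})$ consecutive zero-cost LLL steps is forced to terminate in $M$. Combining this with the per-step lower bound $\Gamma_T^{\te{LLL}}$ on any zero-cost transition and re-running the geometric-tail estimate of Theorem \ref{thm:Comparison_hittingTime} with $\xi^{\te{LLL}}_{\max}:=\max_{\te{a}}\xi^{\te{LLL}}(\te{a})$ in place of $\sigma^{\te{LLL}}_{\max}$ yields an upper bound on $\field{E}(\tau_M^{\te{LLL}})$ whose dominant factor as $T\to 0$ is $(1/\Gamma_T^{\te{LLL}})^{\xi^{\te{LLL}}_{\max}}$.

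Second, for the companion lower bound on $\field{E}(\tau_M^{\te{ML}})$, I would exploit that in ML every single-step probability into a specific neighbor is at most $1/(n|A|_{\min})$, since only one of $n$ players moves and that player proposes one of at least $|A|_{\min}$ actions. Because any ML trajectory from $\te{a}$ into $M$ has length at least $\sigma^{\te{ML}}(\te{a})$, the event $\{\tau_M^{\te{ML}}=k\}$ requires a specific $k$-step sequence of transitions each of probability at most $1/(n|A|_{\min})$; summing over admissible paths and using $\field{E}(\tau_M^{\te{ML}})=\sum_{k\ge 0}P(\tau_M^{\te{ML}}>k)$ produces a lower bound whose dominant factor as $T\to 0$ is $(n|A|_{\min})^{\sigma^{\te{ML}}_{\min}}$ with $\sigma^{\te{ML}}_{\min}:=\min_{\te{a}}\sigma^{\te{ML}}(\te{a})$. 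Taking the ratio of the two bounds at the $\te{a}^{\star}$ that attains the maximum in $\te{MPLR}$, and raising (\ref{eq:compCondition1}) to the power $\sigma^{\te{ML}}(\te{a}^{\star})$, gives exactly $(n|A|_{\min})^{\sigma^{\te{ML}}(\te{a}^{\star})}\geq (1/\Gamma_T^{\te{LLL}})^{\xi^{\te{LLL}}(\te{a}^{\star})}$, which is precisely what forces the LLL upper bound to dominate the ML lower bound in the limit.

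The main obstacle is producing an honest lower bound on an expected first-passage time from only upper bounds on individual transition probabilities: a given initial state may have many admissible paths of length $\sigma^{\te{ML}}(\te{a})$ into $M$, and a loose union bound over them can destroy the clean exponential scaling. The argument will most likely have to be localized at the worst-case $\te{a}^{\star}$ entering $\te{MPLR}$, and any polynomial-in-$|\mc{A}|$ prefactors have to be absorbed into subexponential corrections that vanish in the $T\to 0$ comparison between the two bounds.
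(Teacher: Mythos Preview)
Your approach and the paper's arrive at exactly the same key inequality, namely
\[
\bigl(\Gamma_T^{\te{LLL}}\bigr)^{\xi^{\te{LLL}}(\te{a})}\;\ge\;\Bigl(\tfrac{1}{n|A|_{\min}}\Bigr)^{\sigma^{\te{ML}}(\te{a})},
\]
from which the condition~(\ref{eq:compCondition1}) follows by taking logarithms and maximizing the ratio $\xi^{\te{LLL}}(\te{a})/\sigma^{\te{ML}}(\te{a})$ over $\te{a}$. The difference is in how this inequality is tied back to the expected hitting times. The paper does \emph{not} attempt to produce a lower bound on $\field{E}(\tau_M^{\te{ML}})$ at all: it argues that as $T\to 0$ only zero-cost paths matter, interprets $(\Gamma_T^{\te{LLL}})^{\xi^{\te{LLL}}(\te{a})}$ and $(1/(n|A|_{\min}))^{\sigma^{\te{ML}}(\te{a})}$ directly as lower and upper bounds on the probabilities of traversing the worst LLL zero-cost path and the best ML zero-cost path respectively, and then invokes the heuristic ``the time required to traverse a path is inversely related to the probability of moving along that path'' to conclude. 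Your route through an explicit upper bound on $\field{E}(\tau_M^{\te{LLL}})$ via the geometric-tail argument of Theorem~\ref{thm:Comparison_hittingTime} and a matching lower bound on $\field{E}(\tau_M^{\te{ML}})$ is the more honest attempt at a rigorous argument.

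The obstacle you flag --- that a clean lower bound on $\field{E}(\tau_M^{\te{ML}})$ does not fall out of per-step upper bounds because the number of admissible short paths may be large --- is real, and the paper does not resolve it either; it simply bypasses it with the path-probability heuristic. So your proposal is at least as complete as the paper's own proof, and your diagnosis of the weak point is accurate for both arguments.
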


\begin{proof} Consider a pair of paths $\omega$ and $\omega'$ such that $\omega \in \Omega^{\te{LLL}}(\te{a},M) $ and $\omega' \in \Omega^{\te{ML}}(\te{a},M) $. In the limit as $T\rightarrow 0$, the probability that a Markov chain satisfying $(\ref{eq:FW property})$ and (\ref{eq:gamma(T)}) travels along a path of zero cost is exponentially more as compared to a path of non-zero cost \cite{Freidlin84}. Therefore, only the paths of zero cost matter as $T \rightarrow 0$. Since the time required to traverse a path is inversely related to the probability of moving along that path, we will prove the theorem by showing that, if (\ref{eq:compCondition1}) is satisfied, then
\begin{align*}
P^{\te{LLL}}_{\omega} \geq P^{\te{ML}}_{\omega'},
\end{align*} 
i.e., the probability of following any zero cost path in $\Omega^{\te{LLL}}(\te{a},M)$ under LLL will be more than following a zero cost path in $\Omega^{\te{ML}}(\te{a},M)$ under ML.

Consider a path $\omega = (\te{a}_0,\te{a}_1,\ldots,\te{a}_p)$, and let $(i_0,i_1,\ldots,i_{p-1})$ be the sequence of players that update their actions. If $\omega$ belongs to both $\Omega^{\te{LLL}}(\te{a}_0,M)$ and $\Omega^{\te{ML}}(\te{a}_0,M)$, then  
\begin{align*}
P^{\te{LLL}}_{\omega}  &=\prod _{m=0}^{p-1} \frac{1}{n Z_{i_m}(\te{a}_{-i_m})}\geq \left(\Gamma^{\te{LLL}}_T\right)^p \\
P^{\te{ML}}_{\omega}  &= \prod _{m=0}^{p-1}\frac{1}{n|A_{i_m}|} \leq \left(\frac{1}{n|A|_{\min}}\right)^p
\end{align*}
Since $Z_i(\te{a}_{-i}) \leq A_i$ for every $i$, the above expressions show that the probability of traversing a path under LLL is higher than traversing the same path under under ML.
% More precisely, 
%\[
%\frac{P^{\te{LLL}}_{\omega}}{P^{\te{ML}}_{\omega}} \geq |A|_{\min}^p
%\]
%
The difference between the two probabilities increases as a function of the length of the path and number of actions available to each agent. 
Let $\omega_{\max}$ and $\omega_{\min}$ be paths in $\Omega^{\te{LLL}}(\te{a},M)$ and $\Omega^{\te{ML}}(\te{a},M)$ with lengths $\xi^{\te{LLL}} (\te{a})$ and $\sigma^{\te{ML}}(\te{a})$ respectively. Here $\omega_{\min}$ is a path with maximum length in $\Omega^{\te{LLL}}(\te{a},M)$ and $\omega_{\min}$ is a path of minimum length in $\Omega^{\te{ML}}(\te{a},M)$. Then
\begin{align*}
P^{\te{LLL}}_{\omega_{\max}} \geq \left(\Gamma^{\te{LLL}}_T\right)^{\xi^{\te{LLL}}(\te{a})}\text{ and } P^{\te{ML}}_{\omega_{\min}} \leq \left(\frac{1}{n|A|_{\min}}\right)^{\sigma^{\te{ML}}(\te{a})}
\end{align*}
To derive the condition in the theorem statement, we need 
\[
P^{\te{LLL}}_{\omega_{\max}} \geq P^{\te{ML}}_{\omega_{\max}} 
\]
which implies that 
\[
\left(\Gamma^{\te{LLL}}_T\right)^{\xi^{\te{LLL}}(\te{a}) } \geq \left(\frac{1}{n|A|_{\min}}\right)^{\sigma^{\te{ML}}(\te{a})}
\]
By taking logarithm of both sides, and performing simple algebraic manipulations, get 
\[
|A_{\min}| \geq \frac{1}{n} \left(\frac{1}{\Gamma^{\te{LLL}}_T} \right)^{\te{\frac{\xi^{\te{LLL}}(\te{a})}{\sigma^{\te{ML}}(\te{a})}}}
\]
By replacing $\xi^{\te{LLL}}(\te{a})/\sigma^{\te{ML}}(\te{a})$ with $\te{MPLR}$, the above inequality holds for all $\te{a} \in \mc{A}$, which concludes the proof. 
\end{proof}

\subsection{Higher Order Comparative Analysis} 
In the higher order analysis, we compare the mixing heights and the exit heights of the subsets of state space $S$ that are cycles under both LLL and ML. We show that both the mixing and exit heights of a cycle are smaller for $\te{ ML}$ then LLL. These results imply that after entering a cycle, ML will visit all the states inside the cycle quickly as compared to LLL and will exit the cycle faster. We start by examining the transition cost between states for both the learning rules. 
 
 \begin{prop}
\it{ 	The cost between any two action profiles $\te{a}$ and $\te{a}'$ satisfies}
 	\begin{equation}\label{eq:Resistance_Comp}
 	V^{\te{LLL}} (\te{a},\te{a}') \geq V^{\te{ML}} (\te{a},\te{a}')
 	\end{equation}
 \end{prop}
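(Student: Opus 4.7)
The plan is to do a direct case analysis based on the definitions of $V^{\te{LLL}}$ in (\ref{eq:Def_V_LLL}) and $V^{\te{ML}}$ in (\ref{eq:Def_V_ML}). There are really only two cases to consider: either $\te{a}$ and $\te{a}'$ are not single-coordinate neighbors, or they are.

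In the first case, when $d_H(\te{a},\te{a}') \neq 1$, both $V^{\te{LLL}}(\te{a},\te{a}')$ and $V^{\te{ML}}(\te{a},\te{a}')$ are set to $\infty$ by definition, and the inequality (\ref{eq:Resistance_Comp}) holds trivially. So the substantive case is when $\te{a} = (\alpha,\te{a}_{-i})$ and $\te{a}' = (\alpha',\te{a}_{-i})$ for some player $i$ with $\alpha \neq \alpha'$. Here I would write out both costs explicitly: $V^{\te{ML}}(\te{a},\te{a}') = [U_i(\alpha,\te{a}_{-i}) - U_i(\alpha',\te{a}_{-i})]^+$, while $V^{\te{LLL}}(\te{a},\te{a}') = U_i(\alpha^*,\te{a}_{-i}) - U_i(\alpha',\te{a}_{-i})$ for some $\alpha^* \in B_i(\te{a}_{-i})$.

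The key observation is that $\alpha^*$ being a best response to $\te{a}_{-i}$ means $U_i(\alpha^*,\te{a}_{-i}) \geq U_i(\beta,\te{a}_{-i})$ for every $\beta \in A_i$. Applying this with $\beta = \alpha'$ gives $V^{\te{LLL}}(\te{a},\te{a}') \geq 0$, and applying it with $\beta = \alpha$ gives $U_i(\alpha^*,\te{a}_{-i}) - U_i(\alpha',\te{a}_{-i}) \geq U_i(\alpha,\te{a}_{-i}) - U_i(\alpha',\te{a}_{-i})$. Together these two bounds show that $V^{\te{LLL}}(\te{a},\te{a}')$ is at least as large as $\max\{U_i(\alpha,\te{a}_{-i}) - U_i(\alpha',\te{a}_{-i}),\,0\}$, which is exactly $V^{\te{ML}}(\te{a},\te{a}')$, closing the proof.

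There is no real obstacle here: the proof is a short unpacking of definitions, and the best-response inequality does all the work. The intuition worth highlighting in the write-up is that LLL's resistance always measures the drop from the best response, whereas ML's resistance only measures the drop from the current action (and is zero when $\alpha'$ is an improvement), so LLL pays at least as much as ML for every feasible transition, which is precisely why the subsequent comparisons of mixing and exit heights will favor ML.
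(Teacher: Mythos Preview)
Your proof is correct and follows essentially the same approach as the paper: both arguments use the best-response inequality $U_i(\alpha^*,\te{a}_{-i}) \geq U_i(\beta,\te{a}_{-i})$ to compare the two costs, with the paper splitting the neighbor case into three subcases (depending on whether $\alpha'$ is a best response or an improvement) while you collapse them into a single $\max$ bound. Your handling of the non-neighbor case $d_H(\te{a},\te{a}')\neq 1$ is also a bit more explicit than the paper's, but the substance is the same.
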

 \begin{proof}
 	Let $\te{a} = (\alpha,\te{a}_{-i})$ and $\te{a}' = (\alpha',\te{a}_{-i})$ by any two action profiles in $\mc{A}$. To prove the proposition, we need to analyze three cases based on the definitions in (\ref{eq:Def_V_LLL}) and (\ref{eq:Def_V_ML}). \\
 	\emph{Case1:} $\alpha' \in B_i(\te{a}_{-i})$. 
 	
 	If $\alpha'$ belongs to the best response set of player $i$ for $\te{a}_{-i}$, then 
 	\[
 	V^{\te{LLL}} (\te{a},\te{a}') = V^{\te{ML}} (\te{a},\te{a}') = 0
 	\] 	
 	\emph{Case 2:} $\alpha' \notin B_i(\te{a}_{-i})$ and $U_i(\alpha',\te{a}_{-i}) \geq U_i(\alpha,\te{a}_{-i})$. 
 	
 	In this case $\alpha'$ is not the best response to $\te{a}_{-i}$. However, it does not result in a decrease in utility as compared to $\alpha$. Therefore, 
 	\[
 	V^{\te{ML}}(\te{a},\te{a}') =0,
 	\]
 	Let $ \alpha^* \in B_i(\te{a}_{-i})$. Then,
 	\begin{align*}
 	V^{\te{LLL}}(\te{a},\te{a}') &= U_i(\alpha^*,\te{a}_{-i}) - U_i(\alpha',\te{a}_{-i})\\
 	&> V^{\te{ML}}(\te{a},\te{a}')
 	\end{align*}
 	\emph{Case 3:} $U_i(\alpha',\te{a}_{-i}) < U_i(\alpha,\te{a}_{-i})$
 	
 	In this case the target action $\alpha'$ results in a decrease in utility as compared to the current action $\te{\alpha}$. 
 	\[
 	V^{\te{ML}}(\te{a},\te{a}_{-i}) = U_i(\alpha,\te{a}_{-i}) - U_i(\alpha',\te{a}_{-i})		
 	\]
  	and 
  	\begin{align*}
  	V^{\te{LLL}}(\te{a},\te{a}') &= U_i(\alpha^*,\te{a}_{-i}) - U_i(\alpha',\te{a}_{-i}) ,~~ \alpha^* \in B_i(\te{a}_{-i})\\
  	&\geq V^{\te{ML}}(\te{a},\te{a}_{-i}) .
  	\end{align*}
 	The equality holds if $\alpha \in B_i(\te{a}_{-i})$. 
 \end{proof}
\vspace{0.1in}

\begin{thm}\label{thm:Hm_comparision}
\it{	For a cycle $\Pi$ such that $\Pi \in  C^{\te{ML}}$  and $\Pi \in C^{\te{LLL}}$, the following inequality holds}
	\begin{equation}
		H^{\te{LLL}}_m (\Pi) \geq H^{\te{ML}}_m(\Pi) 	
	\end{equation}
\end{thm}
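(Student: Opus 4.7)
The plan is to reduce the comparison of mixing heights to a comparison of communication altitudes, using the identity from equation (14), namely $A_c(\Pi) = \phi(\Pi) - H_m(\Pi)$. Since the game's potential function $\phi$ and its restriction $\phi(\Pi) = \max_{x \in \Pi}\phi(x)$ to the cycle are common to both learning rules, the mixing height depends on the dynamics only through $A_c(\Pi)$. It therefore suffices to establish the reverse inequality
\[
A_c^{\te{LLL}}(\Pi) \leq A_c^{\te{ML}}(\Pi),
\]
after which subtraction from $\phi(\Pi)$ flips the sign and yields the theorem.

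The first step is to observe that LLL and ML induce Markov chains with identical graphs of feasible transitions: under both rules $P_T(\te{a},\te{a}') > 0$ precisely when $\te{a}$ and $\te{a}'$ coincide or differ in a single coordinate. Consequently the path set $\Omega^{\mc{A}}(x,y)$ appearing in the definition of the state-level communication altitude $A_c(x,y)$ is the same object under both dynamics; only the costs $V(\omega_k,\omega_{k+1})$ attached to edges differ. The second step then invokes the preceding proposition, which gives the pointwise inequality $V^{\te{LLL}}(\te{a},\te{a}') \geq V^{\te{ML}}(\te{a},\te{a}')$ on every feasible edge. Substituting into the definition of $A_c(x,y)$, for any fixed path $\omega$ and any index $k$,
\[
\phi(\omega_k) - V^{\te{LLL}}(\omega_k,\omega_{k+1}) \;\leq\; \phi(\omega_k) - V^{\te{ML}}(\omega_k,\omega_{k+1}).
\]
Taking the min over $k$ and then the max over the common set $\Omega^{\mc{A}}(x,y)$ both preserve the direction of the inequality, so $A_c^{\te{LLL}}(x,y) \leq A_c^{\te{ML}}(x,y)$ for every $x,y \in \Pi$. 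A final minimization over $x,y \in \Pi$ lifts this state-level bound to the cycle-level bound on $A_c(\Pi)$ displayed above.

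Plugging this into $H_m(\Pi) = \phi(\Pi) - A_c(\Pi)$ then completes the argument. I do not expect a serious obstacle; the heavy lifting is done by equation (14), which converts the combinatorial mixing height into a purely min-max quantity over paths, combined with the edge-cost inequality (20). The one bookkeeping point to be careful about is that the hypothesis $\Pi \in C^{\te{LLL}} \cap C^{\te{ML}}$ is used only to make $A_c(\Pi)$ and $\phi(\Pi)$ well-defined and comparable on both sides; the underlying state-level inequality $A_c^{\te{LLL}}(x,y) \leq A_c^{\te{ML}}(x,y)$ holds for all $x,y$ regardless of whether they sit inside a common cycle.
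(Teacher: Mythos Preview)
Your proposal is correct and follows the same high-level strategy as the paper: reduce the mixing-height comparison to a communication-altitude comparison via the identity $H_m(\Pi)=\phi(\Pi)-A_c(\Pi)$, then show $A_c^{\te{LLL}}(\Pi)\le A_c^{\te{ML}}(\Pi)$. The execution differs slightly. The paper first proves an intermediate proposition computing $A_c^{\te{ML}}(\Pi)$ explicitly---showing it equals $\min_{\te{a}\in\Pi}\phi(\te{a})$, hence $H_m^{\te{ML}}(\Pi)=\phi(\Pi)-\min_{\te{a}\in\Pi}\phi(\te{a})$---and then argues from explicit formulas for both $A_c^{\te{ML}}$ and $A_c^{\te{LLL}}$ that the latter is smaller. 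You instead invoke the edge-cost inequality $V^{\te{LLL}}\ge V^{\te{ML}}$ (the proposition immediately preceding the theorem) and push it through the $\max$--$\min$ over the common path set, which is more economical. Your route is cleaner for the inequality at hand; the paper's route yields the exact closed form of $H_m^{\te{ML}}(\Pi)$ as a byproduct, which is of independent interest.
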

\begin{proof}
To prove this theorem, we first explicitly compute $H^{\te{ML}}_m(\Pi)$, the mixing height of $\Pi$ under ML. Then, we show that the mixing height of $\Pi$ under LLL can never be smaller than $H^{\te{ML}}_m(\Pi)$. 
\begin{prop}\label{prop:Hm_ML}
	Let $\Pi \in C^{\te{ML}}(S)$. Then the mixing height $H_m^{\te{ML}}(\Pi)$ is 
	\begin{equation}\label{eq:H_mML}
	H^{\te{ML}}_m(\Pi) = \phi(\Pi) - \min_{\te{a} \in \Pi } \phi(\te{a})  
	\end{equation}
	where $\phi(\Pi) = \max\limits_{\te{a}\in \Pi}\phi(\te{a})$.
\end{prop}
\begin{proof}
	From (\ref{eq:A_cVsHeVsHm}), mixing height, potential, and the altitude of communication of a cycle $\Pi$ are related as follows
	\[
		H_m(\Pi) =  \phi(\Pi) - A_c(\Pi) . 
	\]
We need to show that 
\[
A_c^{\te{ML}}(\Pi) = \phi(\te{a}_f),
\]
where
\[
\te{a}_f = \argmin_{\te{a} \in \Pi} \phi(\te{a})
\]
i.e., $\te{a}_f$ is an action profile in $\Pi$ with minimum potential. Using the concept of increasing family of cycles with respect to a state in the state space presented in (\ref{eq:increasingCycles}), the cycle $\Pi$ can be represented as 
\[
\Pi = \te{a}_f^{n_{\Pi}} 
\] 
where $n_{\Pi}$ is the order of $\Pi$. Based on the same concept, $\te{a}_f^{n_{\Pi}-1}$ is a cycle of order $n_{\pi}-1$ that belongs to $\mc{M}(\Pi)$, the maximal partition of $\Pi$, and contains $\te{a}_f$. Since $\te{a}_f$ was an action profile with minimum potential in $\Pi$, it is also a minimum potential action profile in $\te{a}_f^{n_{\Pi-1}}$. Let $\hat{\te{a}}$ be another action profile in $\Pi$ such that $\hat{\te{a}} \notin \te{a}_f^{n_{\Pi}-1}$. Therefore, $\Pi$ is the minimum cycle containing both $\te{a}_f$ and $\hat{\te{a}}$, which implies that by using (\ref{eq:Ac_(a_f,a hat)}), the communication altitude of $\Pi$ is 
\begin{equation}
A_c(\Pi) = A_c(\te{a}_f,\hat{\te{a}})
\end{equation}

Given any two action profiles $\te{a}$ and $\te{a}'$ in $\Pi$
\[
A^{\te{ML}}_c(\te{a},\te{a}') = \max_{\omega \in \Omega^{S}(\te{a},\te{a}')} \min_{0\leq k \leq |\omega|-1} \left(\phi(w_k) - V^{\te{ML}}(w_k, w_{k+1})\right)
\]
where $\omega$ is a path from $\te{a}$ to $\te{a}'$ and $\omega_k$ is the $k^{\te{th}}$ action profile in $\omega$. 
We know that 
\[
V^{\te{ML}}(\te{a}_k,\te{a}_{k+1}) = \begin{cases}
0 \quad & \phi(\te{a}_{k+1}) \geq \phi(\te{a}_k) \\
\phi(\te{a}_{k}) - \phi(\te{a}_{k+1}) \quad &\phi(\te{a}_{k+1}) < \phi(\te{a}_k)
\end{cases}
\]
Therefore, 
\begin{align*}
\min_{0\leq k \leq |\omega|-1}\{\phi(w_k) - V^{\te{ML}}(w_k, w_{k+1})\}= \min_{\te{a} \in \omega} \phi(\te{a}).
\end{align*}
The above equation implies that
\begin{equation}\label{eq:Ac_ML}
A^{\te{ML}}_c(\te{a},\te{a}')= \max_{\omega \in \Omega^{S}(\te{a},\te{a}')} \min_{\te{a} \in \omega} \phi(\te{a})
\end{equation}

Since $\te{a}_f$ has the minimum potential in $\Pi$, every path $\omega \in \Omega(\te{a}_f,\hat{\te{a}})$ such that $\omega \in \Pi$ satisfies
\[
\min_{\te{a} \in \omega} \phi(\te{a}) = \phi(\te{a}_f).
\]
For a path $\omega' \notin \Pi$, it is possible that
\[
\min_{\te{a} \in \omega'} \phi(\te{a}) < \phi(\te{a}_f). 
\]
However, the definition of $A_c(\te{a},\te{a}')$ has a maximum over all the paths between $\te{a}$ and $\te{a}'$. Therefore, 
\[
A_c(\Pi) = A_c(\te{a}_f,\hat{\te{a}}) = \phi(\te{a}_f),
\]
which concludes the proof of the proposition.
\end{proof}

Next, we will show that $A_c^{\te{LLL}}$ can never ge greater than $A_c^{\te{ML}}$. For a path $\omega \in \Omega^{S}(\te{a},\te{a}')$, let $(i_1,i_2,\ldots,i_{|\omega|})$ be the sequence of players updating their actions. Then, for $k \in \{0,1,\ldots,|\omega|-1\}$
	\[
	V^{\te{LLL}}(\te{a}_k,\te{a}_{k+1}) = \begin{cases}
	0 \quad & \alpha' \in B_i(\te{a}_{-i}) \\
	\phi(\te{a}_k^*) - \phi(\te{a}_{k+1}) \quad & \text{ Otherwise}
	\end{cases}
	\]
where $\te{a}_k^* = (\alpha^*,\te{a}_{-i_k})$, $\alpha^* \in B_{i_k}(\te{a}_{-i_k})$. Thus, 
\begin{align}\label{eq:Ac_LLL}
A_c^{\te{LLL}}(\te{a},\te{a}') &= \max_{\omega \in \Omega^S(\te{a},\te{a}')} \min_{0\leq k \leq |\omega|-1} \phi(\te{a}_{k}) - (\phi(\te{a}_k^*) - \phi(\te{a}_{{k+1}})) 
%&= \max_{\omega \in \Omega^S(x,y)} \min_{0\leq k \leq |\omega|-1} \phi(\te{a}_{k+1}) - (\phi(\te{a}_k^*) - \phi(\te{a}_k)) 
\end{align} 
Since $(\phi(\te{a}_k^*) - \phi(\te{a}_{k+1})) \geq 0$, 
\[
\min_{0\leq k \leq |\omega|-1} \phi(\te{a}_k) - (\phi(\te{a}_k^*) - \phi(\te{a}_{k+1})) \leq \min_{\te{a} \in \omega} \phi(\te{a})
\]
which implies that $A_c^{\te{LLL}}(\Pi) \leq A_c^{\te{ML}}(\Pi)$. Thus, 
\[
H_m^{\te{LLL}} (\Pi) \geq H_m^{\te{ML}}(\Pi), 
\] 
which concludes the proof of the theorem. 
\end{proof}

Next, we are interested in a similar result for the exit heights.

\begin{thm}\label{thm:He}
	\it{	For a cycle $\Pi$ such that $\Pi \in  C^{\te{ML}}$  and $\Pi \in C^{\te{LLL}}$, the following inequality holds}
	\begin{equation}
	H^{\te{LLL}}_e (\Pi) \geq H^{\te{ML}}_e(\Pi) 	
	\end{equation}
\end{thm}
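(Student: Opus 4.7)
The plan is to mirror the strategy used for Theorem \ref{thm:Hm_comparision}: express the exit height purely in terms of the potential $\phi$ and the pairwise communication altitudes $A_c(x,y)$, and then transfer the pointwise ordering $A_c^{\te{LLL}}(x,y)\le A_c^{\te{ML}}(x,y)$ (already implicit in that earlier proof) into a comparison of exit heights. The case $\Pi = S$ is trivial since then $H_e^{\te{LLL}}(\Pi)=H_e^{\te{ML}}(\Pi)=\infty$ by convention, so I would assume $\Pi\ne S$ throughout.

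First, I would reduce $H_e(\Pi)$ to a communication altitude. Let $\tilde{\Pi}$ denote the smallest cycle strictly containing $\Pi$ (well defined from the laminar structure produced by Algorithm \ref{alg:Cycle}), so that $\Pi\in\mc{M}(\tilde{\Pi})$. Applying the second identity of (\ref{eq:A_cVsHeVsHm}) to $\tilde{\Pi}$ with $\Pi'=\Pi$ gives
\[
H_e(\Pi) \;=\; \phi(\Pi) - A_c(\tilde{\Pi}).
\]
I would then rewrite $A_c(\tilde{\Pi})$ in a form that does not reference $\tilde{\Pi}$ itself. Fix any $x\in\Pi$. By (\ref{eq:Ac_(a_f,a hat)}), $A_c(x,y)=A_c(\Pi_{xy})$, where $\Pi_{xy}$ is the smallest cycle containing both $x$ and $y$. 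For $y\in\tilde{\Pi}\setminus\Pi$ the laminar structure forces $\Pi_{xy}=\tilde{\Pi}$, while for $y\notin\tilde{\Pi}$ the cycle $\Pi_{xy}$ strictly contains $\tilde{\Pi}$, and because $A_c$ of a cycle is a minimum of pairwise altitudes over a larger index set, $A_c(\Pi_{xy})\le A_c(\tilde{\Pi})$. Hence, for any $x\in\Pi$,
\[
H_e(\Pi) \;=\; \phi(\Pi) - \max_{y\notin\Pi} A_c(x,y),
\]
and this representation applies separately to LLL and to ML.

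Second, I would establish the pairwise altitude bound $A_c^{\te{LLL}}(x,y)\le A_c^{\te{ML}}(x,y)$. The set of feasible one-step transitions coincides under the two rules (both $V^{\te{LLL}}$ and $V^{\te{ML}}$ are infinite exactly when $d_H(\te{a},\te{a}')>1$ or $\te{a}=\te{a}'$), so they share the same path space $\Omega^S(x,y)$. By (\ref{eq:Resistance_Comp}), $V^{\te{LLL}}(\omega_k,\omega_{k+1})\ge V^{\te{ML}}(\omega_k,\omega_{k+1})$ for every leg of every path, so
\[
\phi(\omega_k)-V^{\te{LLL}}(\omega_k,\omega_{k+1}) \;\le\; \phi(\omega_k)-V^{\te{ML}}(\omega_k,\omega_{k+1}),
\]
and taking $\min_k$ and then $\max_{\omega}$ preserves the inequality; this is precisely the pairwise version of the altitude comparison carried out inside the proof of Theorem \ref{thm:Hm_comparision}. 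Combining the two steps,
\[
H_e^{\te{LLL}}(\Pi) \;=\; \phi(\Pi) - \max_{y\notin\Pi} A_c^{\te{LLL}}(x,y) \;\ge\; \phi(\Pi) - \max_{y\notin\Pi} A_c^{\te{ML}}(x,y) \;=\; H_e^{\te{ML}}(\Pi),
\]
which is the desired conclusion.

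The main obstacle I anticipate is the nesting step: justifying that, under each rule, $\Pi_{xy}=\tilde{\Pi}$ when $y\in\tilde{\Pi}\setminus\Pi$ and $\Pi_{xy}\supsetneq\tilde{\Pi}$ when $y\notin\tilde{\Pi}$. This is a consequence of the laminar (tree) structure of the cycle family $C(S)$ produced by Algorithm \ref{alg:Cycle}, which is standard in the Freidlin--Wentzell / Olivieri--Scoppola framework but is not isolated as a lemma in the present paper, so it may need a short justification straight from the recursive construction of $E^{k+1}$ as a refinement of $E^k$. Everything else is essentially a direct re-use of the altitude machinery already developed for the mixing-height theorem.
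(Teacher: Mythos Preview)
Your proposal is correct and follows essentially the same route as the paper: express $H_e(\Pi)$ through the potential and the pairwise communication altitudes, and then push through the pointwise inequality $A_c^{\te{LLL}}(x,y)\le A_c^{\te{ML}}(x,y)$ already obtained in the proof of Theorem~\ref{thm:Hm_comparision}. The only difference is cosmetic: the paper obtains the needed representation of $H_e(\Pi)$ by quoting an external identity (Prop.~4.15 of \cite{Catoni1999}), whereas you derive an equivalent representation directly from the internal identities (\ref{eq:A_cVsHeVsHm}) and (\ref{eq:Ac_(a_f,a hat)}), at the price of invoking the laminar structure of $C(S)$.
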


\begin{proof}
According to Prop. 4.15 in \cite{Catoni1999}, the exit height of a cycle $\Pi$ can be computed as follows
\begin{equation}\label{eq:HevsAc}
H_e(\Pi) = \min_{\te{a} \in \Pi}\max_{\te{a}' \in \mc{A}\backslash \Pi} \phi(\te{a}) - A_c(\te{a},\te{a}') 
\end{equation}	
For any pair of action profiles $\te{a}$ and $\te{a}'$, 
\begin{align*}
A^{\te{ML}}_c(\te{a},\te{a}')&= \max_{\omega \in \Omega^{S}(\te{a},\te{a}')} \min_{\te{a} \in \omega} \phi(\te{a})\\
A_c^{\te{LLL}}(\te{a},\te{a}') &= \max_{\omega \in \Omega^S(\te{a},\te{a}')} \min_{0\leq k \leq |\omega|-1} \phi(\te{a}_{k}) - (\phi(\te{a}_k^*) - \phi(\te{a}_{{k+1}})) ,
\end{align*}
which implies that 
\[
A^{\te{ML}}_c(\te{a},\te{a}') \geq A_c^{\te{LLL}}(\te{a},\te{a}')
\]
between any pair of action profiles. Combining this fact with (\ref{eq:HevsAc})
\begin{align*}
H_e^{\te{LLL}}(\Pi) &= \min_{\te{a} \in \Pi}\max_{\te{a}' \in \mc{A}\backslash \Pi} \phi(\te{a}) - A^{\te{LLL}}_c(\te{a},\te{a}') \\
&\geq \min_{\te{a} \in \Pi}\max_{\te{a}' \in \mc{A}\backslash \Pi} \phi(\te{a}) - A^{\te{ML}}_c(\te{a},\te{a}') \\
&=H_e^{\te{ML}}(\Pi),
\end{align*}
which concludes the proof of the theorem. In fact, the result in \cite{Olivieri1996} is not restricted to a cycle and is applicable to any subset of the state space. For any $D \subset \mc{A}$, the exit height is
\begin{align*}
H_e(D) &=  \min_{\te{a} \in D}\max_{\te{a}' \in \mc{A}\backslash D} \phi(\te{a}) - A^{\te{ML}}_c(\te{a},\te{a}')
\end{align*}
Thus, 
\[
H^{\te{LLL}}_e(D) \geq H^{\te{ML}}_e(D)
\]
for any $D \subset \mc{A}$. 

Theorems \ref{thm:Hm_comparision} and \ref{thm:He} confirm our observations from the sensor coverage game that ML can exit from any set of action profiles faster as compared to LLL. However, the proofs of these theorems provide more insight related to the comparison of the exit and mixing heights of both the dynamics. In fact, comparing (\ref{eq:Ac_ML}) and (\ref{eq:Ac_LLL}) provides a quantitative comparison between the mixing and exit heights of ML and LLL. These equations imply that the critical factor contributing to comparatively high exit and mixing heights of LLL is the maximum difference in utilities between any two actions in the action set of a player given the actions of all the other players. This quantitative explanation is intuitive because, in LLL, the cost of noisy action depends on the payoff at the best response. In contrast, the cost of noisy action is computed by comparing it with the action in the previous time step.

\end{proof}

\begin{figure*}[t!]
	\centering
	\subfigure[$G(E^0,\mc{E}^0)$]
	{
		\includegraphics[trim = 0mm 0mm  0mm  0mm, clip, scale=0.43]{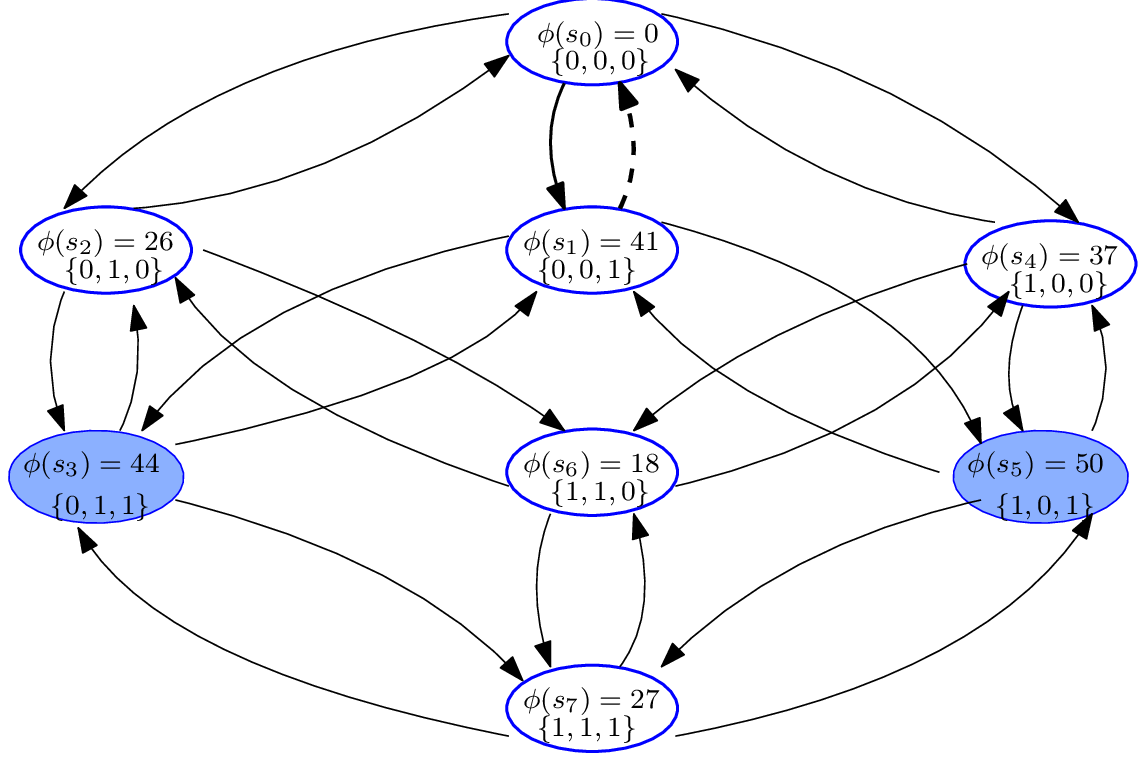}
		\label{subfig:SensorChain}
	}
	\subfigure[$G(E^1,\mc{E}^1)$]
	{
		\includegraphics[trim = 0mm 0mm  0mm  0mm, clip, scale=0.43]{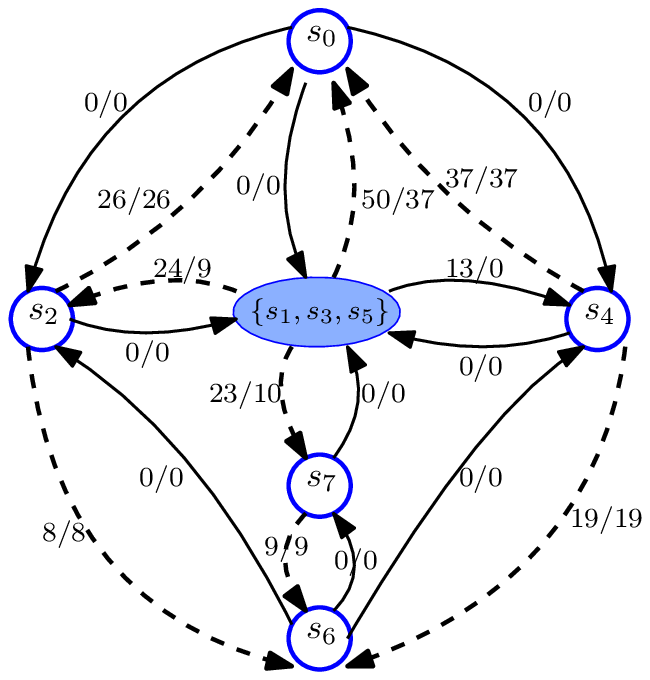}
		\label{subfig:level1_ML}
	}
	\subfigure[$G(E^2,\mc{E}^2)$]
	{
		\includegraphics[trim = 0mm 0mm  0mm  0mm, clip, scale=0.43]{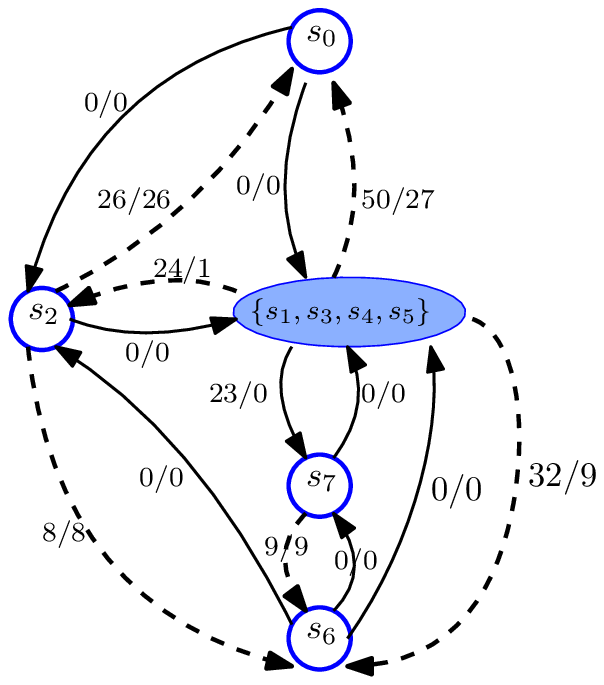}
		\label{subfig:level2_ML}
	}
	\subfigure[$G(E^3,\mc{E}^3)$]
	{
		\includegraphics[trim = 0mm 0mm  0mm  0mm, clip, scale=0.43]{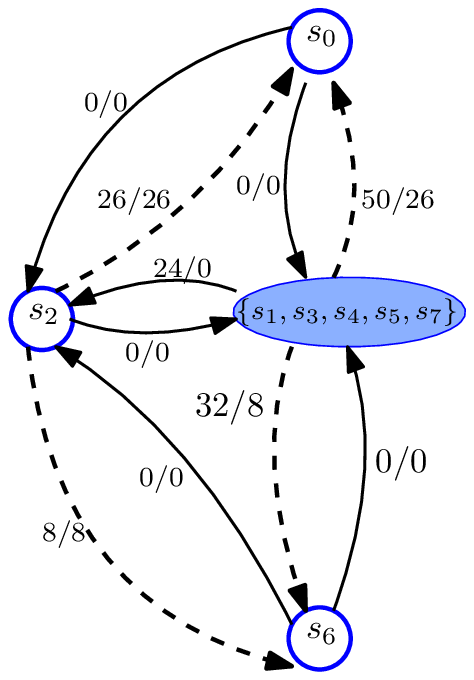}
		\label{subfig:level3_ML}
	}
	\subfigure[$G(E^4,\mc{E}^4)$]
	{
		\includegraphics[trim = 0mm 0mm  0mm  0mm, clip, scale=0.33]{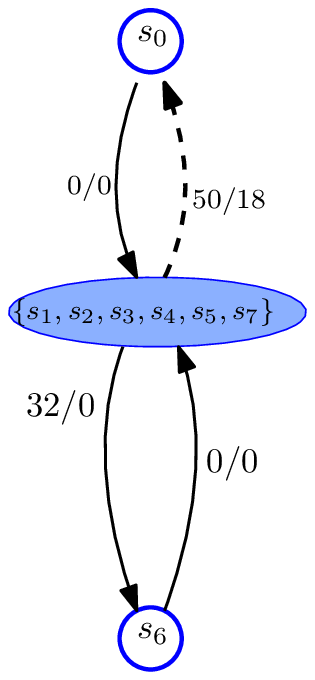}
		\label{subfig:level4_ML}
	}
	\subfigure[$G(E^5,\mc{E}^5)$]
	{
		\includegraphics[trim = 0mm 0mm  0mm  0mm, clip, scale=0.33]{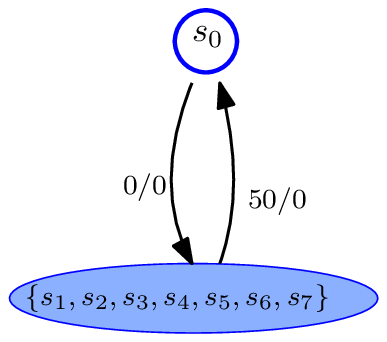}
		\label{subfig:level5_ML}
	}

	\subfigure[$G(E^1,\mc{E}^1)$]
	{
		\includegraphics[trim = 0mm 0mm  0mm  0mm, clip, scale=0.43]{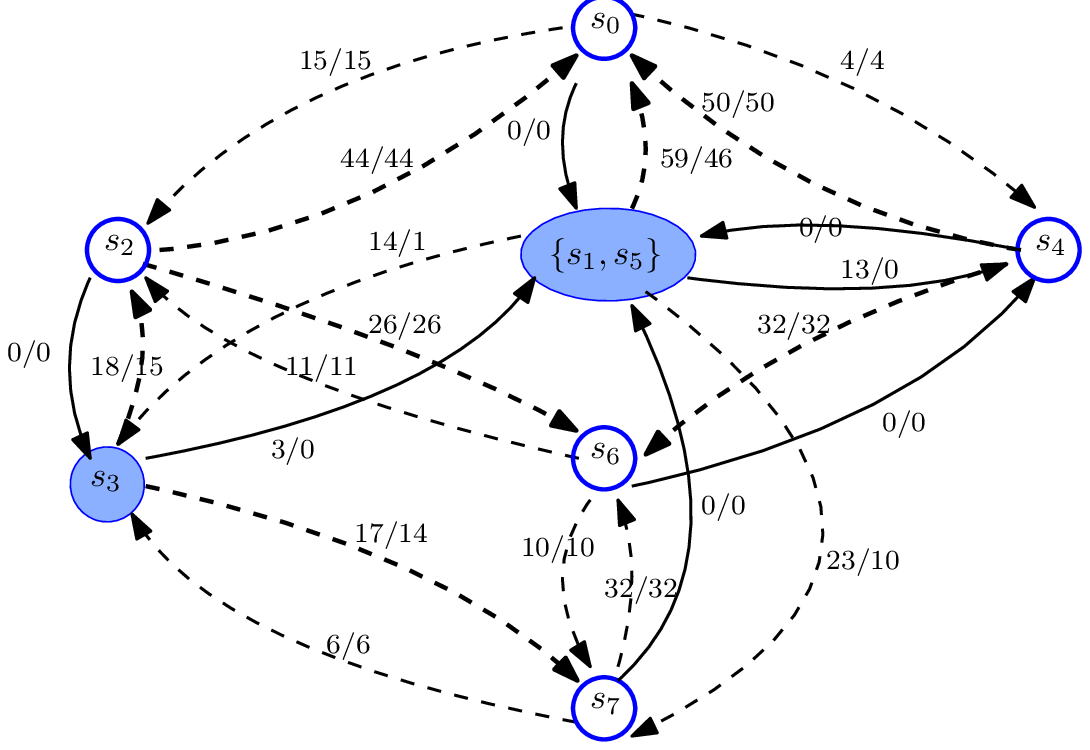}
		\label{subfig:level1_LLL}
	}\hspace{-0.1in}
	\subfigure[$G(E^2,\mc{E}^2)$]
	{
		\includegraphics[trim = 0mm 0mm  0mm  0mm, clip, scale=0.43]{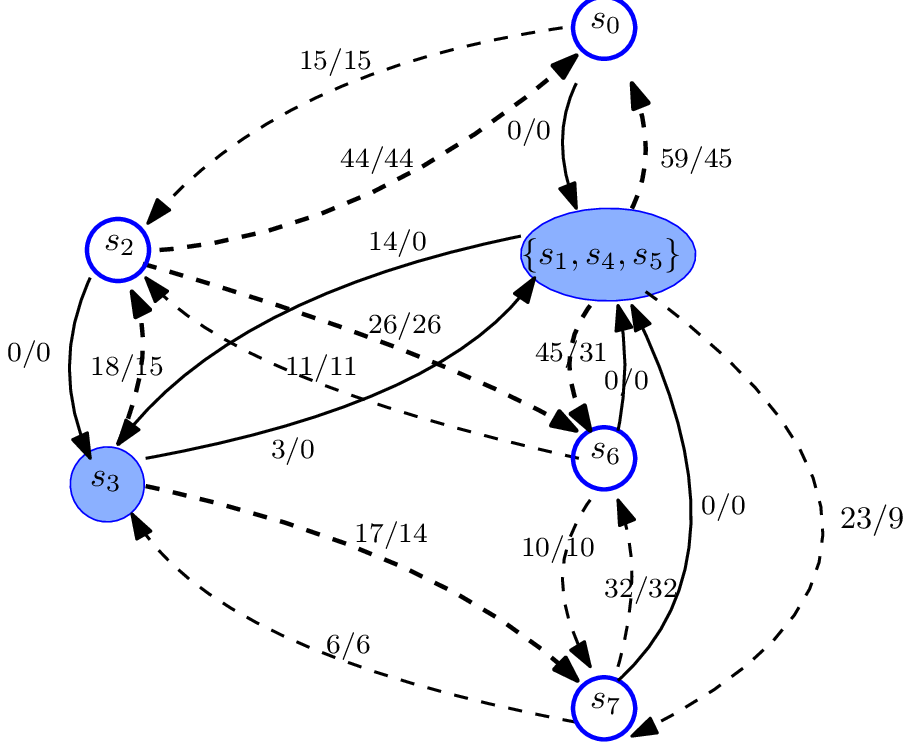}
		\label{subfig:level2_LLL}
	}\hspace{-0.1in}
	%	\hspace{0.6in}
	\subfigure[$G(E^3,\mc{E}^3)$]
	{
		\includegraphics[trim = 0mm 0mm  0mm  0mm, clip, scale=0.43]{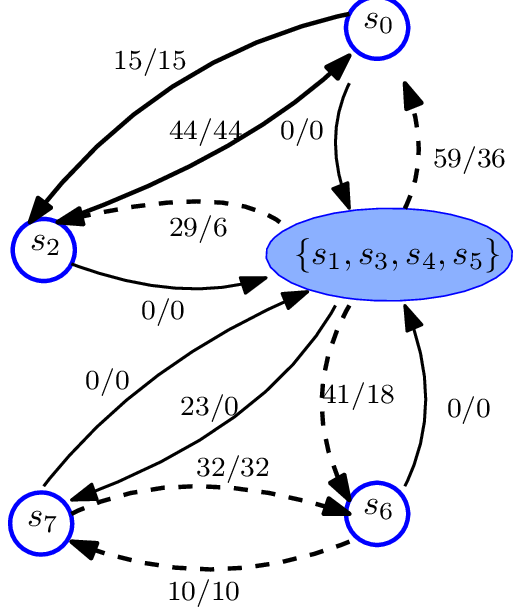}
		\label{subfig:level3_LLL}
	}
	\subfigure[$G(E^4,\mc{E}^4)$]
	{
		\includegraphics[trim = 0mm 0mm  0mm  0mm, clip, scale=0.43]{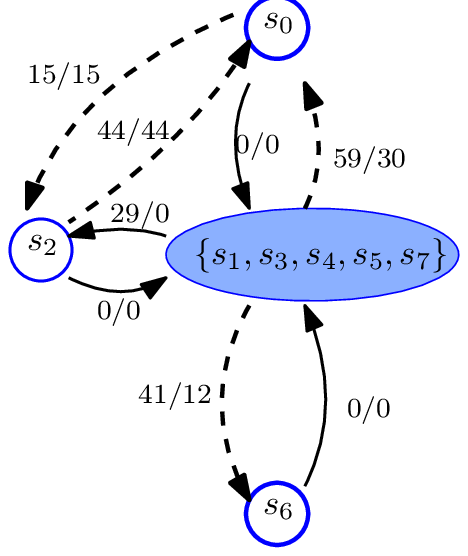}
		\label{subfig:level4_LLL}
	}
	\subfigure[$G(E^5,\mc{E}^5)$]
	{
		\includegraphics[trim = 0mm 0mm  0mm  0mm, clip, scale=0.3]{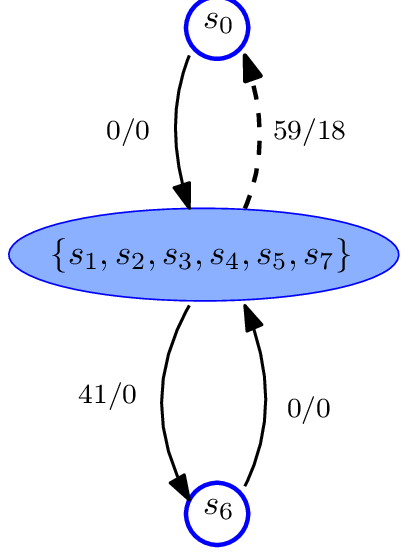}
		\label{subfig:level5_LLL}
	}
	\subfigure[$G(E^6,\mc{E}^6)$]
	{
		\includegraphics[trim = 0mm 0mm  0mm  0mm, clip, scale=0.3]{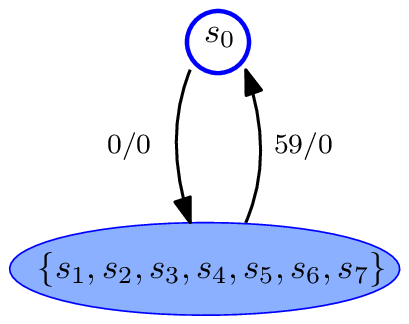}
		\label{subfig:level6_LLL}
	}
	%	\hspace{0.5in}
	\caption{Output of CDA for the sensor coverage game with $N = 3$ and $A_i = \{0,6\}$, under ML and LLL. Fig. \ref{subfig:SensorChain}  shows the zeroth level, which is same for both the dynamics. Figs. \ref{subfig:level1_ML}-Fig. \ref{subfig:level4_ML} present the outputs under ML and Fig. \ref{subfig:level1_LLL}-\ref{subfig:level6_LLL} present the outputs for LLL. 
	\label{fig:CDA_SensorCoverage}	}
\end{figure*}
\section{CDA For Sensor Coverage Game}
We compared the performance of sensor coverage game under LLL and ML based on CDA. We selected a small setup with three sensors in a grid of size $10\times10$ to keep the system tractable. The sensor were located at $\{(9.03,3.98), (8.4,1.4),(1.96,6.35) \}$. Each sensor had a fixed sensing range, which enabled us to represent the action set as on or off, i.e., $A_i = \{0,1\}$ for $i \in \{1,2,3\}$. Thus, the size of the state space was eight from $s_0$ to $s_7$. The state $s_i$ corresponded to the joint action profile that is the binary equivalent of of $i \in \{0,1,\ldots,7\}$. There are two equilibrium configurations $s_3$ and $s_5$, where $\te{s_3} = \{0,1,1\}$ and $\te{s_5} = \{1,0,1\}$. Moreover, $s_5$ is the potential maximizer. The results are presented in Fig. \ref{fig:CDA_SensorCoverage}. 

For three sensors, there were eight possible configurations, The utility of each configuration along with a state transition diagram is presented in \ref{subfig:SensorChain}, which is same for both the dynamics. The outputs of all the iterations of CDA for ML and LLL are presented in the figures ranging from \ref{subfig:level1_ML}-\ref{subfig:level5_ML} and \ref{subfig:level1_LLL}-\ref{subfig:level6_LLL} respectively. A simple comparison of the two sets of figures verified the analysis presented in the previous section. The cycles in the iterations $2-5$ for ML and $3-6$ for LLL were the same, as shown in figures \ref{subfig:level2_ML}-\ref{subfig:level5_ML} and \ref{subfig:level3_LLL}-\ref{subfig:level6_LLL} respectively. For each of these cycles, $H_e^{\te{LLL}} \geq H_e^{\te{ML}}$. 

For the initial condition $s_0 = (0,0,0)$, the difference in the paths to stochastically stable states can also be observed from Figs. \ref{subfig:level1_ML} and \ref{subfig:level1_LLL}-\ref{subfig:level2_LLL}. In the case of ML, the sensor configuration can reach either of the two equilibrium configurations through  $s_1$, resulting in the formation of the cycle $\{s_1,s_3,s_5 \}$. However, under LLL, the sensor configuration will first hit the potential maximizer $s_5$ through $s_1$ as shown by the cycle $\{s_1,s_5 \}$. In the next iteration, $s_4$ is added to the cycle instead of $s_3$, which signifies that the network configuration will cycle between the states $s_5$, $s_1$, and $s_4$ exponentially many times before hitting $s_3$ for the first time. 

\section{Conclusions}
We highlighted a critical issue with stochastic stability as a solution concept, which is its inability to distinguish between learning rules that lead to the same steady-state behavior. To address this problem, we presented a comprehensive framework for analyzing and comparing the transient performance of such learning dynamics. In the proposed framework, the main contribution was to identify cycle decomposition of Markov chains as a set of tools that enabled the comparative analysis of the stochastic learning dynamics. Moreover, we selected the expected hitting time to the set of Nash equilibria and the exit time from a subset of state space as important parameters to compare the performance of stochastic learning rules. 
We selected LLL and ML as representative members of the class of stochastic learning dynamics and showed that both of these dynamics have the same stochastically stable states, but significantly different short and medium run behavior. Based on the proposed comparative analysis framework, we identified critical factors, which effect the expected hitting time to the set of Nash equilibria for LLL and ML. We also proved that the exit time from a subset of state space will always be higher for LLL as compared to ML. 

%Based on the proposed comparative analysis framework, we identified important factors on which the expected hitting time to the set of Nash equilibria depend for LLL and ML. We also proved that the exit time from a subset of state space will always be higher for LLL as compared to ML. We concluded that for the analysis of stochastic learning dynamics, determining stochastically stable states is not sufficient. Characterizing the paths that lead to these stochastically stable states from any given initial condition is equally important. 
%\bibliography{ifacconf}            % bib file to produce the bibliography
\bibliography{IEEEabrv,ifacconf}
\end{document}